\tikzset{
    %Define standard arrow tip
    >=stealth',
    % block style
    punkt/.style={
       rectangle,
       rounded corners,
       draw=black, very thick,
       text width=6.5em,
       minimum height=2em,
       text centered},
    % Define arrow style
    pil/.style={
           ->,
           thick,
           shorten <=2pt,
           shorten >=2pt,}
}
\newtheorem{Theorem}{Theorem}
\newtheorem{Proposition}{Proposition}
\newtheorem{Lemma}{Lemma}
\newtheorem{definition}{Definition}
\newcommand{\adastop}{\textsc{AdaStop}\xspace} 
\renewcommand{\P}{\mathbb{P}}
\newcommand{\E}{\mathbb{E}}
\newcommand{\R}{\mathbb{R}}
\newcommand{\N}{\mathbb{N}}
\renewcommand{\S}{\textbf{S}}
\DeclareMathOperator*{\argmax}{arg\,max}
\newcommand{\1}{\mathbbm{1}}
\newcommand{\id}{\mathrm{id}}
\newcommand{\I}{{\bf I}}
\newcommand{\dec}{\,{\buildrel d \over =}\,}
\newcommand{\mixtureN}{\mathcal{M}_{\frac12}^{\mathcal{N}}}
\newcommand{\mixture}[1]{\mathcal{M}_{#1}^{\mathcal{N}}}
\newcommand{\mixtureNf}{\mathcal{M}_f^{\mathcal{N}}}
\newcommand{\mixturetf}[1]{\mathcal{M}_{#1}^{t}}
\newcommand{\ie}{\textit{i.e.}\xspace}
\newcommand{\eg}{\textit{e.g.}\xspace}
\definecolor{myblue}{rgb}{0,0.447,0.7410}
\newcommand{\maxnum}{0.3}
\newlength{\maxlen}
\newcommand{\databar}[3][blue!20]{%
  \settowidth{\maxlen}{\maxnum}%
  \addtolength{\maxlen}{\dimexpr2\tabcolsep-\arrayrulewidth-3pt}%
  \FPeval\result{#2/\maxnum}%
  \rlap{\color{blue!20}\hspace*{\dimexpr-\tabcolsep+.5\arrayrulewidth}\rule[-.05\ht\strutbox]{\result\maxlen}{.95\ht\strutbox}}%
  #2 #3%
}
\newcommand{\reddatabar}[3][blue!40]{%
  \settowidth{\maxlen}{\maxnum}%
  \addtolength{\maxlen}{\dimexpr2\tabcolsep-\arrayrulewidth-3pt}%
  \FPeval\result{#2/\maxnum}%
  \rlap{\color{blue!40}\hspace*{\dimexpr-\tabcolsep+.5\arrayrulewidth}\rule[-.05\ht\strutbox]{\result\maxlen}{.95\ht\strutbox}}%
  #2 #3%
}
\title{AdaStop: adaptive statistical testing for sound comparisons of Deep RL agents}
\author{\name Timoth\'ee Mathieu \email timothee.mathieu@inria.fr \\
     \addr Inria, Universit\'e de Lille, CNRS, Centrale Lille, UMR 9189 – CRIStAL
     \AND
     \name Riccardo Della Vecchia \email ric.della.vecchia@gmail.com \\
     \addr Inria, Universit\'e de Lille, CNRS, Centrale Lille, UMR 9189 – CRIStAL
     \AND
     \name Alena Shilova \email alena.shilova@inria.fr \\
     \addr Inria, Universit\'e de Lille, CNRS, Centrale Lille, UMR 9189 – CRIStAL
     \AND
     \name  Matheus Medeiros Centa \email matheus.medeiros-centa@inria.fr\\
     \addr Universit\'e de Lille, Inria, CNRS, Centrale Lille, UMR 9189 – CRIStAL
     \AND
     \name Hector Kohler \email hector.kohler@inria.fr \\
     \addr Universit\'e de Lille, Inria, CNRS, Centrale Lille, UMR 9189 – CRIStAL
     \AND
     \name Odalric-Ambrym Maillard \email odalric.maillard@inria.fr \\
     \addr Inria, Universit\'e de Lille, CNRS, Centrale Lille, UMR 9189 – CRIStAL
     \AND
     \name Philippe Preux \email philippe.preux@inria.fr \\
     \addr Universit\'e de Lille, Inria, CNRS, Centrale Lille, UMR 9189 – CRIStAL}
\begin{document}

\maketitle 
\begin{abstract}
Recently, the scientific community has questioned the statistical reproducibility of many empirical results, especially in the field of machine learning.
To contribute to the resolution of this reproducibility crisis, we propose a theoretically sound methodology for comparing the performance of a set of algorithms. We exemplify our methodology in Deep Reinforcement Learning (Deep RL). The performance of one execution of a Deep RL algorithm is a random variable. Therefore, several independent executions are needed to evaluate its performance.
When comparing algorithms with random performance, a major question concerns the number of executions to perform to ensure that the result of the comparison is theoretically sound. Researchers in Deep RL often use less than 5 independent executions 
to compare algorithms: we claim that this is not enough in general. Moreover, when comparing more than 2 algorithms at once,
we have to use a multiple tests procedure to preserve low error guarantees. We introduce \adastop, a new statistical test based on multiple group sequential tests.
When used to compare algorithms, \adastop adapts the number of executions to stop as early as possible while ensuring that enough information has been collected to distinguish algorithms that have different score distributions. We prove theoretically that \adastop has a low probability of making a (family-wise) error. We illustrate the effectiveness of \adastop in various use-cases, including toy examples and Deep RL algorithms on challenging Mujoco environments. 
\adastop is the first statistical test fitted to this sort of comparisons: it is both a significant contribution to statistics, and an important contribution to computational studies performed in reinforcement learning and in other domains.

\end{abstract}

\section{Introduction}

In many fields of computer science, it is customary to perform an experimental investigation to compare the practical performance of two or more algorithms. When the behavior of an algorithm is non-deterministic (for instance because the algorithm is non-deterministic, or because the data it is fed upon is random), the performance of the algorithm is a random variable. Then, the way to do this comparison is not clear. Usually, one executes the algorithm (or rather an implementation of the algorithm: this point will soon be clarified) several times in order to obtain an average performance and its variability. How much ``several'' is depends on the authors, and some contingencies: thanks to the law of large numbers, one may think that the larger the better, the more accurate the estimates of the average and the variability. This may be a satisfactory answer, but when a single execution lasts days or even weeks or months (such as LLM training), performing such a ``large enough'' number of executions is impossible. 

To illustrate this point, let us consider the field of deep reinforcement learning (Deep RL), that is reinforcement learning algorithms that use a neural network to represent what they learn. We surveyed all deep RL papers published in the proceedings of the International Conference on Machine Learning in 2022 (see Fig.\@ \ref{sub:census}). In the vast majority of these papers, only a few executions have been performed: among the 18 papers using the Mujoco tasks, only 3 papers performed more than 10 runs, and 11 papers used only 3 to 5 runs. This begs the question: if we can be confident that 3 executions are not enough, how many executions would be enough to draw statistically significant conclusions? Conversely, are 80 executions not too much? % to draw a statistically significant conclusion?
This is not only a matter of computation time and computational resources occupation. Each execution contributes to pollute our planet and to the climate change. There is no answer to these questions today. Moreover, %we want the conclusion of the comparison to be reproducible: 
if someone redoes the comparison of the same algorithms on the same task, the conclusions should be the same: this concept is known as \textit{statistical reproducibility\/} \citep{agarwal2021deep,Sigaud,goodman2016does} and should be a feature of any good experimental design. 

This paper provides a partial answer to the problem: we introduce \adastop, a new statistical test tailored to the problem of small sample sizes in which we cannot suppose that the data are Gaussian. On the application side, we demonstrate the use of \adastop in practice and show its efficiency in dealing with comparison of Deep RL algorithms. On the theory side, we show the usual non-asymptotic and asymptotic guarantees of such a nonparametric test: we show that for large sample sizes, \adastop controls the false positive rate appropriately, and we prove a non-asymptotic bound on false positive (\ie family-wise error) when comparing the distributions. We keep as future work the question of the non-asymptotic comparison of the means of distributions. %See Section~\ref{sec:limitation} for a discussion of the theoretical limitations of \adastop.

This paper is accompanied by a software program that implements the test which is very easy to use. In short, in this paper, we provide a methodological approach and its actual implementation to compare the performance of algorithms having random performance in a statistically significant way, while trying to minimize the computational effort to reach such a conclusion.
Such experimental investigations arise in various fields of computational AI such as machine learning, and computational optimization. We will use \adastop in the field of reinforcement learning to illustrate this paper, but its application to other fields of computational AI is straightforward, as well as in many other fields of science using computational studies to compare the performance of algorithms.

The organization of the paper is as follows: in an informal way, we detail the requirements that has to fulfil a statistical test to fit our expectations in Section~\ref{sec:setting}. In this section, we also make clear the limitations of \adastop. Section~\ref{sec:ingredients} introduces the main ingredients of our test, \adastop. Section~\ref{sec:adastop} in the formal analysis of the properties of \adastop. Proofs are established in appendices \ref{sec:proof_main_th} to \ref{sec:early_accept_annex}. We illustrate the use of \adastop in Section~\ref{sec:xps} before concluding. Appendix \ref{sec:index_notations} lists all the notations used in this paper. Appendix \ref{sec:recap} provides a minimal exposition of the main concepts of hypothesis testing for readers who are not trained in statistics.

To reproduce the experiments of this paper, the python code is freely available on GitHub at \url{https://github.com/TimotheeMathieu/Adaptive_stopping_MC_RL}. In addition, we provide a library and command-line tool that can be used independently: the \adastop Python package is available at \url{https://github.com/TimotheeMathieu/adastop}.

\section{Statistical reproducibility in RL through the lens of statistical tests}
\label{sec:setting}

In this section we begin by identifying the key concepts necessary to discuss reproducibility in the case of Deep RL. Then we investigate the pros and cons of statistical tests to answer the statistical reproducibility problem in Deep RL, and we compare this methodology to current practices in experimental Deep RL.

\subsection{Definitions}

First, we define some key terms that we use in the rest of the paper.

As raised above, we do not compare algorithms but a certain implementation of an algorithm using a certain set of values for its hyperparameters. In D.\xspace Knuth's spirit \citep{knuth}, we use the term \textbf{algorithm} in its usual meaning in computer science as the description of the basic operations required to transform a certain input into a certain output. By basic operations we mean the use of variables and simple operations (arithmetical, logical, etc), along with assignments to variables, sequences of instructions, tests and loops. Such an algorithm typically has some hyperparameters that control its behavior (a threshold, the dimension of the input domain, how a certain variable decays in time, the architecture of a neural network, etc).
In this regard, we may say that as presented in \citep{schulman2017proximal}, PPO is an algorithm. However, one should be cautious that many aspects of PPO may be defined in various ways, and that the notion of ``the PPO algorithm'' is not as clearly defined as \eg ``the quicksort algorithm''. The same may be said for all Deep RL algorithms for which there exist many variations. %\todoH{should we give examples in brackets of such deep rl algos with many variations (SAC, DQN, ...)? PP: No: we write "all DRL algo", this seems clear to me.}

Let us make clear the distinction between parameters and hyperparameters: parameters are learned from data, while hyperparameters are set a priori. For instance, the weights of a neural network are parameters, while the architecture of the network is a hyperparameter (unless this architecture is also learned during the training of the agent, which is far from being a common practice in Deep RL).

%In experiments, we never compare algorithms: we compare a certain implementation of an algorithm, along with a certain set of values for its hyperparameters, to other (implementation, hyperparameter values) pairs.
We use the term \textbf{agent} to refer to a certain implementation of an algorithm along with the value of its hyperparameters. 
If an agent is run several times, the value of the hyperparameters is the same at the beginning of each run. A special case concerns the seed of the pseudo-random number generator. Deep RL algorithms typically use a different seed to initialize each run. So the seed is not part of the definition of an agent.
In the context of reinforcement learning, we call \textbf{policy} a trained agent. A policy is a decision function that maps observations to actions.

The \textbf{score} of an agent is a numerical value that quantifies its performance. Its definition is really up to the experimenter and the objective of the experiments she designed.
In the case of an RL agent, the score is usually a numerical value computed from policy evaluations. For instance, the score used in a given experiment may be the mean episodic return, or its variance, or many other quantities (running time, memory consumption, number of updates, etc).
In most Deep RL research, agents are compared as follows: \begin{enumerate*}[label=\arabic*)]\item an agent is trained on one random seed, \item after training, the policy is evaluated for a given number of episodes to compute one score, \item this evaluation is repeated $K$ times eventually providing $K$ scores, \item then a statistic is computed over these $K$ scores, \item some conclusion about the performance of the various agents/algorithms is drawn. \end{enumerate*}
We suppose that enough policy evaluations have been performed to account for the possible stochasticity of both the policy and the environment. \adastop is concerned with having a significant comparison of the theoretical mean of the scores while using as few random seeds as possible.

\subsection{Ingredients for an appropriate statistical test}

Our goal is to provide a statistical test to decide whether one agent performs better than some others. However, strictly speaking, we really decide whether two agents perform equally or not by comparing the statistic of their scores measured on a set of (evaluation) runs, up to some confidence $\alpha$. Let us assume that the statistic is the mean. Then, when \adastop concludes that the means of the 2 agents differ, it is common practice to rank the agents based on their comparison. However, strictly speaking, this is an abuse and this conclusion is only valid up to some probability which is usually considered so large that its alternative can be rejected. To stress this point, we will write that an agent is ``most likely performing better'' than an other agent when the test of their mean performances is rejected. We will denote this with the acronym MLB. The reader should note that this subtlety is not only true for \adastop but it is true for all statistical tests based on the equality of the means of two distributions. This point is rarely made clear in publications.

Now, let us list the requirements and difficulties we have to face in the design of such a statistical test.

First, many tests in statistics are defined for Gaussian random variables. However, one quickly figures out that the observed performance of an agent is usually not Gaussian. Fig.\@ \ref{sub:density} illustrates this point: we represent the distribution of the performance of 4 agents implementing 4 different RL algorithms (PPO, SAC, DDPG, TRPO): the performance is usually multi-modal, and it is not even a mixture of a few Gaussian distributions as it may seem. This leads us to a nonparametric test.
Second, we would like to perform the minimal number of runs. This leads us to the use of a sequential adaptive test that tells us if we need to run the agents a few more times, or if we can take a decision in a statistically significant way with the already collected data.
Third, we want to be able to compare more than 2 agents which leads us to multiple testing.
Fourth, we want the conclusions of the test to be statistically reproducible, that is, if someone reproduces the execution of the agents and applies the test in the same way as someone else, the conclusion is the same.
Fifth, we want to keep the number of runs within reasonable limits: for that purpose, we set a maximum number of scores to collect: if no decision can be made using this budget, the test can not decide whether an agent is MLB than the others.
%\todoP{do we write "non parametric" or "nonparametric"? In any case, it should be written in the way in the whole paper.}\todoT{Wikipedia says nonparametric, I changed it. Apparently I checked it is also hyperparameters without hyphen, I changed it also} 
The first 3 requirements call for a nonparametric, sequential, multiple test.
A candidate statistical test that may verify all these properties can be found in group sequential permutation test (see the textbook \citep{jennison1999group} on general group sequential tests). We use these 5 ingredients to construct \adastop.

\begin{figure}[h]
  \begin{center}
    \subfloat{
      \begin{minipage}{0.45\textwidth}
        \includegraphics[width=\textwidth]{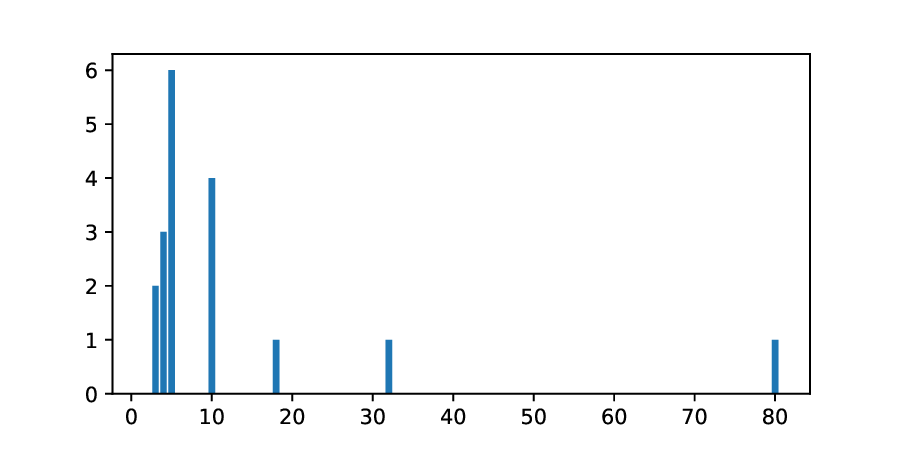}\\
        \centering (a)
      \end{minipage}
      \label{sub:census}
    }
    \subfloat{
      \begin{minipage}{0.45\textwidth}
        \includegraphics[width=\textwidth]{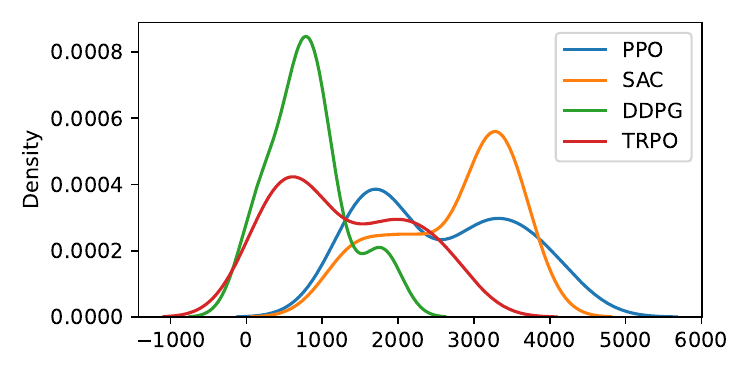}\\
        \centering(b)
      \end{minipage}
      \label{sub:density}
    }
    \caption{Motivations for \adastop. (a): a census of the number of scores ($N_{score}$) used in RL papers using Mujoco environments published in the proceedings of ICML 2022. (b): estimations of the score distributions of 4 well-known Deep RL agents on Hopper, a Mujoco environment, with $N_{score}=30$. We see from the census that most experiments used 5 or less scores ($N_{score} \leq 5$) to draw conclusions. Those conclusions are most likely statistically wrong as they are equivalent to drawing 5 samples from distributions similar to the right plot to draw conclusions about the empirical means.}
    \label{fig:motivations}
  \end{center}
\end{figure}

\adastop, our proposed statistical test, meets all these expectations. 
Before diving into the technical details, let us briefly explain how \adastop is used in practice. 
Fig.~\ref{fig:schematic_algo} illustrates an execution of \adastop on a small example. Let us suppose that we want to compare the performance of 2 agents, a green agent and a blue agent. Fig.~\ref{fig:schematic_algo} illustrates the sequential nature of the test from top to bottom. Initially, each agent is executed $N=5$ times. This yields 5 scores for each agent: these 10 initial scores are shown on the top-leftmost part of Fig.~\ref{fig:schematic_algo} labelled \texttt{Interim 1, N = 5}, along with their barplots. After the test statistics are computed, \adastop decides that this information is not enough to conclude, and more scores are needed. This set of actions (collection of 5 scores for each agent, computation of the statistics, and decision) is known as an \textbf{interim}. As no conclusion can be drawn, a second interim is performed: both agents are run 5 more times, yielding 10 more scores. In the middle of Fig.~\ref{fig:schematic_algo}, these additional scores are combined with the first 5 ones of each agent: these 20 scores are represented, as well as the barplot for each agent. The test statistics are computed using these 20 scores. Again, the difference is not big enough to make a decision given the available information, so we make a third interim: each agent is executed 5 more times. At the bottom of Figure~\ref{fig:schematic_algo}, these 30 scores are represented, as well as the barplot for each agent. The test statistics are computed again on these 15 scores per agent. Now, \adastop decides to reject the null hypothesis, which means that the two set of scores of the two agents are indeed different, and terminates: the compared agents do not perform similarly. As explained in Section~\ref{sec:limitation}, this shows that the green agent performs most likely better than the blue one.

\begin{figure}
    \includegraphics[scale=0.9]{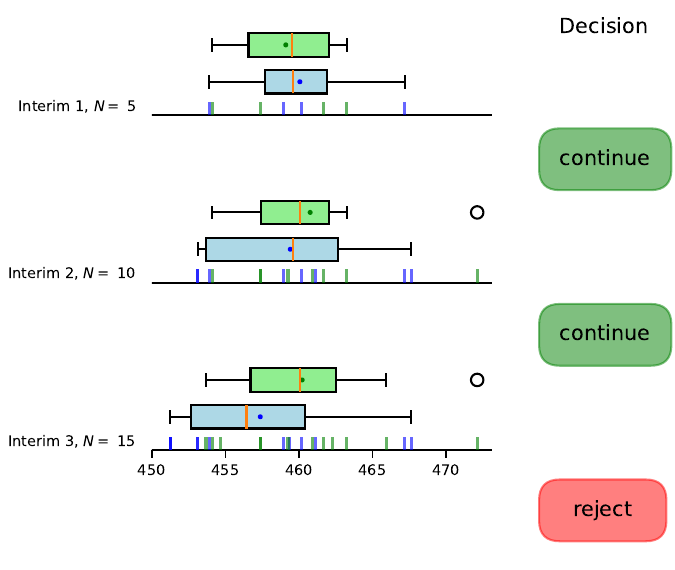}
    \caption{Illustration of the sequential and adaptive aspects of \adastop.}
    \label{fig:schematic_algo}
\end{figure}

\subsection{Current approaches for RL agents comparison}
\label{sec:current_methodo}

In the RL community, different approaches currently exist to compare agents.  In \citep{colas2018many,Sigaud}, the authors show how to use hypothesis testing to test the equality between agents. Compared to our work, their approach is non-adaptive and only compares two agents. In \citep{empiricalRL}, the authors explore a similar workflow with added steps specialized to deep RL (hyperparameter optimization, choice of the testing environment...).
%Most of the other approaches are not based on any theoretically sound workflow. 
Another line of works can be found in \citep{agarwal2021deep} in which the authors compare many agents using confidence intervals. 
%Their approach is non-adaptive and lacks theoretical guarantees as they do not use multiple tests.

In this section, we summarize some of the problems we identify with the current approaches used to compare two or more RL agents in research articles.
%\todoT{I merged the paragraph on literature review of RL evaluation that was in Section 3.1 here.}

\paragraph{How many scores should we use?}

The number of scores used in practice in RL is quite arbitrary and often quite small (see Figure~\ref{sub:census}).
%An arbitrary choice of the number of scores does not allow to make a statistically significant comparison of the agents as we have no guarantee whatsoever that we got the information to distinguish the agents.
An intuition comes from the law of large numbers. As the performance of an agent is represented by the true mean of its scores, the more scores, the more precise the estimation of its performance.
However, this does not tell us anything about what is a sufficient number of scores to draw a statistically significant conclusion.

\textbf{Theoretically sound comparison of multiple agents.}

According to statistical theory, in order to compare more than 2 agents, we need more samples from each agent than when we compare only two agents. The basic idea is that there is a higher chance to make an error when we perform multiple comparisons than when we compare only two agents, hence we need more data to have a lower probability of error at each comparison.
This informal argument is formalized in the theory of multiple testing. However, the theory of multiple testing has almost never been used to compare RL agents (with the notable exception of \citet[Section 4.5]{empiricalRL}). In this paper, we remedy this with \adastop giving a theoretically sound workflow to compare 2 or more agents.

\textbf{Theoretically sound study when comparing agents on a set of tasks.}

Atari environments \citep{bellemare2013arcade} are famous benchmarks in Deep RL. Due to time constraints, when using these environments, it is customary to
use very few scores for one given game (typically 3 scores) and compare the agents on many different games. The comparisons are then aggregated: agent $A_1$ perform better than agent $A_2$ on more than $20$ games out of the $26$ games consider in the experiments. In terms of rigorous statistics, this kind of aggregation is complex to analyse properly because reward distributions are not the same in all games. % and hence we compare non identically distributed samples. 
$A_2$ may be better than $A_1$ only on some easy games: does this mean that $A_1$ is better than $A_2$? Up to our knowledge, there is not any proper statistical guarantee for this kind of comparison. 

Advances have been made in \citep{agarwal2021deep} to interpret and visualize the results of RL agents in Atari environments. In particular the authors advise plotting confidence intervals and using the interquartile mean instead of the mean as aggregation functions. Correctly aggregating the comparisons on several games in Atari is still an \textit{open problem}, and it is \textit{beyond the scope of this article}. In this article, we suppose that we compare the agents on a single task, and we leave the comparison on a set of different tasks for future work. A discussion on these methods and the challenges of aggregating the results from several Atari environments can be found in the Appendix \ref{sec:discussion_atari}.

\subsection{Some methodologies for comparison of RL agents}

Figure 1 of \citep{empiricalRL} defines a workflow for the meaningful comparison of two RL agents given an environment and a performance measure. In addition to the usual considerations regarding which statistics to compare \citep{colas2018many,agarwal2021deep}, \citep{empiricalRL} also include hyperparameter choice in the workflow. For that, they recommend to use 3 scores per algorithm per set of hyperparameters to identify a good choice of hyperparameters for a given algorithm. When this is done, a fixed number of scores (set using expert knowledge on the environment) are computed for each of these fully-specified agents. These scores are then used for statistical comparison. 
\adastop fits at the end of this workflow. 

In Fig~\ref{fig:methodo}, we showcase the use of \adastop to compare SAC \citep{haarnoja2018soft} to other Deep RL algorithms on HalfCheetah and Hopper Mujoco tasks. One can imagine a scenario in which SAC inventors follow \citep{empiricalRL} methodology. After finding the best hyperparameters for TRPO, PPO and DDPG \citep{schulman2015trust,schulman2017proximal,lillicrap2015continuous} agents are compared with \adastop using a minimal number of scores to get significant statistics. \citep{empiricalRL} recommends 15 scores per agent per environment for a maze environment but this number of scores should vary in other environments, and it is not clear how many scores should be used for HalfCheetah and Hopper. This is why we need to use \adastop. 
 
Using \adastop, we collect scores in an adaptive manner. This allows us to state %that with measured confidence
that we collected enough scores to conclude that the SAC agent is MLB than other agents on HalfCheetah, and MLB than DDPG and TRPO agents on Hopper. As in all statistical tests, the conclusion holds up to a certain confidence level. A more in-depth study of the agents performance on Mujoco environments is given in Section~\ref{sec:deep_rl_xp}.

\begin{figure}
  \begin{center}
    \subfloat[Comparisons on HalfCheetah.]{
      \includegraphics[width=0.45\textwidth]{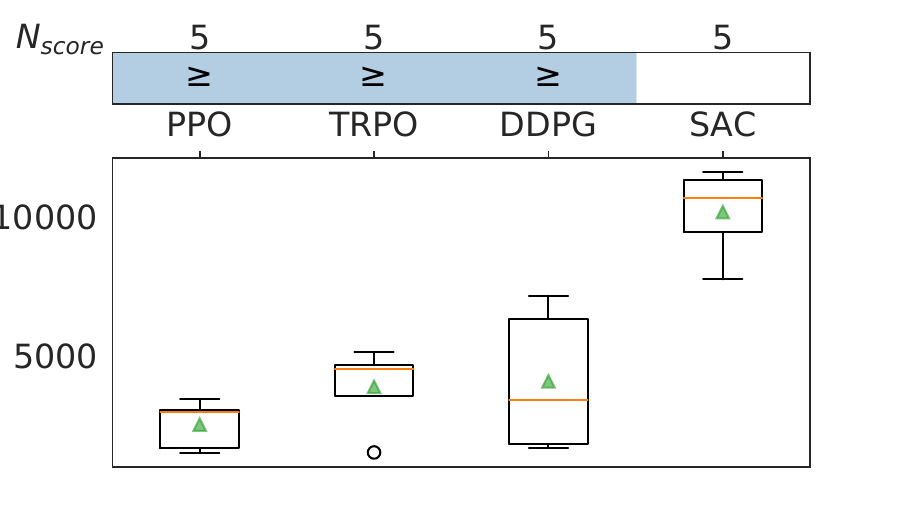}
    }
    \subfloat[Comparisons on Hopper.]{
    \includegraphics[width=0.45\textwidth]{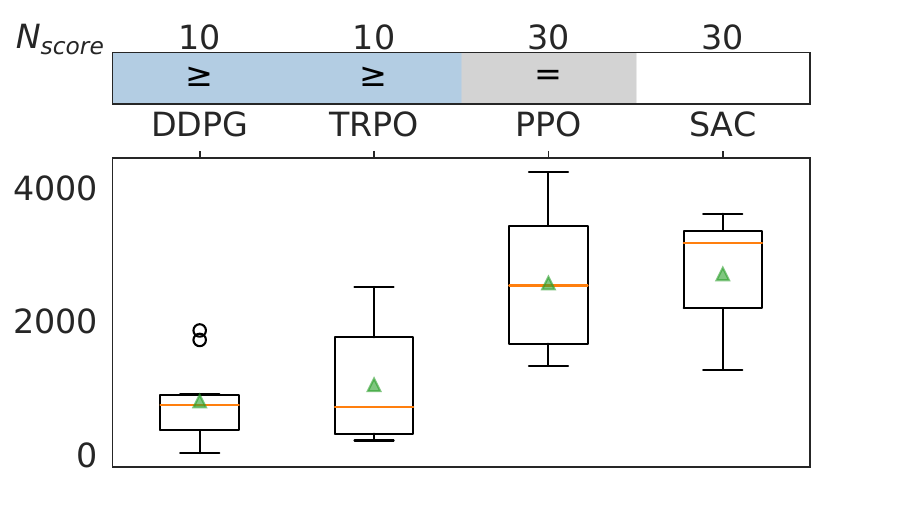}
    }
    \caption[Caption]{Example of the use of \adastop to benchmark SAC in practice. We set the maximum number of runs $B$ of each agent to 30. The upper row tables represent the conclusions when comparing SAC to the agents in the column using $N_{score}$ scores. For example, on HalfCheetah, \adastop concludes that SAC is MLB
    than PPO using 5 scores for SAC and 5 scores for PPO. On Hopper, 10 scores are enough to conclude that SAC is MLB than DDPG and TRPO, and \adastop concludes that SAC and PPO perform equally using the maximum budget of $B$ scores for both SAC and PPO.}
    \label{fig:methodo}
  \end{center}
\end{figure}
%\footnotetext{The word "better" here is to be understood as a heuristic, we prove only that their distributions are different, see Section~\ref{sec:limitation}.}
%\todoT{I inserted a new section here to answer the  remarks from meta-review. I am not sure it is the place it should be.}

\subsection{Limitations of \adastop and comparison to other statistical tests}\label{sec:limitation}

Before presenting the theory behind \adastop, we want to make it clear that doing a statistically sound comparison with very few samples is a hard problem, and \adastop is one way to partially answer this problem. In particular, \begin{enumerate*}[label=(\roman*)]\item we do not claim that \adastop is optimal, and \item more work is still necessary to prove theoretical guarantees for \adastop to support the empirical performances exhibited in the experiments (see in particular Section~\ref{sec:non-adaptive} and discussions on the power of the test).\end{enumerate*}

The main theoretical limitation of \adastop comes from the fact that \adastop is distribution-free: it does not make any assumption on the distribution of the scores except a finite variance to get the asymptotic guarantees. As a consequence, the only non-asymptotic guarantees that we have are based on the comparison of score distributions and not on the comparison of their means. Formally, this means that we have control over the error (in Theorem~\ref{th:multi_FWE}, the error is shown to be equal to the parameter of the test $\alpha$) when doing the tests:
$$H_j: P_{l_1} = P_{l_2}, \hspace{1em} \text{against} \hspace{1em} H_j': P_{l_1} \neq P_{l_2}$$
where $P_{l_i}$ is the distribution of scores for agent $l_i$, and $(l_1,l_2)$ are the indices of the agents we want to compare. This comes in contrast with traditional Gaussian tests like the t-test, which have strong theoretical guarantees under strong Gaussian assumptions (see Table~\ref{tbl:comparison_tests} for a comparison of the guarantees of some classical tests, in particular for nonparametric tests, a finite variance is not sufficient to get strong non-asymptotic results).

Comparing distributions is not what we really want to do, but it is how to proceed when performing distribution-free tests and if the distributions concentrate sufficiently well (\ie, have a finite variance) and the maximum sample size $N \times K$ is not too low. The Gaussian approximation of the sample mean justifies the way we use \adastop in this article. In particular, if we suppose a finite variance (a hypothesis that corresponds to a very weak concentration hypothesis necessary to get a central limit theorem), we give asymptotic guarantees on the comparison of the means (see Theorem~\ref{th:asymptotic}):
\begin{equation}\label{eq:test_mean_comp}
    H_j: \E_{P_{l_1}}[X] = \E_{P_{l_2}}[X], \hspace{1em} \text{against} \hspace{1em} H_j': \E_{P_{l_1}}[X] \neq \E_{P_{l_2}}[X].
\end{equation}
Asymptotic guarantees remain unsatisfactory because we are targeting small sample sizes. Non-asymptotic guarantees are harder to obtain and would typically require concentration assumptions such as sub-Gaussian or bounded distributions; we leave such theoretical concerns for future works. In practice, this means that \adastop approximately compares the means for a large enough sample size, and the practitioner should be aware that \textbf{if the size of an interim $N$ is too small, \adastop could give wrong results}. For this reason, we advise the practitioner to set $N$ and $K$ such that $N \times K \ge  30$. For a more precise study of the sample size effect, see Section~\ref{sec:non-adaptive}. 
Please note that \adastop being an adaptive test, this does not mean that at least 30 scores have to be collected for each agent. This subtle point is illustrated in the experimental section of this paper.

Finally, note that the test expressed by Equation~\eqref{eq:test_mean_comp} is bidirectional, which means that in theory, concluding on this test does not tell us which of $P_{l_1}$ or $P_{l_2}$ has the largest mean score. Rather, it tells us that they are different.
In practice, we use the sign of the difference in the empirical means to conclude which mean is larger. Using a bidirectional test and concluding on the direction afterward is often done by statisticians (see, in particular, the discussion in \citep{leventhal1996directional} and the references therein), but this remains conceptually unsatisfactory. We keep the question of directional error for later work, which is why we do not write that an agent performs ``better'' than another in this article but rather ``most likely better (MLB)''. See Section~\ref{sec:directional} for a discussion on directional error.

\begin{table}[]
\caption{Comparison of the properties of several statistical methods\label{tbl:comparison_tests}. The Wilcoxon, Permutation and Bootstrap tests are nonparametric tests used in~\citep{Sigaud} for comparison in Deep RL and the Gaussian Group-Sequential test~\citep{jennison1999group} is a (parametric) sequential test often used in the context of clinical trials.}
\footnotesize
\begin{tabular}{|@{}c|c|c|c|c|c|c@{}|}
\toprule
\multirow{3}{*}{Test} & \multirow{3}{*}{Hypothesis}  & \multirow{3}{*}{Sequential}& \multicolumn{4}{c|}{Theoretical Guarantees}                                                                                  \\ \cline{4-7} 
\
&  &  &\multicolumn{2}{c|}{test $P=Q$ vs $P \neq Q$}                           & \multicolumn{2}{c|}{test $\mu_P = \mu_Q$ vs $ \mu_P \neq \mu_Q$}                               \\
\cline{4-7} 
     &         &  & Asymptotic & Non-Asymptotic & Asymptotic & Non-Asymptotic \\
\hline
 t-test    & Gaussian       &  \ding{55}  & $\checkmark$  &             $\checkmark$                           &         $\checkmark$            &    $\checkmark$                                    \\
\hline
Wilcoxon   & None  &  \ding{55}  & $\checkmark$  &             \checkmark                         &        \ding{55}          &    \ding{55}           \\
\hline
Permutation   & Finite variance    &  \ding{55}  & $\checkmark$  &             \checkmark                          &         $\checkmark$            &   \ding{55}          \\
\hline
Bootstrap   & Finite variance    &  \ding{55}  & $\checkmark$  &          \ding{55}                         &         $\checkmark$            &   \ding{55}          \\
\hline
Gaussian GST  &  Gaussian   & $\checkmark$      &         $\checkmark$           &        $\checkmark$                               &      $\checkmark$               &                 $\checkmark$                       \\ 
\hline
 \adastop  &  Finite variance   & $\checkmark$      &         $\checkmark$           &        \checkmark                               &      $\checkmark$               &                 \ding{55}                      \\ \bottomrule
\end{tabular}
\end{table}

\section{Hypothesis testing to compare agent performance}
\label{sec:ingredients}

In this section, we provide the background material on the statistical tests that we use to construct \adastop.
%\todoR{actually we present \adastop in sec 4, I would not write like this maybe. It is more background material on some known statistical test that are specialized to our notation. This is useful because in sec 4 we combine this pieces in Adastop and we already have the notation. Plus the reader can see where the contributions are coming from in a clear way.}\todoT{I like "We describe the building blocks (or ingredient) of \adastop" would that work? It is already the introduction of 3.2 but this could be moved here.}
First we review the statistical evaluation methods found in the literature, and then we describe our methodology, and we express some results on \adastop.

\subsection{Literature overview of evaluation methods}

In this section, we present some relevant references connected to statistical evaluation methodology.

{\bf Nonparametric (and non-sequential) hypothesis testing.} 
One of our main challenges is to deal with the nonparametric nature of the data at hand. In the literature there has been a lot of works on nonparametric testing (see \citep{lehmann2005testing} for a comprehensive overview). Traditionally, the focus has been on asymptotic results due to challenges in deriving optimality for a nonparametric model. For most nonparametric tests, only rather weak (exact) theoretical results can be given, mostly on type I error \citep{romano1989bootstrap,shapiro1979asymptotic}. 
Recent work on sequential nonparametric tests \citep{shin2021nonparametric,10.1214/20-AOS1991} show strong non-asymptotic results using concentration inequalities. However, these results often involve non-optimal constants, failing to explain non-asymptotic efficiency. In contrast, we use permutation tests due to their empirical \citep{ludbrook1998permutation} and theoretical \citep{10.1214/21-AOS2103} efficiency for small sample sizes. 
% However, due to the inherent difficulties of these approaches, we do not provide type II error guarantees.

{\bf Sequential tests.}
A closely related method for adaptive hypothesis testing consists in sequential tests. Two commonly used sequential tests are the Sequential Probability Ratio test~\citep{10.1214/aoms/1177731118} and the Generalized Likelihood Ratio test~\citep{kaufmann2021mixture}.
%\todoP{the nest sentence suffers from big troubles: the word "score" is defined as a measure of performance, hence the big trouble.}\todoT{I changed it}
In sequential testing, the scores are compared one after the other in a completely online manner. This is not adapted to our situation because in RL practice, one often trains several agents in parallel, obtaining a batch of scores at once. This motivates the use of group sequential tests~\citep{jennison1999group}. 

{\bf Parametric group sequential tests}. In traditional hypothesis testing, data is analysed as a whole once it has been collected. Conversely, in a Group Sequential Test (GST) data are collected sequentially, the tests being performed at \emph{interim} time points. At each interim, a new set of $N$ scores is collected (see \citep{jennison1999group,gordon1983discrete,pocock1977group,pampallona1994group} for references on GST).
GST are often used in clinical trials to minimize the amount of data needed to conclude and this makes them well adapted for our purpose.  The decision to continue sampling or 
conclude (with a controlled probability of error) depends on pre-defined stopping criteria used to define the tests. 
GST often makes strong assumptions on the data. In particular, it is often assumed that the data are i.i.d.\@ and drawn from a Gaussian distribution \citep{jennison1999group}. This contrasts with our approach which is nonparametric. 

{\bf Bandits (Best arm identification or ranking).} 
Our objective is close to the one of bandit algorithms~\citep{lattimore2020bandit}: we \textit{minimize the stopping time} (as in the fixed-confidence setting) of the test, and we have a \textit{fixed maximum budget} (as in a fixed-budget setting). In our test, we allow a type I error with probability $\alpha \in (0,1)$, which is similar to the fixed confidence setting while still having a fixed budget. In practice, our approach is more sample efficient than fixed budget bandit algorithms because these algorithms will always exhaust their budget and thereby achieve lower error rates. In contrast, \adastop allows larger error rate in exchange to higher sample efficiency.

\subsection{Background material on the building-blocks of \adastop}
\label{sec:background}

This section describes the basic building blocks used to construct \adastop: group sequential testing, permutation tests, and step-down method for multiple hypothesis testing. We explain these items separately, and then we combine them to create \adastop in Section~\ref{sec:adastop}. We also provide a small recap on hypotheses testing in the Appendix \ref{sec:recap} for readers unfamiliar with these notions.

In order to perform the minimal number of runs, we propose to use group sequential testing (GST) with a nonparametric approach using permutation tests. Our approach is similar to \citep{mehta1994exact} but for multiple hypothesis testing. Compared to GST, we keep the i.i.d.\xspace assumption, but we do not assume that the data are drawn from a specific family of parametric distribution.  In \citep{mehta1994exact}, the authors use rank tests with group-sequential testing. In contrast with our work, \citep{mehta1994exact} does not provide theoretical guarantees and considers only the case of 2 agents.

\subsubsection{Permutation tests}
\label{sec:permutation-tests}

Permutation tests are \textbf{nonparametric tests} that are exact for testing the equality of distributions.
This means that the type I error of the test (\ie the probability to make a mistake and reject the equality of two agents when their scores are statistically the same) is controlled by the parameter of the test $\alpha$, and that this is true for any fixed sample size $N$.
Permutation tests are also well-known to work well in practice on \textbf{very small sample sizes} and are used extensively in biology. They were originally introduced by \citep{pitman1937significance} and \citep{fisher1936design} (see \citep[Chapter 17]{lehmann2005testing} for a textbook introduction). More recently \citep{10.1214/13-AOS1090} have studied asymptotic properties of this class of tests, while \citep{Romano_2003} have focused on stepdown methods for multiple hypothesis testing.

Let us recall the basic formulation of a two-sample permutation test. Let $X_1,\dots,X_N$ be i.i.d.\@\xspace sampled from a law $P$ and $Y_1,\dots,Y_N$ i.i.d.\@\xspace sampled from a law $Q$. We want to test $P=Q$ against $P \neq Q$. Let $Z_i = X_i$ if $i\le N$ and $Z_i = Y_{i-N}$ if $i>N$, $Z_1,\dots,Z_{2N}$. 
The test proceeds as follows: we reject $P=Q$ if $T(\id) = \left| \frac{1}{N}\sum_{i=1}^N(Z_i-Z_{N+i})\right|$ is larger than a proportion $(1-\alpha)$ of the values $T(\sigma) =  \left| \frac{1}{N}\sum_{i=1}^N(Z_{\sigma(i)}-Z_{\sigma(N+i)})\right|$ where $\sigma$ enumerates all possible permutations of $\{1,\dots,2N\}$ and $\id$ is the identity permutation ($\id(i)=i$ for all $i$).  
Formally, we define the $(1-\alpha)$-quantile as
$$B_N = \inf\left\{b >0 : \frac{1}{N!}\sum_{\sigma \in \S_N}  \1\{T(\sigma) \ge  b\} \le  \alpha \right\}$$
and we reject $P=Q$ when $T(\id) \geq B_N$.
The idea is that if $P \neq Q$, then $T(\id)$ should be large, and due to compensations, most $T(\sigma)$ should be smaller than $T(\id)$. Conversely, if $P=Q$, the difference of mean $T(\sigma)$ will be closer to zero. It is then sufficient to compute $T(\id)$ and $B_N$ in order to compute the decision of the test.
%For large values of $N$, Monte-Carlo approximation using only a few random permutations can be computationally advantageous, and we exploit this in Algorithm~\ref{algo:main}.
Please note that this is a fairly usual simplification in the nonparametric tests literature to test the equality in distribution instead of the equality of the mean, because equality between distributions is easier to deal with\footnote{
In a statistical test, we want to have control on the error when $H_0$ is true.
$H_0$ can be seen as asserting that the distribution that generated the data is
in a certain set of distributions $\mathcal{P}_0$. The larger $\mathcal{P}_0$,
the more complicated it is to make a statistical test without using strong
assumption on the data. When comparing distributions, $\mathcal{P}_0 =\{ P,Q \,\text{ probability distributions } \mid P = Q\}$, while when comparing the means,
$\mathcal{P}_0$ is much larger because $\mathcal{P}_0 =\{ P,Q \,\text{ probability distributions } \mid \E_P[X] = \E_Q[X]\}$.
}. It can be shown that permutation tests are nonetheless a good approximation of doing a comparison on the means (see Appendix~\ref{sec:asymptotics}).

\subsubsection{GST comparison of two agents}
\label{sec:twoagent}

In this section, we compare two agents $A_1$ and $A_2$ through a group sequential test that can be seen as a particular case of \adastop for two agents (see Section~\ref{sec:adastop}). In this simplified case, the testing procedure is presented in Algorithm~\ref{alg:adastop_2}. We leave the case with more than 2 agents to compare and the full version of \adastop, including multiple hypothesis testing, for Section~\ref{sec:adastop}. 
Algorithm~\ref{alg:adastop_2} uses a permutation test where, at each interim, the boundary deciding the rejection is derived from the permutation distribution of the difference of empirical means observed across all previously obtained data. In what follows, $N$ to denote the number of scores collected at interim $k$.

We denote by $\S_{2N}$ the set of permutations of $\{1,\dots,2N\}$, $\sigma \in \S_{2N}$ one permutation and $\sigma(n)$ the $n$-th element of $\sigma$ for $n \in \{1, \dots 2N\}$. In the GST setting, we perform a permutation test at each interim $k$, and $\sigma_k \in \S_{2N}$ denotes the permutation at interim $k$. For $\sigma_1,\sigma_2,\ldots,\sigma_k \in \S_{2N}$, we denote $\sigma_{1:k}=\sigma_1 \cdot \sigma_2 \cdot \ldots \cdot \sigma_k$ the concatenation\footnote{Here, the word ``concatenation'' means that we apply $\sigma_1$ on $\{1,\dots,N\}$, then $\sigma_2$ on $\{N+1,\dots, 2N\}$ and so on, so that  $\sigma_{1:k}(Nj + k)= \sigma_j(k)$.} of the permutation $\sigma_1$ done in interim $1$ with $\sigma_2$ done on interim $2$,\dots, and $\sigma_k$ on interim $k$. Then, $e_{\sigma_i(n),i}$ denotes the score corresponding to the $n$-th element of the permuted sample at interim $i$, permuted by $\sigma_i$ (in the notations of Section~\ref{sec:permutation-tests} this corresponds to $Z_{\sigma(i)}$ but now, we specify the interim number in the notation).
We denote:
\begin{equation}\label{eq:test_seq}
    T_{N,k}(\sigma_{1:k})= \left|\sum_{i=1}^k\left(\sum_{n=1}^{N} e_{\sigma_i(n),i}-\sum_{n=N+1}^{2N} e_{\sigma_i(n),i} \right)\right|,
\end{equation}
and the decision boundary:
\begin{equation}\label{eq:bound_2}
B_{N,k}\in \inf\left\{b >0 : \frac{1}{((2N)!)^k}\sum_{\sigma_{1:k} \in \widehat{\mathcal{S}}_{N,k}} \1\{T_{N,k}(\sigma_{1:k}) \ge  b\} \le  \frac{\alpha}{K}\right\}\,,
\end{equation}
where $K$ is the total number of interims and $\widehat{\mathcal{S}}_{N,k}$ is the set of permutations $\sigma_{1:k}\in(\S_{2N})^k$ such that the test was not rejected before interim $k$, \eg
\begin{equation}\label{eq:set_not_rejected}
    \widehat{\mathcal{S}}_{N,k}= \{ \sigma_{1:k} \in (\S_{2N})^k:\quad \forall m < k,\quad  T_{N,m}(\sigma_{1:m}) \le   B_{N,m}\,\}.
\end{equation}
One can show that this is equivalent to choosing $B_{N,k}$ such that $$\P_{\sigma_{1:k}}\left( T_{N,m}(\sigma_{1:m}) > B_{N,k} , \forall m < k,\quad  T_{N,m}(\sigma_{1:m}) \le   B_{N,m}\right)\le \frac{\alpha}{K}.$$
We will see in Theorem~\ref{th:multi_FWE} that this allows us to have control on the type I error of the test.
%\todoR{ changed notation  $S_k$ like in section 4.1}
% The algorithm is summarized in Algorithm~\ref{alg:adastop_2}.

\begin{algorithm}[h]
  \SetAlgoLined
  \SetKwInput{KwParameter}{Parameters}
  \KwParameter{Agents $A_1,A_2$, environment $\mathcal{E}$, number of interims $K \in \N^*$, size of an interim $N$, error parameter $\alpha\in (0,1)$.}
  %Define $2NK$ different seeds $s_{1,1},\dots,s_{1,KN},s_{2,1},\dots,s_{2,KN}$.\\
  \For{$k=1,\dots,K$}{
    \For{$l=1,2$}{
      Train agent $A_l$ on environment $\mathcal{E}$ N times, with the seeds $s_{l,(k-1)N +1},\dots,s_{l,kN}$. This generates $N$ policies $\pi_{l,(k-1)N +1},\dots,\pi_{l,kN}$\\
      Collect scores $e_{1,k}(A_l),\dots,e_{N,k}(A_l)$ by running each policy $\pi_{l,(k-1)N +1},\dots,\pi_{l,kN}$.\\
    }
    Compute the boundary $B_{N,k}$ using Equation~\eqref{eq:bound_2}.\\
    \uIf{$T_{N,k}(\id) \ge B_{N,k}$}{
      return reject
    }
  }
  return accept
  \caption{Adaptive stopping to compare two RL agents. This algorithm is expressed in the context of the comparison of RL agents. It is easy to adapt to other types of computational agents.}\label{alg:adastop_2}
\end{algorithm}

\subsubsection{Multiple hypothesis testing}
\label{sec:multiple-tests}

%\todoA{Explain what $H_j$ corresponds to in our case}
In order to compare more than two agents we need to perform multiple comparisons. This calls for a \textbf{multiple simultaneous statistical tests} \cite[Chapter 9]{lehmann2005testing}. 
%We refer the reader to Appendix~\ref{sec:recap} for a brief introduction to some fundamental notions of statistical hypothesis testing, where we recall the definitions of \emph{type I} and \emph{type II} error in a single statistical test, and then we discuss the \emph{family-wise error rate} (FWE, see \citep{tukey1953problem}) for multiple tests.
The idea is that the probability to mistakenly reject a null hypothesis (type I error) generally applies only to each test considered individually. On the other hand, in order to conclude on all the tests at once, it is desirable to have an error controlled over the whole family of simultaneous tests. For this purpose, we use the family-wise error rate \citep{tukey1953problem} which is defined as the probability of making at least one type I error.

\begin{definition}[Family-Wise Error~\citep{tukey1953problem}]\label{def:fwe}
Given a set of hypothesis $H_j$ for $j\in\{1,\dots,J\}$, its alternative $H_j'$, and $\textbf{I}\subset \{1,\dots,J\}$ the set of the true hypotheses among them, then the family-wise error (FWE) is defined by:
$$\mathrm{FWE} = \P_{H_j, j \in \textbf{I}}\left(\exists j \in \textbf{I}:\quad  \text{reject }H_j \right),$$
where $\P_{H_j, j \in \textbf{I}}$ denotes the probability distribution for which all hypotheses $j \in \textbf{I}$ hold true\footnote{See also Appendix~\ref{sec:recap} for further explanations on this concept.}.
We say that an algorithm has a weak FWE control at a joint level $\alpha\in(0,1)$ if the FWE is smaller than $\alpha$ when all the hypotheses are true, that is $\textbf{I}=\{1,\dots,J\}$ but not necessarily otherwise. We say it has strong FWE control if FWE is smaller than $\alpha$ for any non-empty set of true hypotheses $\textbf{I}\neq \emptyset$. % (while the notation $\textbf{I}^c$ refers to false hypotheses).
\end{definition}

If we want to test the equality of $L$ distributions $P_1, P_2, \dots, P_L$, the straightforward way is to do a pairwise comparison. This creates $J=\frac{L(L-1)}{2}$ hypotheses.
We let $\textbf{C} = \{\textbf{c}_1,\dots,\textbf{c}_J\}$ be the set of all possible comparisons between the distributions, where $\textbf{c}_j = (l_1,l_2) \in \textbf{C} $ denotes a comparison between distributions $P_{l_1}$ and $P_{l_2}$ for $l_1, l_2 \in \{1, 2, \dots, L\}$. Therefore, for $\textbf{c}_j=(l_1,l_2)$, $H_{\textbf{c}_j}$ denotes a hypothesis stating that $P_{l_1}$ and $P_{l_2}$ are equal and its alternative $H'_{\textbf{c}_j}$ is that $P_{l_1}$ and $P_{l_2}$ are different. On the other hand, if one wants to compare an agent to agents whose ranking does not interest us, it may be sufficient to only compare this agent to all the others yielding the $L-1$ comparisons $c_1 = (1,l_2)$ for $2\le l_2\le L$. 

There are several procedures that can be used to control the FWE. The most famous one is Bonferroni's procedure~\citep{bonferroni1936teoria} recalled in the Appendix (Section~\ref{sec:recap}). 
As Bonferroni's procedure can be very conservative in general, we prefer a \emph{step-down} method~\citep{Romano_2003} that performs better in practice because it implicitly estimates the dependence structure of the test statistics. The step-down method is detailed in the next section. 

\subsubsection{Step-down procedure}
\label{sec:step-down} 

\citep{Romano_2003} proposed the step-down procedure to solve a multiple hypothesis testing problem. It is defined as follows (for a \emph{non group-sequential test}): for a permutation $\sigma\in \S_{2N}$ and for $(e_{n}(j))_{1\le n\le 2N}$ the random variables being compared in hypothesis $j$, the permuted test statistic of hypothesis $j$ is defined by:
\begin{equation}\label{eq:test_mht_p}
    T_{N}^{(j)}(\sigma)=\left|\sum_{n=1}^{N} e_{\sigma(n)}(j)-\sum_{n=N+1}^{2N} e_{\sigma(n)}(j)\right|\,.
\end{equation}
This test statistic is extended to any subset of hypothesis $\textbf{C} \subset \{1,\dots,J\}$ with the following formula:
\begin{equation}\label{eq:def_t_1}
\overline{T}_{N}^{(\textbf{C})}(\sigma)= \max\limits_{j \in \textbf{C}}T_N^{(j)}(\sigma).
\end{equation}
%\todoP{for not too smart guys like me, it is hard to understand this formula: what are $\sigma$, ...?}
To specify the test,  one compares 
$\overline{T}_{N}^{(\textbf{C})}(\text{id})$ to some threshold value $B_{N}^{(\textbf{C})}$,
that is: we accept all hypotheses in $\textbf{C}$ such that
$\overline{T}_{N}^{(\textbf{C})}(\text{id})\leq B_{N}^{(\textbf{C})}$. 
The threshold of the test $B_{N}^{(\textbf{C})}$ is defined as the quantile of order $1-\alpha$ of the permutation law of $\overline{T}_{N}^{(\textbf{C})}(\sigma)$: 
\begin{equation}
  B_{N}^{(\textbf{C})} = \inf\left\{b>0: \left(\frac{1}{(2N)!} \sum_{\sigma \in \S_{2N}}\!\!\1\{\overline{T}_{N}^{(\textbf{C})}(\sigma) \ge b\}\right) \le   \alpha  \right\}.
  \label{eq:threshold_multi} 
\end{equation}
In other words, $B_{N}^{(\textbf{C})}$ is the real number such that an $\alpha$ proportion of the values of $\overline{T}_{N}^{(\textbf{C})}(\sigma)$ exceeds it, when $\sigma$ enumerates all the permutations of $\{1,\dots,2N\}$. The permutation test is summarized in Algorithm~\ref{alg:multiple_test_1}.

\begin{algorithm}[h!]
  \SetAlgoLined
  \SetKwInput{KwParameter}{Parameters}
  \SetKwInput{KwInput}{Input}
  \KwParameter{$\alpha \in (0,1)$}
  \KwInput{$e_n(j)$ for $1\le n\le 2N$ and $j\in\textbf{C}_0= \{\textbf{c}_1,\dots,\textbf{c}_J\}$.}
%   Set $\textbf{C}_0=\{1,\dots,J\}$.\\
  Initialize $\textbf{C} \leftarrow \textbf{C}_0$.\\
  %\todoP{To me, this algorithm has other parameters: at least the $T_n$ if they are not computed below.}\\
  \While{$\textbf{C} \neq \emptyset$}{
    Compute $T_N^{(\textbf{C})}(\sigma)$ for every $j$ and every $\sigma$ using Equation~\eqref{eq:def_t_1}.\\ 
    Compute $B_{n}^{(\textbf{C})}$ using Equation~\eqref{eq:threshold_multi}.\\
    %\todoP{why isn't there a stmt to compute the $T_{N}^{...}$?}\\
    \uIf{$\overline{T}_{N}^{(\textbf{C})}(\id) \le B_{N}^{(\textbf{C})}$}{
      Accept all the hypotheses $H_j, j\in \textbf{C}$ and exit the loop.}
    \Else{
      Reject $H_{\textbf{c}_{j_{\max}}}$ where $\textbf{c}_{j_{\max}} =  \arg\max\limits_{\textbf{c}_j \in \textbf{C}}T_N^{(j)}(\id)$.\\
      Define $\textbf{C}=\textbf{C} \setminus \{\textbf{c}_{j_{\max}}\}$
    }
  }
  \caption{Multiple testing by step-down permutation test.}
  \label{alg:multiple_test_1}
\end{algorithm}

Algorithm~\ref{alg:multiple_test_1} is initialized with $\textbf{C}=\textbf{C}_0$ containing all the comparisons we want to test. Then, it enters a loop where the test decides to reject or not the most extreme hypothesis in $\textbf{C}$, \ie $H_{j_{\max}}$ where $j_{\max} =  \arg\max_{j \in \textbf{C}}T_N^{(j)}(\id)$. Here, $\textbf{C}$ is the current set of neither yet rejected nor accepted hypotheses. If the test statistic ${T}_{N}^{(j)}(\id)$ for the most extreme hypothesis in $\textbf{C}$ (\ie $\overline{T}_{N}^{\textbf{C}}(\id)$) does not exceed the given threshold $B_{n}^{(\textbf{C})}$, then all hypotheses in $\textbf{C}$ are accepted, and the loop is exited. Otherwise, the most extreme hypothesis is discarded from the set $\textbf{C}$ and another iteration is performed until either all remaining hypothesis are accepted, or the set of remaining hypotheses is empty. 

The maximum of the statistic in Equation~\eqref{eq:def_t_1} for $\sigma=\id$ allows us to test intersections of hypotheses, while the threshold $B_{n}^{(\textbf{C})}$, under the null hypotheses of equality of distribution, allows for strong control on the FWE (\ie $\mathrm{FWE}\le \alpha$). This last result follows from \cite[Corollary 3]{Romano_2003} and is a particular case of Theorem~\ref{th:multi_FWE} for \adastop. In fact, this procedure is not specific to permutation tests, and it can be used for other tests provided some properties hold on the thresholds $B_{n}^{(\textbf{C})}$.

%%%%%%%%%%%%%%%%%%%%%%%%%%%%%%%%%%%%%%%%%%%%%%%%%%%%%%%%%%

%  \section{Adaptive stopping for nonparametric group-sequential multiple hypothesis testing}\label{sec:adastop}
 \section{\adastop: adaptive stopping for nonparametric group-sequential multiple tests}\label{sec:adastop}

In this section, we present the construction and the theoretical properties of \adastop (see Algorithm~\ref{algo:main}). \adastop compares the scores of multiple agents in an \textit{adaptive} rather than a fixed way.
 We consider $L\ge 2$ agents $A_1,\dots,A_L$. 
 %The $k^{th}$ step of the algorithm is called the $k^{th}$ \emph{interim}, where $k \in \{1,\dots,K\}$. 
 As above, we let $\textbf{C}_0 = \{\textbf{c}_1,\dots,\textbf{c}_J\}\subseteq \{1,\dots,L\}^2$ be the set of all the comparisons to make between the agents.
 %Therefore, $c_j = (c_{j,1},c_{j,2}) \in \textbf{C}_0 $ denotes a comparison between a couple of agent's indices  $c_{j,1},c_{j,2} \in \{1,\dots,L\}$.
%To simplify the notation, we indicate a comparison between two agents directly with the index $j\in\{1,\ldots,J\}$ instead of writing explicitly $c_j$ (we reserve the use of index $j$ just for this purpose). We adopt the same shorthand notation also to re-define equivalently the set of all comparisons $\textbf{C}_{0} = \{1,\dots,J\}$ using just the indices of the comparisons.
$\textbf{I}$ denotes the set of indices of the true hypotheses among $\{1,\dots,J\}$. 

Algorithm~\ref{algo:main} specifies \adastop. \adastop relies on the test statistic $\overline{T}_{N,k}^{(\textbf{C})}(\sigma_{1:k})$ defined in  equation~\eqref{eq:test_adastop}, and the boundary thresholds $B_{N,k}^{\textbf{C}}$ defined in Equation~\eqref{eq:boundary_1}.
We discuss a few implementation details in the rest of this section.

\textit{Definition of the test statistic.}
Let $e_{1,i}(j), \dots, e_{2N, i}(j)$ denote the $2N$ scores used at interim $i$. They are obtained through the execution of the policies resulting from the training of the two agents $A_{l_1}$ and $A_{l_2}$ for which $\textbf{c}_j = (l_1, l_2)$. We also consider permutations of these scores to define the test statistics $T^{(j)}_{N,k}$ below. For a comparison $j$, we consider a permutation $\sigma_i \in \S_{2N}$ at interim $i$ that reshuffles the order of the scores mapping $n\in\{1,\ldots,2N\}$ on $\sigma_i(n)\in\{1,\ldots,2N\}$. Note that if $n\in\{1,\ldots,N\}$ and $\sigma_i(n)\in\{N+1,\ldots,2N\}$, we are exchanging a score of the first agent with a score of the second agent in the comparison. It can also happen that instead, we permute two scores of the same agent. The difference between the two cases is important for the definition of the following permutation statistic: 
\begin{align}\label{eq:def_t}
T_{N,k}^{(j)}(\sigma_{1:k})&= \left|\sum_{i=1}^k\left(\sum_{n=1}^{N} e_{\sigma_i(n),i}(j)-\sum_{n=N+1}^{2N} e_{\sigma_i(n),i}(j)\right)\right|  .
%&=  \left|\sum_{i=1}^kt_{N,i}^{(j)}(\sigma_i)\right|.
\end{align}
In other words,
$T_{N,k}^{(j)}(\sigma_{1:k})$ is the absolute value of the sum of differences of all scores until interim $k$ after consecutive permutations of the concatenation of the two agents scores by $\sigma_1,\dots,\sigma_k\in \S_{2N}$. Let $\textbf{C} \subseteq \textbf{C}_0$ be a subset of the set of considered hypothesis and let us denote:
\begin{equation}\label{eq:test_adastop}
    \overline{T}_{N,k}^{(\textbf{C})}(\sigma_{1:k})= \max\limits_{j \in \textbf{C}}T_{N,k}^{(j)}(\sigma_{1:k})\,,
\end{equation}
$\overline{T}_{N,k}^{(\textbf{C})}(\sigma_{1:k})$ is the test statistic used in \adastop. The construction of the test is inspired by the permutation tests of Equation~\eqref{eq:test_mht_p} used to test intersection of hypotheses as in Equation~\eqref{eq:def_t_1} in the step-down method presented in Section~\ref{sec:background}. Still, it also incorporates group sequential tests from Section~\ref{sec:twoagent} and its test statistic introduced in Equation~\eqref{eq:test_seq}.
%\todoR{I rewrote the previous 2 sentences to give the intuition, is it better?}
%\todoOAM{Earlier we introduced notation without the overline. Either we put it everywhere, or we remove it everwhere.}
%\todoR{I think I put it everywhere, is needed because we have underline for early accept}
% If $\textbf{I} \neq \emptyset$, we denote 
% $$\mathrm{FWE}(\alpha) = \P\left(\exists j \in \textbf{I}:\quad  \text{reject }H_j \right).$$
%\todoOAM{FWE was introduced in def\ref{def:fwe}, with no $\alpha$.
%Why is there an $\alpha$ here? it makes no sense.}
%\todoR{yes deleted the repeated definition, maybe we want anyway the true hypothesis in another place.}

% \subsection{\adastop: adaptive stopping algorithm using step-down method and group sequential permutation tests.}\label{sec:main_algo}
% \subsection{Adaptive stopping for nonparametric group-sequential multiple hypothesis testing}\label{sec:main_algo}
% \todoR{should we change name of this subsection which is too similar to previous sec and not very informative? maybe "AdaStop description" or "AdaStop description: an algorithm to test SOTA algorithms" or something else}
% \todoR{should we change name of this subsection which is too similar to previous sec and not very informative? maybe "AdaStop description" or "AdaStop description: an algorithm to test SOTA algorithms" or something else}
% \todoR{what about renaming like this? also see the previous section, previous version is commented}

\textit{Choice of permutations.}
Instead of using all the permutations as we did for now, one may use a random subset among all permutations $\mathcal{S}_{N,k} \subset \{\sigma_{1:k},\quad \forall i \le k, \sigma_i \in \S_{2N}\}$ to speed-up computations. The theoretical guarantees persist as long as the choice of the permutations is made independent on the data. 
%Switching to random permutations allows us to use the algorithm for a larger array of values of $K$ and $N$. 
Using a small number of permutations will decrease the total power of the test, but with a sufficiently large number of random permutations (typically for the values of $N$ and $K$ we consider, $10^4$ permutations are sufficient) the loss in power is acceptable. We need to include the identity in addition to the random permutations to keep the type I error guarantee \citep{phipson2010permutation}.

$T_{N,k}$ does not change when the permutation does not exchange any index from $\{1,\dots,N\}$ with an index of $\{N+1,\dots,2N\}$. In essence, choosing a permutation is equivalent to choosing the signs in $\sum_{n=1}^{N} e_{\sigma_i(n),i}(j)-\sum_{n=N+1}^{2N} e_{\sigma_i(n),i}(j)$. And because we take the absolute value, there are $\frac{1}{2} {2N \choose N}$ possible unique values to $T_{N,1}$ (up to ties when the score distributions are discrete).
Then, by enumerating all the permutations for the other interims, there are $\frac{1}{2} {2N \choose N}^k$ possible unique values to $T_{N,k}$.

In practice, we use a parameter $B\in \mathbb{N}$ and the number of permutations used at interim $k$ will be $|\mathcal{S}_{N,k}|=m_k=\min\left(B, \frac{1}{2} {2N \choose N}^k\right)$, \ie whenever possible, we use all the permutations and if this is too much, we use permutations drawn at random.

\textit{Definition of the boundaries.}
With these permutations, we define the boundary thresholds $B_{N,k}^{(\textbf{C})}$ by:
\begin{equation}\label{eq:boundary_1}
B_{N,k}^{(\textbf{C})} = \inf \left\{b >0 :\frac{1}{m_k}\sum_{\sigma \in \widehat{\mathcal{S}}_{N,k}} \1\{\overline{T}_{N,k}^{(\textbf{C})}(\sigma_{1:k}) \ge  b\} \le  q_k\right\}.
\end{equation}
where $\sum_{j=1}^kq_j \le \frac{k\alpha}{K}$ and where $\widehat{\mathcal{S}}_{N,k}$ is the subset of $\mathcal{S}_{N,k}$ such that the statistic associated to the permutation would not have been rejected before.
Formally, $\widehat{\mathcal{S}}_{N,k}$ is the following set of permutations:
\begin{equation}\label{eq:def_s_hat}
\widehat{\mathcal{S}}_{N,k} = \bigg\{\sigma_{1:k}: \forall m < k,\quad \overline{T}_{N,m}^{(\textbf{C})}(\sigma_{1:m}) \le   B_{N,m}^{(\textbf{C})}\bigg\}.
\end{equation}
Note that $q_1$ is not equal to $\alpha/K$. Due to discreetness (we use an empirical quantile over a finite number of values), $q_1$ is chosen equal to $\lfloor \frac{\alpha}{2K}{2N \choose N}\rfloor / (\frac{1}{2}{2N \choose N})$. Similarly, $q_2$ is chosen to be as large as possible while having $q_1+q_2 \le{} 2\alpha/K$. And so on and so forth: in practice this means taking the $q_i$'s as follows:
\begin{equation}
  \label{eq:def_q_i}
  q_i = \nicefrac{{}\left\lfloor \frac{\alpha i}{2K}{2N \choose N}\right\rfloor}{\left(\frac{1}{2}{2N \choose N}\right)}-q_{i-1} \mbox{ for } 2\le i\le K, \mbox{ and } q_0=0.
\end{equation}

One may notice that due to the discrete nature of the permutation distribution, the confidence of the test performed by \adastop is not $\alpha$ but the sum of the $q_i$'s which may be smaller than the prescribed $\alpha$.

%for $q_i$ for $3\le i\le k$ such that $\sum_{i=1}^K q_i\le \alpha$ with $\sum_{i=1}^K q_i$ as close as possible to $\alpha$. 
%: for example when $N$ is too small, then in the first interim a proportion $1-\alpha/K$ is too large compared to the number of distinct values of $\overline{T}_{N,m}^{(\textbf{C})}(\sigma_{1:k})$ available. Hence, the boundary $B_{N,1}^{(\textbf{C})}$ becomes infinity and $q_1$ would be $0$. The level $\alpha/K$ that was available to be spent in interim $1$ becomes available for interim $2$. \todoOAM{I would add one sentence explaining why setting the $q_j$ depending on the result of previous phases, hence randomly is ok in the proofs!}

\begin{algorithm}[h!]
  \SetAlgoLined
  \DontPrintSemicolon
  \SetKwInput{KwParameter}{Parameters}
  \KwParameter{Agents $A_1,A_2,\dots, A_L$, environment $\mathcal{E}$, comparison pairs $(c_j)_{j \le J}$ where $c_i$ is a couple of agents that we want to compare. Integers $K,N \in \N^*$, test parameter $\alpha$.}
  %Define $LNK$ different seeds $(s_{l,n,k})_{l\le L,n\le N, k\le K}$.\\
  Set $\textbf{C}=\{1, \dots,J\}$ the set of indices for the comparisons to perform.\\
  \For{$k=1,\dots,K$}{
    \For{$l=1...L$}{
      Train $N$ times agent $A_l$ on environment $\mathcal{E}$. This generates $N$ policies.\\ % with the seeds $s_{l,1,k},\dots,s_{l,N,k}$.\\
      Collect the $N$ scores of the $N$ policies.\\
    }
    \While{True}{
      Compute the boundaries $B_{N,k}^{(\textbf{C})}$ using Equation~\eqref{eq:boundary_1}.\\
      \uIf{$T_{N,k}^{(\textbf{C})}(\mathrm{id}) > B_{N,k}^{(\textbf{C})}$\label{lst:line:beg_decision}
      }{
        Reject $H_{j_{\max}}$ where $j_{\max} =  \arg\max\left(\overline{T}_{N,k}^{(j)}(\id),\quad j \in \textbf{C}\right)$.\\
        $\textbf{C}=\textbf{C} \setminus \{j_{\max }\}$
      }
      \Else{
        Exit the while loop. \label{lst:line:end_decision}
      }
    }
    \lIf{$\textbf{C}=\emptyset$}{
      Exit the loop and return the decision of the test.
    }
    \lIf{ $k=K$}{
      Exit the loop and accept all hypotheses remaining in $\textbf{C}$.
    }
  }
  \caption{\adastop (main algorithm) in the context of the comparison of 2 RL agents. Its application to other types of computational agents is straightforward.\label{algo:main}}
\end{algorithm}

\subsection{Theoretical guarantees of \adastop}\label{sec:theory}
%\subsection{Control on the Family-wise Error}
%\subsection{Basic results following from the use of permutation tests}
One of the basic properties of two-sample permutation tests is that when the null hypothesis is true, then all permutations are as likely to give a certain value and permuting the sample should not change the test statistic too much. Following our choice of $\overline{B}_{N,k}$ as a quantile of the law given the data, the algorithm has a probability to wrongly reject the hypothesis bounded by $\alpha$. This informal statement is made precise in the following theorem.

\begin{Theorem}[Controlled family-wise error]\label{th:multi_FWE}
Let $\alpha \in  (0,1)$, and consider the multiple testing problem $H_j: P_{l_1} = P_{l_2}$ against $H_{j}': P_{l_1} \neq P_{l_2}$ for all the couples $\textbf{c}_j=(l_1,l_2)\in \{\textbf{c}_1,\dots,\textbf{c}_J\}$.
Then, the test resulting from Algorithm~\ref{algo:main} has a strong control on the family-wise error for the multiple test, \ie if we suppose that all the hypotheses $H_i, i\in \bf I$ are true and the others are false, then 
$$ \P\left(\exists j \in  {\bf I}:\quad  \mathrm{reject  } \, \,H_j\right)\le\alpha.$$
\end{Theorem}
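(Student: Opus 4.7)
My plan is to combine a step-down reduction with the classical randomization argument for permutation tests, extended to the group-sequential setting. The whole proof hangs on two pieces: reducing the multiple testing FWE to a single ``oracle'' group-sequential test restricted to the index set $\mathbf{I}$, and then controlling that oracle test by the permutation construction of the boundaries.

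First I would do the step-down reduction. Let $k^\star$ be the first interim at which a true hypothesis $H_{j^\star}$, $j^\star \in \mathbf{I}$, is rejected. Since the step-down procedure removes rejected hypotheses one by one and $j^\star$ is the first true one eliminated, the active set $\mathbf{C}$ at that moment still contains all of $\mathbf{I}$. Because $j^\star$ is the argmax of the individual statistics over $\mathbf{C}$, we have $T^{(j^\star)}_{N,k^\star}(\mathrm{id}) = \overline{T}^{(\mathbf{C})}_{N,k^\star}(\mathrm{id})$, and since $j^\star \in \mathbf{I} \subseteq \mathbf{C}$ the trivial inequality $\overline{T}^{(\mathbf{I})} \le \overline{T}^{(\mathbf{C})}$ is tight at $\mathrm{id}$: $\overline{T}^{(\mathbf{I})}_{N,k^\star}(\mathrm{id}) = \overline{T}^{(\mathbf{C})}_{N,k^\star}(\mathrm{id}) > B^{(\mathbf{C})}_{N,k^\star}$. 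A separate monotonicity argument that $B^{(\mathbf{I})}_{N,k} \le B^{(\mathbf{C})}_{N,k}$, which follows because a larger index set produces stochastically larger permutation maxima and hence larger data-driven quantiles (checked by induction so that the inductive definition of $\widehat{S}_k$ does not spoil the inequality), then yields $\overline{T}^{(\mathbf{I})}_{N,k^\star}(\mathrm{id}) > B^{(\mathbf{I})}_{N,k^\star}$. Hence it suffices to prove $\P_{H_\mathbf{I}}\!\big(\exists k \le K : \overline{T}^{(\mathbf{I})}_{N,k}(\mathrm{id}) > B^{(\mathbf{I})}_{N,k}\big) \le \alpha$.

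Second, I would invoke the randomization principle. Under $H_\mathbf{I}$ the equalities $P_a = P_b$ for $(a,b) \in \mathbf{I}$ partition the agents involved in $\mathbf{I}$ into equivalence classes of equidistributed laws. Conditionally on the unordered pool of evaluations within each class, the observed data is uniformly distributed on the orbit generated by within-class permutations, so applied jointly to the vector $(T^{(j)}_{N,k}(\cdot))_{j \in \mathbf{I},\, k \le K}$, the identity path $\mathrm{id}$ is exchangeable with any fixed $\sigma_{1:K}$ drawn uniformly from the relevant invariance subgroup. Combined with the construction in Equation~\eqref{eq:boundary_1}, where by design the fraction of paths in $\widehat{S}_k$ that first exceed the boundary at interim $k$ is at most $q_k$, a summation over interims gives a total crossing fraction of at most $\sum_{k=1}^K q_k \le \alpha$. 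Taking expectation over the conditioning sufficient statistic converts this empirical quantile bound into the required probability bound; the independence of $\mathcal{S}_k$ from the data is what legitimizes replacing the full permutation measure by the empirical one on $\mathcal{S}_k$ in this argument.

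The hard part will be pinning down the joint exchangeability when comparisons in $\mathbf{I}$ share agents. A single $\sigma_i \in \mathfrak{S}_{2N}$ is applied simultaneously to each comparison's concatenation, so overlapping agent groups are reshuffled in a correlated way, and one must identify the correct invariance group, namely the product over interims and over equivalence classes of the in-class symmetric group acting on all $kN$ accumulated evaluations, and check that the joint vector of test statistics is measurable with respect to the corresponding conditioning $\sigma$-algebra. A secondary technical care is the discretization: because $q_k$ is chosen as the largest admissible value under the constraint $\sum_{j=1}^k q_j \le k\alpha/K$, the cumulative budget $\sum q_k \le \alpha$ is preserved, but this must be invoked explicitly when chaining the interim-level bounds.
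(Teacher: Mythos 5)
Your plan reproduces the paper's proof structure essentially step for step: the first-rejection/argmax argument with the boundary monotonicity $B^{(\mathbf{I})}_{N,k}\le B^{(\mathbf{C})}_{N,k}$ is exactly the paper's Lemma~\ref{lem:strong_FWE} reducing strong to weak control, and the interim-by-interim decomposition combined with joint permutation invariance over equivalence classes of equidistributed agents (the paper's connected components of the comparison graph) plus the defining property of the boundaries is exactly the paper's weak-control argument via Lemma~\ref{lem:invariance}. The two technical points you flag as hard --- joint exchangeability across comparisons sharing agents, and the discretized budget $\sum_k q_k\le\alpha$ --- are precisely where the paper's proof does its work, so the approach is correct and the same.
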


The proof of Theorem~\ref{th:multi_FWE} is given in the Appendix (Section~\ref{sec:proof_main_th}).

\textit{Hypotheses of the test}: in Theorem~\ref{th:multi_FWE} we show that Algorithm~\ref{algo:main} tests the equality of the distributions $P_{l_1} = P_{l_2}$ versus $P_{l_1}\neq P_{l_2}$, whereas in practice we would prefer to compare the means of the distribution $\mu_{l_1} = \mu_{l_2}$ versus $\mu_{l_1} \neq \mu_{l_2}$, see Section~\ref{sec:limitation} for a discussion on the differences between the two. 
%Doing the test on distributions is something common in nonparametric tests \citep{lehmann2005testing}. It is justified because without strong concentration assumptions on the distributions, we lack control on the mean of the distributions. 
We show in the Appendix \ref{sec:asymptotics} that for large $N$, the test comparing the means $\mu_{l_1} = \mu_{l_2}$ versus $\mu_{l_1} \neq \mu_{l_2}$ has the right guarantees (FWE smaller than $\alpha$). This shows that even though we test the distributions, we also have an approximate test on the means. More precisely, we show the following in the Appendix~\ref{sec:asymptotics} for the comparison of the means of two distributions.

\begin{Theorem}\label{th:asymptotic}
Suppose that $\alpha \in  (0,1)$, suppose that $P$ and $Q$ both have a finite variance, and consider the two-sample testing problem $H_0: \E_P[X] = \E_Q[X]$ against $H_0': \E_P[X] \neq \E_Q[X]$. Then, the test resulting from Algorithm~\ref{algo:main} has an asymptotic type I error of $\alpha$
$$ \lim_{N \to \infty}\P_{H_0}\left(\text{reject }H_0\right) = \alpha.$$
\end{Theorem}

The proof of Theorem~\ref{th:asymptotic} is given in the Appendix (Section~\ref{sec:asymptotics}).

\textit{Power of the test}: in Theorem~\ref{th:multi_FWE}, there is no information on the power of the test. For any $N,K$, we show that, the FWE is upper-bounded by $\alpha$. In the appendix \ref{sec:asymptotics}, we establish Theorem~\ref{th:conv_boundary} that can be used to get a control on the asymptotic power. On the other hand, having non-asymptotic information on the power would allow us to give a rule for the choice of $N$ and $K$. However, non-asymptotic power analysis in a nonparametric setting is in general hard, and it is beyond the scope of this article. Instead, we compute empirically the power of our test and show that it performs well empirically compared to non-adaptive approaches. See Section~\ref{sec:non-adaptive} for the empirical power study on a Mujoco environment.

\section{Experimental study}\label{sec:xps}

This section demonstrates \adastop from a practitioner perspective. First, we illustrate the statistical properties of \adastop on toy examples in which the scores of the agents are sampled from known distributions. Then, we compare empirically \adastop to non-adaptive approaches. Finally, we exemplify the use of \adastop on a real case to compare a set of Deep RL agents.

To reproduce the experiments of this paper, the python code is freely available on GitHub at \url{https://github.com/TimotheeMathieu/Adaptive_stopping_MC_RL}.

\subsection{Toy examples}\label{sec:xp_toy}

To start with, let us consider a toy example. In what follows, let us denote:
\begin{enumerate*}[label=(\roman*)]
\item $\mathcal{N}(\mu, \sigma^2)$ the normal distribution with mean $\mu$ and standard deviation $\sigma$,
%\item $t(\mu, \nu)$ the $t$-Student distribution with mean $\mu$ and $\nu$ degrees of freedom,
\item $\mixtureNf(\mu_1, \sigma_1^2; \mu_2, \sigma_2^2)$ the mixture of 2 Gaussian distributions $\mathcal{N}(\mu_1, \sigma_1^2)$ and $\mathcal{N}(\mu_2, \sigma_2^2)$ with weights $f$ and $1-f$ respectively.
%\item $\mixturet(\mu_1, \nu_1; \mu_2, \nu_2)$ the mixture of 2 $t$-Student distributions $t(\mu_1, \nu_1)$ and $t(\mu_2, \nu_2)$.
\end{enumerate*}

We compare two agents $A_1$ and $A_2$ for which we know the distributions of their scores. We consider three cases in Fig.\@~\ref{fig:case12}. In the 3 cases, the scores of agent $A_1$ are drawn from $\mathcal{N}(0, 0.01)$ whereas the scores of the agent $A_2$ are drawn from a certain mixture detailed below. We use a hyperparameter $\Delta$ which is the distance between the two modes of the mixture ($\Delta = |\mu_1 - \mu_2|$). Our goal is to compare the performance of agent $A_1$ with the performance of agent $A_2$ and we study the outcome of \adastop when $\Delta$ is increasing from 0 to 1.
In the first case (upper-left part), the scores of $A_2$ are drawn from $\mixtureN(- \Delta / 2, 0.01; \Delta / 2, 0.01)$: in this situation, the means of both distributions are both equal to 0, but the distributions are different (except when $\Delta=0$).
In the second case (upper-right part), the scores of $A_2$ are drawn from $\mixtureN(0, 0.01; \Delta, 0.01)$: in this situation, as $\Delta$ increases, the means of the two distributions get more and more apart making the rejection of the null hypothesis easier and easier. 
In the third case (lower part), the scores of $A_2$ are drawn from $\mixture{0.9}(-0.1\Delta, 0.01; 0.9\Delta, 0.01)$: in this situation, the distributions are different but their means are both 0.
In all three cases, we run \adastop with $K= 5$, $N = 5$ and $\alpha = 0.05$. We also limit the maximum number of permutations to $B = 10^4$.
At the bottom of each subfigure of Figure~\ref{fig.exp1-2}, a color bar indicates the rejection rate of the null hypothesis that the compared distributions are the same for $\Delta \in [0,1]$. By varying $\Delta$ from $0$ to $1$, we observe the evolution of the power of the test, \ie the probability of rejecting the null hypothesis when it is indeed false. Figure~\ref{sub:1} shows that the error of the test remains around $0.05$ for all $\Delta$ (it is at most $0.1$ for the most extreme case). Indeed, even though the distributions are different, their means remain the same. 
If the null hypothesis states that the means are the same, then \adastop will return the correct answer with type I error not larger than $0.095$ (see Figure~\ref{sub:1}) for $\alpha=0.05$, which is larger than $\alpha$ due to the fact that the test for comparison of the means is asymptotic. This is an illustration of the fact that in addition to performing a test on the distributions, \adastop approximates the test on the means as shown theoretically in the asymptotic result in Appendix~\ref{sec:asymptotics} and as discussed at the end of Section~\ref{sec:theory}.
In contrast, Figure~\ref{sub:2} demonstrates an increasing trend, reaching a confidence level close to $1$ when $\Delta > 0.6$, which corresponds to the case where the two modes are separated by $3$ standard deviations from both sides.
Finally, in Figure~\ref{sub:3}, we use a distribution made of an unbalanced mixture of 2 normal distributions. This is meant to model an agent that does not perform optimally except in some rare cases (\eg the agent has a score around 0 for 90\% of the evaluation runs). As in case 1, the mean of the scores for both agents is $0$ and \adastop accepts the equality most of the time, with an error of $0.2$ in the most extreme case which we think is acceptable given the difficulty of this setting.

To obtain an estimation of the error, we have executed each comparison $M=5\cdot 10^3$ times, and we plot confidence intervals corresponding to $3 \sigma / \sqrt{M}$ (confidence larger than 99\%) where $\sigma$ is the standard deviation of the test decision.
In addition to cases 1, 2 and 3, we also provide a fourth experiment with a comparison of $10$ agents in Appendix~\ref{sec:complement_xp_toy}.

% \begin{figure}[h]
%   \begin{minipage}{0.4\linewidth}
%     \centering
%     \begin{tabular}{l| c | >{\quad}c<{\quad}}
%     % \hline
%         Case 1 & $A_1$ & $\mathcal{N}(0,0.01)$ \\
%         & $A_2$ & $\mathcal{M}_{\frac12}^{\mathcal{N}}(-\frac{\Delta}2, 0.01; \frac{\Delta}2, 0.01)$ \\
%          \midrule
%         Case 2 & $A_1$& $\mathcal{N}(0,0.01)$ \\
%         & $A_2$ & $\mathcal{M}_{\frac12}^{\mathcal{N}}(0, 0.01; \Delta, 0.01)$ \\
%         % \hline
%     \end{tabular}
%   \end{minipage}
%   \hfill
%   \begin{minipage}{0.6\linewidth}
%     \centering
%       \includegraphics[width=.9\textwidth]{}
%   \end{minipage}
%       \includegraphics[width=0.99\textwidth]{}
%   \caption{Toy examples 1 and 2 with an illustration of Gaussian mixtures (top) and rejection frequency of null hypothesis according to $\Delta$ (bottom).
%   \label{fig.exp1-2}}
%   \label{fig:case12}
% \end{figure}

\begin{figure}[h]
  \begin{center}
    \subfloat[Case 1]{
      \includegraphics[width=0.49\textwidth]{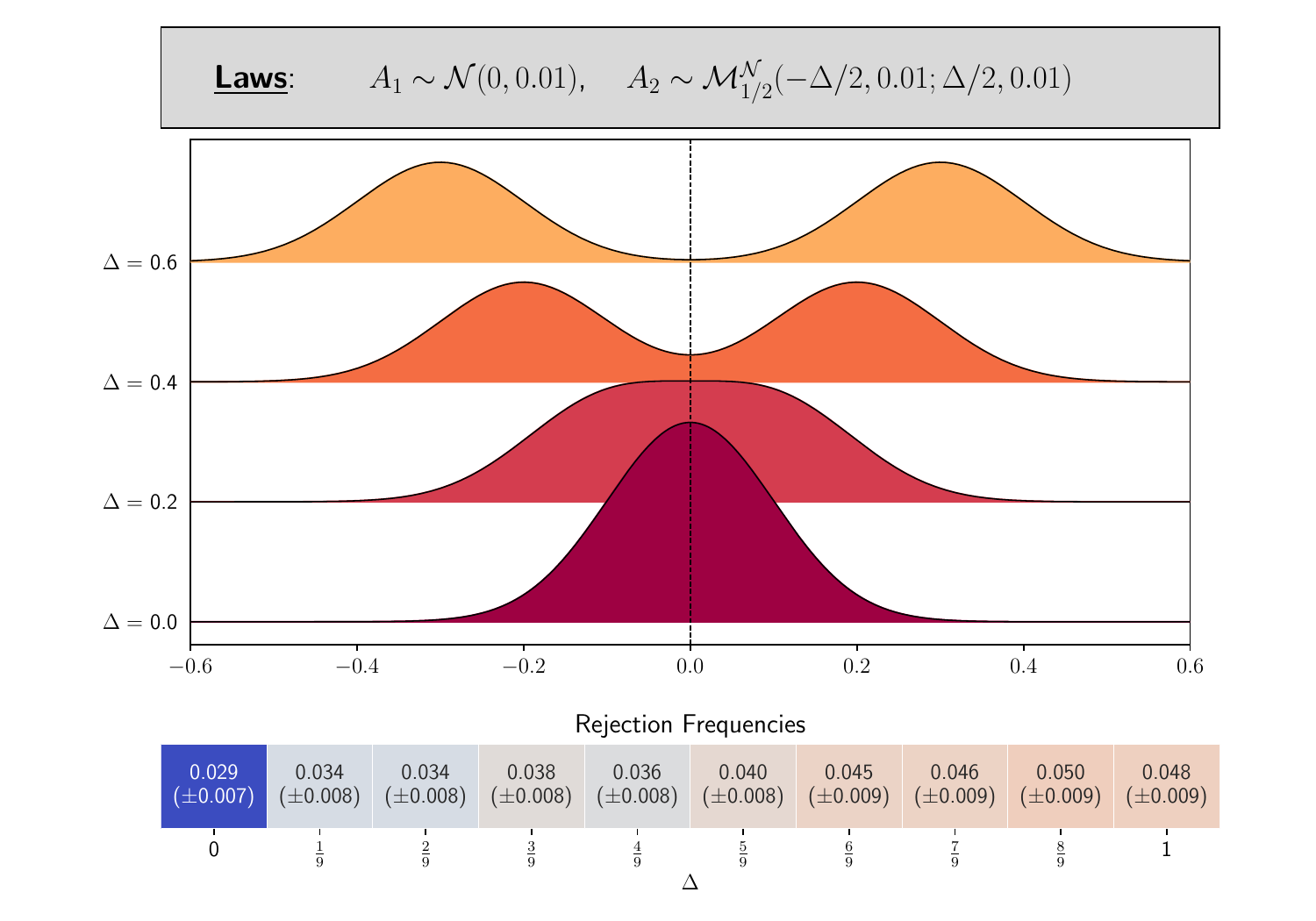}
      \label{sub:1}
                         }
    \subfloat[Case 2]{
      \includegraphics[width=0.49\textwidth]{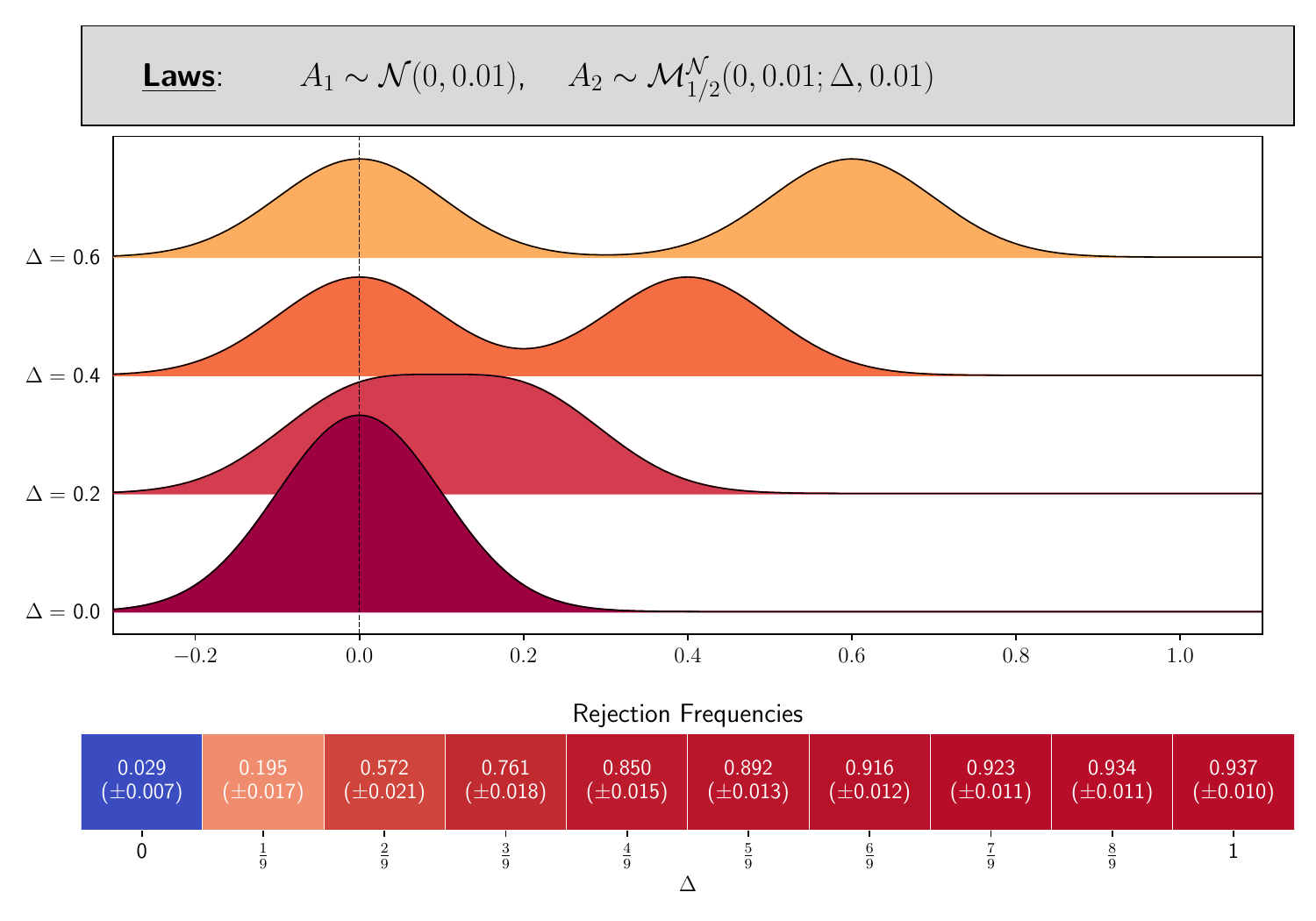}
      \label{sub:2}
                         }\\
      \subfloat[Case 3]{
      \includegraphics[width=0.49\textwidth]{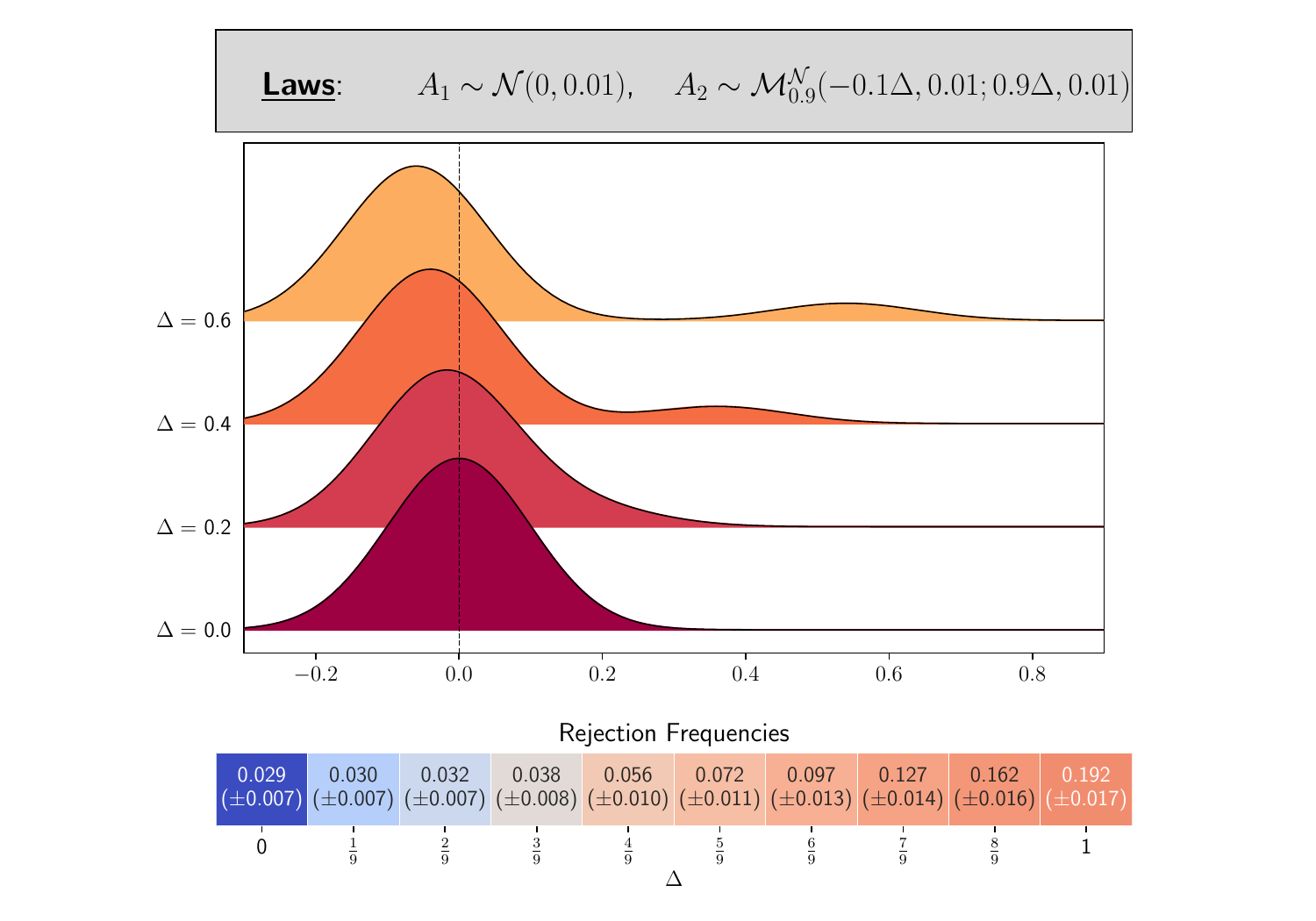}
      \label{sub:3}
                         }                    
    \caption{Toy examples. For each of these 3 cases, we plot the distribution of scores of each of the 2 agents that are compared, as well as the frequency of rejection of the null hypothesis. Please, refer to the text for details.}
    \label{fig:case12}
    \label{fig.exp1-2}
    \end{center}
\end{figure}

\subsection{Comparison with non-adaptive approach}
\label{sec:non-adaptive}

\begin{table}[h!]
\caption{Average empirical statistical power and, in parentheses, effective number of scores used by \adastop as a function of the total number of scores ($N \times K$) when comparing SAC and TD3 agents on Mujoco HalfCheetah task. The number of permutations $B$ is set to $10^4$ and $\alpha$ is set to $0.05$. \adastop is run $10^3$ times for each $(N,K)$ pair. The shades of blue are proportional to the power, a value in $[0,1]$ (we use the same color scheme as in \citep{colas2018many}). It can seem strange that the power is 0 in some cases. This situation is explained in the text. \label{tab:stat_power_sactd3}}
	\centering
	\begin{tabular}{|c|lllll|}
		\hline
		N$\setminus$K
		& 2              & 3              & 4              & 5             & 6              \\
		\hline
		1 & \databar{0.0}{(2.0)}  &    \databar{0.0  }{(3.0)   }   & \databar{0.277}{ (4.0)   } & \databar{0.465}{(5.0)  } & \databar{0.56}{(6.0)}  \\
		2 & \databar{0.005}{(4.0)  }  & \databar{0.33 }{(6.0)   }  &  \databar{0.531}{ (6.96)  } & \databar{0.602}{(8.345)} & \databar{0.704}{ (9.198)}  \\
		3 & \databar{0.213}{(5.984)}  & \databar{0.506}{ (8.085)}  &  \databar{0.627}{ (10.212)} & \databar{0.689}{(11.02)} & \databar{0.785}{ (11.52)}  \\
		4 & \databar{0.371}{(7.616)}  & \databar{0.611}{(9.648)}  &  \databar{0.744}{(11.7)}   & \reddatabar{0.82}{(12.08)}  & \reddatabar{0.845}{(13.89)}  \\
		5 & \databar{0.465}{(9.044)}  & \databar{0.691}{ (11.031)} & \databar{0.78}{ (13.28) }  & \reddatabar{0.853} {(14.27)} & \reddatabar{0.884}{ (14.532)} \\
		6 & \databar{0.534}{(10.4)}   & \databar{0.73}{ (12.306)}  & \reddatabar{0.837}{(14.124)} & \reddatabar{0.89}{(14.94)}  & \reddatabar{0.911} {(15.978)} \\
		7 & \databar{0.599}{ (11.358)} & \databar{0.779}{ (13.404)} & \reddatabar{0.879}{ (14.916)} & \reddatabar{0.92}{(15.495)} & \reddatabar{0.939}{(16.404)} \\
		8 & \databar{0.635}{ (12.322) }& \reddatabar{0.818}{(13.95)}  & \reddatabar{0.885}{(15.824)} & \reddatabar{0.942}{(16.03)} & \reddatabar{0.961}{(17.268)} \\
		%9 & 0.68 (13.196)  & 0.849 (14.544) & 0.904 (16.516) & 0.949 (16.57) & 0.982 (17.91)  \\
		%10 & 0.734 (13.92)  & 0.877 (15.108) & 0.924 (17.096) & 0.971 (17.71) & 0.984 (17.874) \\
		%11 & 0.764 (14.642) & 0.885 (16.386) & 0.931 (17.832) & 0.966 (19.13) & 0.986 (18.924) \\
		%12 & 0.773 (15.138) & 0.897 (16.686) & 0.956 (17.596) & 0.979 (18.88) & 0.991 (18.666) \\
		%13 & 0.82 (15.44)   & 0.929 (16.779) & 0.959 (18.376) & 0.988 (19.97) & 0.994 (19.65)  \\
		%14 & 0.813 (16.456) & 0.913 (17.745) & 0.966 (18.52)  & 0.98 (20.405) & 0.998 (20.148) \\
		%15 & 0.863 (16.406) & 0.943 (17.241) & 0.98 (18.124)  & 0.993 (20.34) & 0.997 (20.238) \\
		\hline
	\end{tabular}
\end{table}

% GGST : {'n=2': {'true_positives': 0.91, 'effective_comparisons': 17.372}, 'n=3': {'true_positives': 0.905, 'effective_comparisons': 17.505}, 'n=4': {'true_positives': 0.904, 'effective_comparisons': 18.304}, 'n=5': {'true_positives': 0.919, 'effective_comparisons': 19.38}, 'n=6': {'true_positives': 0.928, 'effective_comparisons': 19.608}, 'n=7': {'true_positives': 0.897, 'effective_comparisons': 19.376}, 'n=8': {'true_positives': 0.937, 'effective_comparisons': 20.776}}

\citep{Sigaud} share the same objective with us. However, they use non-adaptive tests unlike \adastop. We follow their experimental protocol and compare \adastop and non-adaptive approaches empirically in terms of statistical
power as a function of the sample size (number of scores). In particular, we use the data\footnote{available at \url{https://github.com/flowersteam/rl_stats/tree/master/data}.} they provide for a SAC agent and for a TD3 agent evaluated on HalfCheetah % provided by~\citep{Sigaud} to obtain two multi-modal distributions
(see Fig.\@~\ref{fig:sactd3} in the Appendix).
Similarly to~\cite[Table~15]{Sigaud}, we compute the empirical statistical power of \adastop as a function of the number of scores of the RL agents (Table~\ref{tab:stat_power_sactd3}). To compute the empirical statistical power for a given number of scores, we make the hypothesis that the distribution of SAC and TD3 agents scores are different, and we count how many times \adastop decides that one agent is MLB than the other (number of true positives). As the test is adaptive, we also report the effective number of scores that are necessary to make a decision with $0.95$ confidence level. For each number of scores, we have run \adastop $10^3$ times. For example, when comparing the scores of SAC and TD3 on HalfCheetah using \adastop with $N=4$ and $K=5$, the maximum number of scores that is used is $N \times K = 20$. However, we observe in Table~\ref{tab:stat_power_sactd3} that when $N=4$ and $K=5$, \adastop can make a decision with a power of $0.82$ using only 12 scores. In~\cite[Table~15]{Sigaud}, the minimum number of scores required to obtain a statistical power of $0.8$ when comparing SAC and TD3 agents is 15 when using either a t-test, or a Welch test, or a bootstrapping test.
With this example, we first show that being an adaptive test, \adastop may save computations. We also show that as long as the scores of agents are made available, \adastop can use them to provide a statistically sound conclusion, and as such, \adastop may be used to assess the initial conclusions, hopefully strengthening them with a statistically significant argument.

\textit{Remark}: as already mentioned above, when $N$ and $K$ are both small, it can happen that \adastop does not allow an error up to $\alpha$ but it will restrict itself to a smaller error and this will lead to a much lower power. For instance, if $N=2, K=2, \alpha=0.05$ for the first interim, we have $\lfloor \frac{1}{2}{2N \choose N} \alpha/2 \rfloor = 0$ hence the first boundary is necessarily infinity and \adastop never rejects, \eg with the notations of Equation~\eqref{eq:def_q_i}, $q_1=0$. Then at the second interim  $\lfloor \frac{1}{2}({2N \choose N})^2 \alpha/2 \rfloor=3$ and we use $q_1+q_2 = \lfloor \frac{1}{2}({2N \choose N})^2 \alpha/2 \rfloor / ( \frac{1}{2}({2N \choose N})^2)\simeq 0.041$ which is smaller than the $5\%$ that we allow for the test, hence the test is more conservative than needed. This is an extreme case happening in the first few interims when $N$ is small explaining some values in Table~\ref{tab:stat_power_sactd3} that may seem strange at first glance (for instance the power is equal to $0$ when $N=1$ and $K \le 3$).

\subsection{\adastop for Deep Reinforcement Learning}\label{sec:deep_rl_xp}

In this section, we use \adastop to compare four commonly-used Deep RL agents on the MuJoCo\footnote{We use MuJoCo version 2.1, as required by https://github.com/openai/mujoco-py.} \citep{todorov2012mujoco} benchmark for high-dimensional continuous control. We use the Gymnasium\footnote{https://github.com/Farama-Foundation/Gymnasium.} implementation.
More specifically, we train agents on the Ant-v3, HalfCheetah-v3, Hopper-v3, Humanoid-v3, and Walker-v3 environments using PPO from rlberry \citep{rlberry}, SAC from Stable-Baselines3 \citep{stable-baselines3}, DDPG from CleanRL \citep{huang2022cleanrl}, and TRPO from MushroomRL \citep{mushroomrl}. PPO, SAC, DDPG, and TRPO are Deep RL algorithms used for continuous control tasks. We choose these algorithms because they are commonly used and they represent a diverse set of approaches from different RL libraries. We use different RL libraries in order to demonstrate the flexibility of \adastop, as well as to provide examples on how to use these popular libraries with \adastop. 

On-policy algorithms, such as PPO and TRPO, update their policies based on the current data they collect during training, while off-policy algorithms, such as SAC and DDPG, can learn from any data, regardless of how it was collected. This difference may make off-policy algorithms more sample-efficient but less stable than on-policy algorithms. Furthermore, SAC typically outperforms DDPG in continuous control robotics tasks due to its ability to handle stochastic policies, while DDPG restricts itself to deterministic policies \citep{haarnoja2018soft}. Finally, PPO is generally considered performing better than TRPO in terms of cumulative reward \citep{engstrom2020implementation}.

For each algorithm, we fix the hyperparameters to those used by the library authors in their benchmarks for one of the MuJoCo environments. Appendix~\ref{deeprlsetup} lists the values that were used and we further discuss the experimental setup.
\begin{figure*}[!t]
    \centering
    \includegraphics[width=0.9\textwidth]{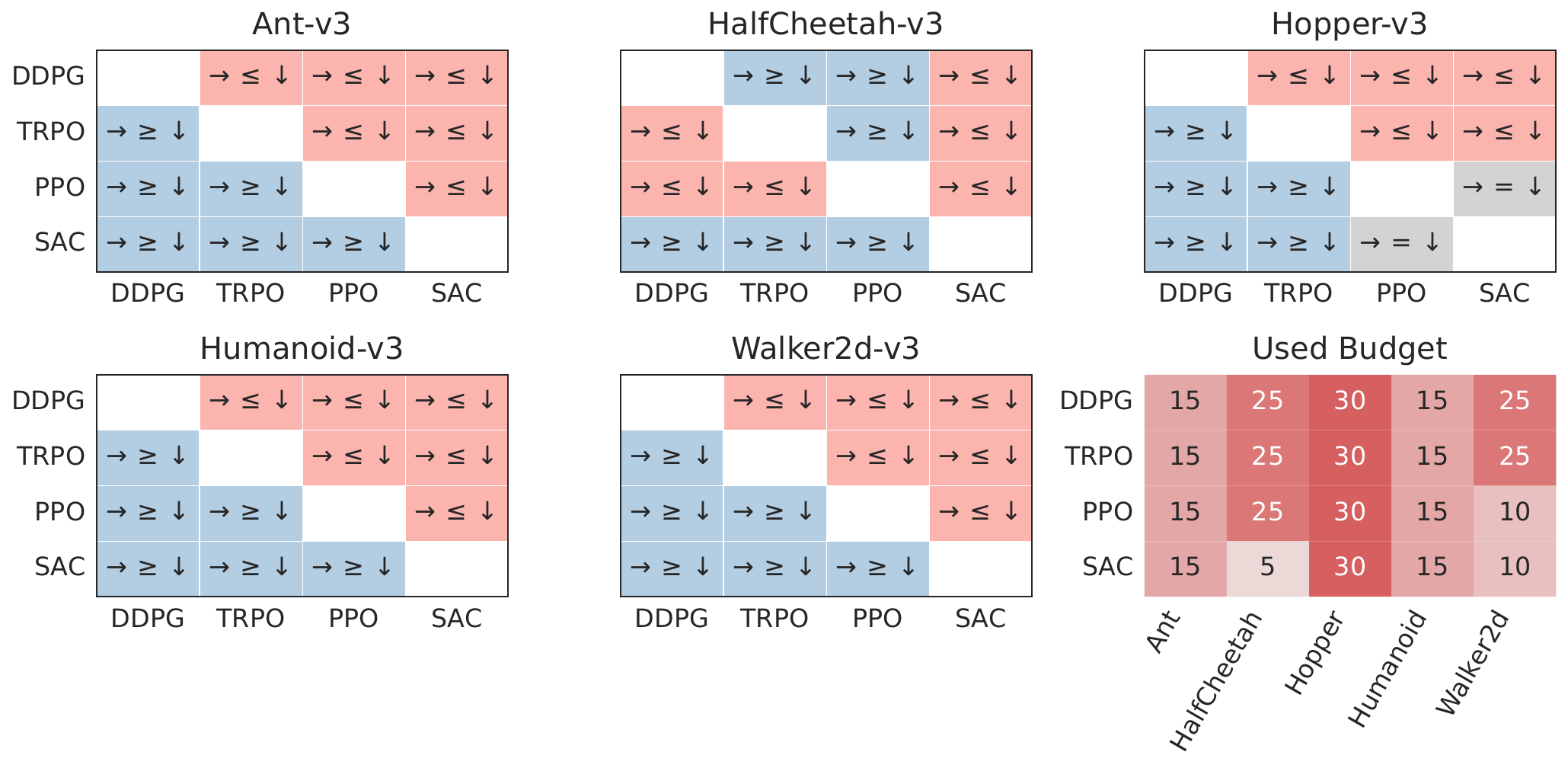}
    \caption{\adastop decision tables for each MuJoCo environment, and the budget used to make these decisions (bottom right). The notation $\rightarrow \ge \downarrow$ means that the agent which name is on the left of the row is MLB than the agent which name is at the bottom of the column. See Appendix~\ref{deeprlsetup} for further details.}
    \label{fig:comparisons}
\end{figure*}
We compare the four agents in each environment using \adastop with $N=5$ and $K=6$. Fig.\@~\ref{fig:comparisons} shows the \adastop decision tables for each environment, as well as the number of scores per agent and environment. As expected, SAC is MLB than all the other agents in each environment. In contrast, all agents are MLB than DDPG; this may be due to the restriction to deterministic policies which hurts exploration in high-dimensional continuous control environments such as the MuJoCo benchmarks. Furthermore, we observe that the expected ordering between PPO and TRPO is generally respected, with TRPO MLB than PPO in only one environment. Finally, we note that PPO performs particularly well in some environments obtaining scores that are comparable to those of SAC, while also being the worst-performing algorithm on HalfCheetah-v3. Overall, the \adastop rankings in these experiments are not unexpected.

Moreover, our experiments demonstrate that \adastop can make decisions with fewer scores, thus reducing the computational cost of comparing Deep RL agents. For instance, as expected, SAC is MLB than all the other agents on the environment HalfCheetah-v3, and \adastop required only five scores to make all decisions involving SAC. Additionally, we observed that the decisions requiring the entire budget of $N K = 30$ scores were the ones in which \adastop determined that the agents were equivalent in terms of their scores. This decision process can be sped-up by using the early accept heuristic which is presented in the Appendix (Section~\ref{sec:early_accept_annex}). For instance in the Walker2d-v3 environment, early accept allows us to take all the decisions after only $10$ scores have been collected for each agent.

%\todoM{This last paragraph of the Deep RL part can be extended to mention early accept \ie "However, this can be partially addressed with early rejection..."}

%\todoP{Where is the application of AdaStop here? Fig. \ref{fig:learning_curves} should go to the appendix.}
%\subsection{SAC Storytelling}
%We now provide an experimental protocol for RL researchers desgigning new agents. Consider you just invented SAC and want to show empirically that it is a better agent than the existing state-of-the-art (PPO, TRPO). 
%One could do so by comparing SAC and SOTA evaluations on different benchmarks (note that only conclusion about scores can be drawn, not about convergence speed (sample or wall time wise))
% \begin{minted}{python}
% from AdaStop import MultipleAgents
% from baselines import TRPO, PPO
% from research import SAC

% # Agents to compare
% sac = SAC
% hyperparams_sac = dict(learning_rate=0.1, entropy=0.5)
% ppo, trpo = PPO, TRPO
% hyperparams_ppo = dict(learning_rate=0.1)
% hyperparams_trpo = dict(learning_rate=0.1)

% # Benchmark
% env = make_env
% env_parameters = dict(name="Hopper-v3", timesteps_learn=1_000_000, evaluation_episodes=1000)

% # Comparison with AdaStop
% comparator = MultipleAgents(n=5, k=5, alpha=0.05, early_stopping=True)
% comparator.compare(agents=[SAC, TRPO, PPO], 
%                   agents_parameters=[hyperparams_sac, hyperparams_trpo, hyperparams_ppo]
%                   benchmark_fn=make_env,
%                   benchmark_parameters=env_paraemeters)
% comparator.get_seeds_per_agent_pair()
% comparator.plot_conclusions()
% \end{minted}

\section{Conclusion and future works}

In this paper, we introduce \adastop which is a sequential group test aiming at ranking agents based on their practical performance. \adastop may be applied in various fields where the performance of algorithms are random, such as machine learning, or optimization. Our goal is to provide statistical grounding to define the number of times a set of agents should be run to be able to confidently rank them, up to some confidence level $\alpha$. This is the first such test, and we think this is an important contribution to computational studies in reinforcement learning and other domains. From a statistical point of view, we have been able to demonstrate the soundness of \adastop as a statistical test. Using \adastop is simple. We provide open source software to use it. Experiments demonstrate how \adastop may be used in practice, even in a retrospective manner using logged data: this allows one to diagnose prior studies in a statistically significant way, hopefully confirming their conclusions, possibly showing that the same conclusions could have been obtained with fewer computations.

Currently, \adastop considers a set of agents facing one single task. Our next step will be to extend the test to experimental settings where a set of agents are compared on a collection of tasks, such as the set of Atari games or the set of Mujoco tasks in reinforcement learning.
Properly dealing with such experimental settings requires a careful statistical analysis. Moreover, additional theoretical guarantees for the early accept and the non-asymptotic control of the power, FWE, and directional error of \adastop for the comparison of means would also greatly improve the interpretability of the conclusions of \adastop.
We are currently investigating these questions.

As this is illustrated in the experimental section, \adastop can be run on already collected scores: we do not need to run anew the agents as long as scores are available. This remark calls for an effort of the community to make their scores publicly available so that it is easy for anyone to compare one's new agent with others already proposed.

\section*{Acknowledgments}

The authors would like to thank the action editor and the reviewers for their many remarks that guided us to make this paper clearer, more legible, and to improve the rigor of the exposition.
O-A.\@ Maillard and Ph.\@ Preux acknowledge the support of the Métropole Européenne de Lille (MEL), ANR, Inria, Université de Lille, through the AI chair Apprenf number R-PILOTE-19-004-APPRENF. 
R.\@ Della Vecchia acknowledges the funding received by the CHIST-ERA Project Causal eXplainations in Reinforcement Learning -- CausalXRL\footnote{\url{https://www.chistera.eu/projects/causalxrl}}.
A.\@ Shilova acknowledges the funding from the HPC-BigData Inria Project Lab\footnote{\url{https://project.inria.fr/hpcbigdata/}}.
T.\@ Mathieu acknowledges the funding received by the SR4SG Inria exploratory action\footnote{\url{https://project.inria.fr/sr4sg/home/}}.
M.\@ Centa Medeiros and H.\@ Kohler acknowledge the funding of their Ph.D.\@ by an ANR AI\_PhD@Lille grant.
This research is also partially supported by the CornelIA Hauts-de-France project.
All the authors acknowledge the Scool research group for its outstanding working environment.

%\section{Ethical statement}
%We believe that there is no ethical concern with our work. On the contrary, our work provides theory and tools for reproducible and open research. We hope AdaStop will be used for RL research so that, moving forward, one can be confident when saying a new agent is better than another. As a side note, this paper is also a step towards ``green AI" by trying to use only the necessary amount of computation to make a decision.

\bibliographystyle{tmlr}
\bibliography{tmlr}
\newpage
\appendix

\section{Index of notations}\label{sec:index_notations}
\begin{itemize}
    \item $N_{score}$: the total number of scores used for one agent.
    \item $N$: the number of scores per interim.
    \item $K$: the total number of interims.
    \item $k$: the current interim number.
    \item $L$: the number of agents to compare.
    \item $A_1, A_2, \dots, A_L$: the agents to compare.
    \item $\E[X]$: expectation of the random variable $X$.
    \item $\alpha$: type I error or Family-wise error of a test.
    \item $\beta$: type II error of a test.
    \item $\S_{n}$: set of all the permutations of $\{1,\dots,n\}$.
    \item $\sigma$: generic notation for a permutation, $\sigma \in \S_{n}$ for some $n \in \N^*$.
    \item $\id$: the identity permutation, \ie $\id(k)=k$ for all $1\le k\le n$ for $\id \in \S_n$.
    \item $B$: the number of random permutations used to approximate the test statistic. % (see discussion \textit{choice of permutations} in Section~\ref{sec:adastop}). 
    \item $\sigma_{1:k}$: shorthand for the concatenation of permutations $\sigma_1,\dots,\sigma_k$, \ie $\sigma_{1:k}(e_{n,i})=\sigma_i(e_{n,i})$ for all $1\le i\le k$.
    \item $H_j$: denotes hypothesis $j$ in a multiple test, $H_j'$ denotes the alternative of hypothesis $H_j$. 
    \item $\textbf{c}_j$: denotes a comparison. This is a couple $(l_1,l_2)$ in $\{1,\dots,L\}^2$. 
    \item $j$: shorthand for denoting comparison $c_j$.
    \item $\textbf{C}_0$: set of all the comparisons done in \adastop.
    \item $\textbf{C}$: current set of undecided comparisons in \adastop, a subset of $\textbf{C}_0$.
    \item $\textbf{C}_k$: state of $\textbf{C}$ at interim $k$ in \adastop.
    \item $e_{i}(j)$ or $e_{i,k}(j)$: score corresponding to run number $i$ when doing the test for comparison $\textbf{c}_j$ for interim $k$.
    \item $T_{N}(\sigma)$ and $T_{N,k}^{(j)}(\sigma)$: test statistic. Defined in Equation~\eqref{eq:def_t_1} and Equation~\eqref{eq:def_t}.
    \item $B_{N,k}^{(\textbf{C})}$: boundary for test statistic $T_{N,k}^{(\textbf{C})}$, such that if $T_{N,k}^{(\textbf{C})}(\id) > B_{N,k}^{(\textbf{C})}$, then reject the set of hypotheses associated to $\textbf{C}$.
    \item $\textbf{I}$: set of true hypotheses and $\textbf{I}^c$ its complement.
    \item FWE: family-wise error, see Definition~\ref{def:fwe}.
    \item $\mathcal{N}(\mu,\sigma^2)$: law of a Gaussian probability distribution with mean $\mu$ and variance $\sigma^2$.
    \item $t(\mu,\nu)$: law of a translated Student probability distribution with center of symmetry $\mu$ and $\nu$ degrees of freedom.
    \item $\mixture{f}(\mu_1,\sigma_1^2; \mu_2, \sigma_2^2)$: mixture of the two normal probability distributions $\mathcal{N}(\mu_1,\sigma_1^2)$ and $\mathcal{N}(\mu_2,\sigma_2^2)$. Each component of the mixture in weigh by $f$ and $1-f$ respectively.
    \item $\mixturetf{f}(\mu_1,\nu_1; \mu_2, \nu_2)$: mixture of two Student probability distributions $t(\mu_1,\nu_1)$ and $t(\mu_2,\nu_2)$. Each component of the mixture in weigh by $f$ and $1-f$ respectively.
    \item $\P_{H_j, j \in \textbf{I}}$: probability distribution when $H_j, j \in \textbf{I}$ are true and $H_j, j \notin \textbf{I}$ are false.
\end{itemize}

\section{Basics on hypothesis testing}\label{sec:recap}

To be fully understood, this paper requires the knowledge of some notions of statistics. In the hope of widening the audience of this paper, we recall notions of statistics related to hypothesis testing that are essential to understand the \adastop test.

\subsection{Type I and type II error}

In its simplest form, a statistical test is aimed at deciding whether a given collection of data $X_1,\dots,X_N$ adheres to some hypothesis $H_0$ (called the null hypothesis), or if it is a better fit for the alternative hypothesis $H_1$. Typically, the null hypothesis states that the mean of the distribution from which the $X_i$'s are sampled is equal to some $\mu_0$: $H_0: \mu=\mu_0$ and $H_1: \mu \neq \mu_0$ where $\mu$ is the mean of the distribution of $X_1,\dots,X_N$. Because $\mu$ is unknown, it has to be estimated using the data. Often this is done using the empirical mean $\widehat{\mu}=\frac{1}{N}\sum_{i=1}^N X_i$.
$\widehat{\mu}$ is a random variable and some deviation from $\mu$ is to be expected. The theory of hypothesis tests is concerned in finding a threshold $c$ such that if $|\widehat{\mu}-\mu_0|>c$ then we say that $H_0$ is false because the deviation is greater than what was expected by the theory.
A slightly more complex problem is to consider two samples $X_1,\dots,X_N$ and $Y_1,\dots,Y_N$ and do a two-sample test deciding whether the mean of the distribution of the $X_i$'s is equal to the mean of the distribution of the $Y_i$'s.

In both cases, the result of a test is either reject $H_0$ or to not reject $H_0$. This answer is not a ground truth: there is some probability that we make an error. However, this  probability of error is often controlled and can be decomposed in type I error and type II error (often denoted $\alpha$ and $\beta$ respectively, see Table~\ref{tab:type12} and Figure~\ref{fig:type_error}).
In words, a type I error is when we conclude that the null hypothesis is false ($H_O$ is rejected, means are different) whereas means are equal. $\alpha$ is the probability that this event occurs.
A type II error is when we conclude that the null hypothesis is true ($H_0$ is accepted, means are equal) whereas means are not equal. $\beta$ is the probability that this event occurs.

\begin{table}[h]
    \centering
    \begin{tabular}{c|c|c}
    & $H_0$ is true & $H_0$ is false \\
    \hline
    We accept $H_0$  &  No error   &  Type II error $\beta$\\
    We reject $H_0$  &  Type I error $\alpha$  & No error 
    \end{tabular}
    \vspace{0.5em}
    \caption{Type I and type II error.}
    \label{tab:type12}
\end{table}

The problem is not symmetric: failing to reject the null hypothesis does not mean that the null hypothesis is true. It can be that there is not enough data to reject $H_0$. It has been shown that a test cannot simultaneously minimize $\alpha$ and $\beta$. It is customary to minimize $\beta$ for a given value of $\alpha$ (typically $\alpha$ is set to $0.05$). The probability to reject $H_0$ when it is false is $1-\beta$ and it is usually called the \textbf{power} of a test.

\subsection{Directional error}\label{sec:directional}

Statistical tests are often formulated as bilateral problems, which means that we test $\mu=\mu_0$ versus the bilateral interval $\mu \in (-\infty, \mu_0) \cup (\mu_0, +\infty)$. In practice (and in the case of this article), after rejecting the equality, we want to know if $\mu> \mu_0$ or if $\mu < \mu_0$. This can be done using the sign of the test statistic $\widehat{\mu} - \mu_0$. However this is not a direct consequence of the test because the test did not suppose a direction. This means that in addition to the type I and type II, we can have a type III error which is the probability to say that $\mu>\mu_0$ when in fact $\mu < \mu_0$ or that $\mu < \mu_0$ when in fact $\mu > \mu_0$.

Intuitively, if the type II error is low, %(the probability to conclude equality when in fact the distribution were not equal) 
then type III error~\citep{leventhal1996directional} tends to be even lower provided that the data distribution is sufficiently concentrated around its mean. See Figure~\ref{fig:type_error} (right sub-figure) for a visual representation of the errors, here the probability that $\widehat{\mu} > \mu_0 + c$ when $\mu < \mu_0$. Type III error is often negligible, not considered in practice, and even unknown to many researchers.

\begin{figure}
\includegraphics[width=0.8\textwidth]{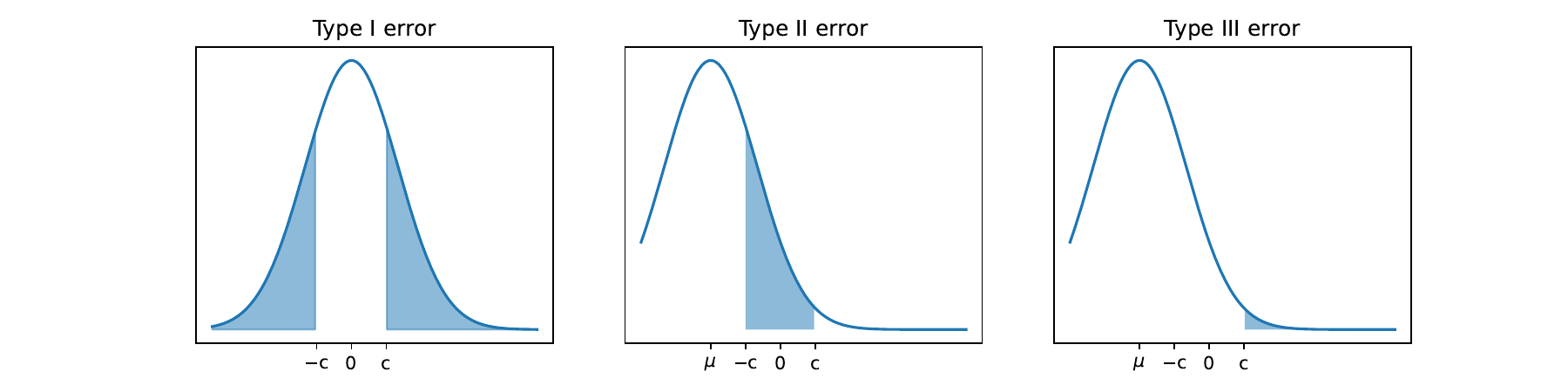}
\caption{Illustration of type I, II and III errors: in these 3 plots, the shaded areas represent the probability of type I error (left), type II error (middle) and type III error (right) when testing $\mu = 0$ vs.\xspace $\mu \neq 0$. $c$ denotes the threshold on test statistic $\widehat{\mu}$ after which we take a decision.\label{fig:type_error}}
\end{figure}

\subsection{Multiple tests and FWE}

When performing simultaneously $L$ statistical tests for $L>1$, one must be careful that the error of each test accumulates. If one is not cautious, the overall error may become non-negligible. As a consequence, multiple strategies have been developed to deal with multiple testing problem. 

To deal with the multiple testing problem, the first step is to define what is an error. There are several definitions of error in multiple testing among which is the False Discovery Rate which measures the expected proportion of false rejections.
Another possible measure of error is the Family-Wise Error (the error we use in this article) which is defined as the probability to make at least one false rejection:
$$\mathrm{FWE} = \P_{H_j, j \in \textbf{I}}\left(\exists j \in \textbf{I}:\quad  \text{reject }H_j \right),$$
where $\P_{H_j, j \in \textbf{I}}$ is used to denote a probability.  $\textbf{I}\subset\{1,\dots,L\}$ is the set of indices of the hypotheses that are true (and $\textbf{I}^c$ the set of hypotheses that are actually false). 
To construct a procedure with an FWE smaller than $\alpha$, the simplest method is perhaps Bonferroni's correction~\citep{bonferroni1936teoria} in which a statistical test is applied on each of the $J$ couples of hypotheses to be tested ($J=L(L-1)/2$). And then, one would tune each hypothesis test to have a type I error $\alpha/J$. The union bound then implies that the FWE is bounded by $\alpha$:
\[
\text { FWE }
=
    \P_{H_j, j \in \textbf{I}}\Big(\bigcup_{j\in \textbf{I}}
    \left\{\text{reject }H_j\right\}
    \Big) 
\leq 
    \sum_{i\in \textbf{I}}\P_{H_j, j \in \textbf{I}}\left(\text{reject }H_j\right)
\leq
    |\textbf{I}| \frac{\alpha}{J} 
\leq 
    \alpha\, .
\]
% where we have denoted with $p_i$ the p-value for hypothesis $H_i$, 
%which is the probability of rejecting the hypothesis given that it is actually true.
Bonferroni's correction has the advantage of being very simple to implement. However, it is often very conservative and the FWE is most often a lot smaller than $\alpha$. An alternative method that performs well in practice is the step-down method that we use in this article and is presented in Section~\ref{sec:step-down}.

\section{Proof of Theorem~\ref{th:multi_FWE}}\label{sec:proof_main_th}

The proof of Theorem~\ref{th:multi_FWE} is based on an extension of the proof of the control of FWE in the non-sequential case and the proof of the step-down method \citep{Romano_2003}. 
The interested reader may refer to Lemma~\ref{lem:quantile_permu_2}, in the Appendix~\ref{app:proof_2}, where we reproduce the proof of the bound on FWE for simple permutation tests as it is a good introduction to permutation tests.  
The proof proceeds as follows: first, we prove weak control\footnote{A procedure controls has a weak control on the FWE if the FWE is smaller than $\alpha$ when all null hypotheses are true. On the other hand, strong control is when FWE is smaller than $\alpha$ whatever the set of true null hypotheses.} on the FWE by decomposing the error as the sum of the errors on each interim and using the properties of permutation tests to show that the error done at each interim is controlled by $\alpha/K$. Then, using the step-down method construction, we show that the strong control of the FWE is a consequence of the weak control because of monotony properties on the boundary values of a permutation test.

\subsection{Simplified proof for $L=2$ agents, and $K=1$} \label{app:proof_2}
The proof of this theorem is a bit technical. We begin by showing it in a very simplified case where $L=2$ agents, and $K=1$.
\begin{Lemma}\label{lem:quantile_permu_2}
  Let $X_1,\dots,X_N$ be i.i.d.\@ from a distribution $P$ and $Y_1,\dots,Y_N$ be i.i.d. from a distribution $Q$. Let $Z_1^{2N}=X_1,\dots,X_N,Y_1,\dots,Y_N$ be the concatenation of $X_1^N$ and $Y_1^N$. Let $\alpha \in (0,1)$ and define $B_N$ as:
  $$ B_N = \inf \left\{b>0:\quad \frac{1}{(2N)!}\sum_{\sigma\in \S_{2N}} \1\left\{ \frac{1}{N}\sum_{i=1}^N (Z_{\sigma(i)}-Z_{\sigma(N+i)}) > b\right\}\le \alpha\right\}.$$
  Then, if $P=Q$, we have:
  $$\P\left(\frac{1}{N}\sum_{i=1}^N (X_i-Y_i) > B_N \right)\le \alpha. $$
\end{Lemma}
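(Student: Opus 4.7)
The plan is to exploit exchangeability of the pooled sample under the null $P=Q$, combined with the fact that the data-dependent threshold $B_N$ is a function of the unordered multiset $\{Z_1,\dots,Z_{2N}\}$ only, so permuting the $Z_i$'s leaves it invariant. This is the classical randomization argument for permutation tests.

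First I would introduce the test statistic as a function of both the data and a permutation, $T(Z;\sigma) = \frac{1}{N}\sum_{i=1}^N(Z_{\sigma(i)} - Z_{\sigma(N+i)})$, so that $\frac{1}{N}\sum_i(X_i-Y_i) = T(Z;\id)$ and the defining inequality for $B_N$ reads $\frac{1}{(2N)!}\sum_\sigma \1\{T(Z;\sigma)>B_N\}\le\alpha$. Next I would observe two key facts. (i) Under $P=Q$, the vector $(Z_1,\dots,Z_{2N})$ is exchangeable: for any deterministic $\tau\in\S_{2N}$, $(Z_{\tau(1)},\dots,Z_{\tau(2N)}) \stackrel{d}{=} (Z_1,\dots,Z_{2N})$. (ii) The threshold $B_N$ depends on $Z$ only through the multiset $\{Z_1,\dots,Z_{2N}\}$, because the collection of values $\{T(Z;\sigma):\sigma\in\S_{2N}\}$ is itself invariant under any relabeling of the indices; hence $B_N(Z_{\tau(1)},\dots,Z_{\tau(2N)})=B_N(Z_1,\dots,Z_{2N})$ for every $\tau$.

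Then I would apply the randomization trick: let $\tau$ be uniform on $\S_{2N}$ and independent of $Z$. Combining (i) and (ii),
\[
\P\bigl(T(Z;\id)>B_N(Z)\bigr)
= \P\bigl(T(Z_{\tau};\id)>B_N(Z_{\tau})\bigr)
= \P\bigl(T(Z;\tau)>B_N(Z)\bigr),
\]
where in the last equality I relabel using the identity $T(Z_\tau;\id)=T(Z;\tau)$ together with the invariance of $B_N$. Conditioning on $Z$ and using that $\tau$ is uniform and independent,
\[
\P\bigl(T(Z;\tau)>B_N(Z)\bigr)
= \E\!\left[\frac{1}{(2N)!}\sum_{\sigma\in\S_{2N}} \1\{T(Z;\sigma)>B_N(Z)\}\right].
\]

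Finally, the defining property of $B_N$ bounds the inner average by $\alpha$ pointwise in $Z$, so the expectation is at most $\alpha$, giving the claim. The only delicate point is justifying the identity $T(Z_\tau;\id)=T(Z;\tau)$ together with the invariance of $B_N$ under relabeling of coordinates; once these bookkeeping checks are in place the argument is immediate. I expect this invariance step, and a careful statement of exchangeability in the presence of a random permutation, to be the main (though routine) obstacle.
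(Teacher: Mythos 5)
Your argument is correct and is essentially the paper's own proof: both rest on the joint invariance $(T(Z;\sigma),B_N(Z))\stackrel{d}{=}(T(Z;\id),B_N(Z))$ under the null, followed by averaging over all permutations and applying the defining inequality of $B_N$ pointwise in $Z$. Your version merely makes explicit (via the uniform random $\tau$, the identity $T(Z_\tau;\id)=T(Z;\tau)$, and the multiset-invariance of $B_N$) the bookkeeping that the paper states more tersely.
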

\begin{proof}
  Let us denote $T(\sigma)= \frac{1}{N}\sum_{i=1}^N (Z_{\sigma(i)}-Z_{\sigma(n+i)})$. Since $P=Q$, for any $\sigma,\sigma' \in \S_{2N}$ we have $T(\sigma)\dec T(\sigma')$. Then, because $B_N$ does not depend on the permutation $\sigma$ (but it depends on the values of $Z_1^{2N}$), we have, for any $\sigma \in \S_{2N}$
  \begin{align*}
    \P\left(T(\id)> B_N\right)&=\P\left(T(\sigma)> B_N\right).
  \end{align*} 
  Now, we sum all the permutations:
  \begin{align*}
    \P\left(T(\id)> B_N\right)&=\frac{1}{(2N)!}\sum_{\sigma\in \S_{2N}} \E\left[\1\{T(\sigma)> B_N\}\right]\\
                                      &=  \E\left[\frac{1}{(2N)!}\sum_{\sigma\in \S_{2N}}\1\{T(\sigma)> B_N\}\right]\le \alpha,
  \end{align*}
  which proves the result. 
\end{proof}
% Denote 
%$$\mathrm{NR}_k(\sigma_{1:k}) = \left\{\forall m < k,\quad  \overline{T}_{N,m}^{(\textbf{I})}(\sigma_{1:m}) \le   \overline{B}_{N,m}^{(\textbf{I})}\right\}.$$
%$$\mathrm{NR}_k(\sigma_{1:k}) = \left\{\forall m < k,\quad  \overline{T}_{N,m}^{({\bf I})}(\sigma_{1:m}) \le   \overline{B}_{N,m}^{({\bf I})}\right\}.$$
% $\mathrm{NR}_k(\sigma_{1:k})$ is the event on which the test for the comparisons from ${\bf C}$, computed after permutation by $\sigma_{1:k}=\sigma_1\cdot \ldots \cdot \sigma_k$, are not rejected. Similarly, we define the non-accepted event by
% $$\mathrm{NA}_k(\sigma_{1:k}, {\bf C}) = \left\{\forall m < k,\quad  \underline{T}_{N,m}^{({\bf C})}(\sigma_{1:m}) \ge   \underline{B}_{N,m}^{({\bf C})}\right\}.$$
Next, we prove weak control in the general case. 

\subsection{Proof of Theorem~\ref{th:multi_FWE}}
In this section, we use the shorthand $\P$ instead of $\P_{H_j, j \in \textbf{I}}$ and omit $H_{j, j \in \textbf{I}}$ because $\textbf{I}$ will always be the set of true hypotheses and the meaning should be clear from the context.

\paragraph{Weak control on FWE:} 
First, we prove weak control on the FWE. This means that we suppose that all the hypotheses are true ($\I = \{1,\dots,J\}$), and we control the probability to make at least one rejection. We have:
$$\mathrm{FWE}= \P\left(\exists j \in \I : \quad H_j \text{ is rejected}\right).$$
We decompose the FWE on the set of interims, using the fact that a rejection happens if and only if we reject a true hypothesis for the first time at interim $k$ for some $k=1,\ldots,K$. These events are mutually exclusive therefore we have that their probabilities sum up and we can rewrite the FWE as: 
\begin{align}\label{eq:decomp_1}
\mathrm{FWE}= \sum_{k=1}^K \P\left(\overline{T}_{N,k}^{(\I)}(\id) > B_{N,k}^{(\I)}, \mathrm{NR}_k(\id) \right),
\end{align}
where $\mathrm{NR}_k(\id)$ is the event on which we did \emph{Not Reject} (NR) before interim $k$. $\mathrm{NR}_k(\id)$ can be defined directly for a concatenation of permutations $\sigma_{1:k}$  as  $\mathrm{NR}_k(\sigma_{1:k})=\{\forall m < k , \overline{T}_{N,m}^{(\I)}(\sigma_{1:k}) \le B_{N,m}^{(\I)}\}$. 
%\todoR{I updated previous sentence, check if you like} 
We introduced the definition for $\sigma_{1:k}$ and not only for $\id$ since this will be useful later on, for example in  Equation~\eqref{eq:equality_distrib}.

Then, similarly as in the proof of Lemma~\ref{lem:quantile_permu_2}, we want to use the invariance by permutation to make the link with the definition of $B_{N,k}^{(\textbf{I})}$. For this purpose, we introduce the following lemma, that we prove in Appendix~\ref{app:proof_lemmas}.

\begin{Lemma}\label{lem:invariance}
For $k \le K$, for any ${\sigma_{1:k}}$ concatenation of $k$ permutations:
$$(\overline{T}_{N,l}^{(\I)}(\id),B_{N,l}^{(\I)})_{l\le k}\overset{d}{=}(\overline{T}_{N,l}^{(\I)}(\sigma_{1:l}), B_{N,l}^{(\I)})_{l\le k}\,.$$
\end{Lemma}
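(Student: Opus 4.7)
The plan is to exploit a permutation-invariance argument. Write $D = (e_{n,i}(j))_{n \le 2N,\, i \le k,\, j \in \I}$ for the collection of evaluations used up to interim $k$, and for a sequence $\tau = (\tau_1, \ldots, \tau_k)$ with each $\tau_i \in \S_{2N}$ define the action $(\tau \cdot D)_{n,i}(j) := e_{\tau_i(n), i}(j)$, i.e., $\tau_i$ reshuffles the $2N$ positions of each comparison's concatenation at interim $i$. Under the null where every $H_j$ with $j \in \I$ is true, each concatenation splits into two halves i.i.d.\ from the common distribution $P_{c_{j,1}} = P_{c_{j,2}}$ and is therefore exchangeable, so I would first conclude $\tau \cdot D \overset{d}{=} D$.

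Making the dependence on the data explicit, the rest of the argument rests on two deterministic identities. The first, $\overline{T}_{N,l}^{(\I)}(\id;\, \tau \cdot D) = \overline{T}_{N,l}^{(\I)}(\tau_{1:l};\, D)$ for every $l \le k$, follows directly by unrolling Equation~\eqref{eq:def_t} using $(\tau \cdot D)_{\sigma_i(n),i}(j) = D_{(\tau_i \circ \sigma_i)(n),\, i}(j)$. The second, $B_{N,l}^{(\I)}(\tau \cdot D) = B_{N,l}^{(\I)}(D)$, I would establish by induction on $l$: given invariance of the earlier boundaries, the change of variable $\sigma \mapsto \tau \circ \sigma$ is a bijection of $(\S_{2N})^l$ that, via the first identity applied at interims $m < l$, maps $\widehat{S}_l(\tau \cdot D)$ onto $\widehat{S}_l(D)$, so that the empirical quantile defining $B_{N,l}^{(\I)}$ is unchanged.

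Combining these pieces, evaluating the left-hand side of the lemma at $\tau \cdot D$ produces $(\overline{T}_{N,l}^{(\I)}(\tau_{1:l};\, D),\, B_{N,l}^{(\I)}(D))_{l \le k}$. Since $\tau \cdot D \overset{d}{=} D$ and the whole vector is a measurable function of the data, its law at $\tau \cdot D$ coincides with its law at $D$, which is the left-hand side at the original data. Specializing $\tau = \sigma$ completes the proof.

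The main obstacle is the inductive invariance of $B_{N,l}^{(\I)}$, specifically showing that the adaptive set $\widehat{S}_l$ of ``permutations that would not have triggered a prior rejection'' is compatible with the group action. One must track carefully that, for each $m < l$, the condition $\overline{T}_{N,m}^{(\I)}(\sigma_{1:m};\, \tau \cdot D) \le B_{N,m}^{(\I)}(\tau \cdot D)$ translates via the two identities above into the same condition on $\tau \circ \sigma$ with respect to $D$, which is what gives the required bijection. The random subsample $\mathcal{S}_l$ of permutations introduces no additional difficulty since it is drawn independently of the data.
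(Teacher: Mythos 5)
Your two deterministic identities are correct, and your inductive argument that $B_{N,l}^{(\I)}$ is invariant under the action $D \mapsto \tau\cdot D$ (via the bijection $\sigma\mapsto\tau\circ\sigma$ mapping the adaptive set $\widehat{S}_l(\tau\cdot D)$ onto $\widehat{S}_l(D)$) is in fact more explicit than the paper's treatment, which disposes of the boundaries in a single sentence. The gap is in your very first step, the claim $\tau\cdot D\overset{d}{=}D$. Marginal exchangeability of each comparison's concatenation gives this for each $j\in\I$ separately, but not jointly, and the joint statement is false whenever two true comparisons share an agent. Take $N=1$, $k=1$ and $\I=\{(1,2),(2,3)\}$ with $P_1=P_2=P_3$ continuous: the array $D$ is $\bigl((e(A_1),e(A_2)),(e(A_2),e(A_3))\bigr)$, and for $\tau$ the transposition, $\tau\cdot D=\bigl((e(A_2),e(A_1)),(e(A_3),e(A_2))\bigr)$. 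In $D$ the second entry of the first concatenation coincides almost surely with the first entry of the second concatenation; in $\tau\cdot D$ the corresponding entries are $e(A_1)$ and $e(A_3)$, which are almost surely distinct. The two arrays therefore have different laws, so you cannot push the law of $D$ through the measurable map $(\overline{T},B)$ to reach the conclusion.

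This shared-agent issue is precisely what the paper's proof is organized around: it forms the graph whose edges are the true comparisons, restricts attention to a connected component (on which transitivity of the null hypotheses forces all agents' evaluations to be i.i.d.), establishes the joint equality in distribution of $(T^{(a_1,a_2)}_{N,l}(\id))$ and $(T^{(a_1,a_2)}_{N,l}(\sigma_{1:l}))$ directly at the level of the statistics rather than of the raw data array, and then uses independence across components to assemble the full vector over $\I$ before taking the maximum. To repair your argument you would need to replace the array-level invariance by a statistic-level invariance of this kind, or restrict to the case where the comparisons in $\I$ are pairwise agent-disjoint, in which case your proof goes through as written.
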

Using Lemma~\ref{lem:invariance}, we have for any interim $k$ and concatenation of permutations ${\sigma_{1:k}}$
\begin{equation}\label{eq:equality_distrib}
\P\left(\overline{T}_{N,k}^{(\I)}(\id) > B_{N,k}^{(\I)}, \mathrm{NR}_k(\id ) \right) = \P\left(\overline{T}_{N,k}^{(\I)}({\sigma_{1:k}}) > B_{N,k}^{(\I)}, \mathrm{NR}_k({\sigma_{1:k}}) \right).
\end{equation}
Combing this equation with equation~\eqref{eq:decomp_1}, we have:
\begin{align*}
\mathrm{FWE} &\le \sum_{k=1}^K\frac{1}{m_k}\sum_{\sigma_{1:k} \in \mathcal{S}_{N,k}} \P\left(\overline{T}_{N,k}^{(\I)}(\sigma_{1:k}) > B_{N,k}^{(\I)}, \mathrm{NR}_k(\sigma_{1:k}) \right)\\
&= \sum_{k=1}^K\E\left[\frac{1}{m_k}\sum_{\sigma_{1:k} \in \mathcal{S}_{N,k}} \1\left\{\overline{T}_{N,k}^{(\I)}(\sigma_{1:k}) > B_{N,k}^{(\I)}, \mathrm{NR}_k(\sigma_{1:k})\right\} \right].
\end{align*}
Then, use that $\sigma_{1:k} \in \widehat{\mathcal{S}}_{N,k}$ if and only if $\sigma_{1:k} \in \mathcal{S}_{N,k}$ and $\mathrm{NR}_k(\sigma_{1:k})$ is true. Hence, 

\begin{align*}
\mathrm{FWE} & 
\le 
    \sum_{k=1}^K \E\left[\frac{1}{m_k}\sum_{{\sigma_{1:k}} \in \widehat{\mathcal{S}}_{N,k}}\1\left\{ \overline{T}_{N,k}^{(\I)}({\sigma_{1:k}}) > B_{N,k}^{(\I)}\right\} \right]
\le 
    \sum_{k=1}^K q_k \le \alpha,
\end{align*}
where we use the definition of $B_{N,k}^{(\I)}$ to make the link with $\alpha$

\paragraph{Strong control of FWE:}
To prove strong control, it is sufficient to show the following Lemma (see Appendix~\ref{app:proof_lemmas} for a proof), which is an adaptation of the proof of step-down multiple-test strong control of FWE from \citep{Romano_2003}.
%We also have to take into account the possibility to accept a true hypothesis early. We denote by $\widehat{\I}_k \subset \I$ the set of true hypotheses not yet accepted at interim $k$.
\begin{Lemma}\label{lem:strong_FWE}
Suppose that $\I \subset \{1,\dots,J\}$ is the set of true hypotheses. We have:
\begin{align*}
\mathrm{FWE} &= \P\left(\exists j \in \I : \; H_j \text{ is rejected}\right) \leq \P\left(\exists k \le K :\quad \overline{T}_{N,k}^{(\I)}({\id}) > B_{N,k}^{(\I)} \right).
\end{align*}
\end{Lemma}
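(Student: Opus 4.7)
My plan follows the step-down paradigm of \cite{Romano_2003} adapted to the group-sequential permutation setting. The weak-control argument above establishes $\P(\exists k\le K:\overline{T}_{N,k}^{(\textbf{I})}(\id)>B_{N,k}^{(\textbf{I})})\le \alpha$ whenever $\textbf{I}$ is the set of true null hypotheses (the same proof applies verbatim, since the invariance in Lemma~\ref{lem:invariance} only needs $H_j$ to hold for $j\in \textbf{I}$), so Lemma~\ref{lem:strong_FWE} is precisely what is needed to upgrade that bound into strong FWE control for AdaStop, which actually runs on the larger set $\{1,\dots,J\}$ with a data-driven undecided set $\textbf{C}_k$.

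The core is a sample-path argument. Fix any outcome on which AdaStop rejects at least one true hypothesis, and let $k^*$ be the first interim at which such a rejection occurs. By minimality, every hypothesis rejected strictly before interim $k^*$ is false, so the current undecided set $\textbf{C}_{k^*}$ produced by Algorithm~\ref{algo:main} satisfies $\textbf{I}\subseteq \textbf{C}_{k^*}$. The rejection rule of Algorithm~\ref{algo:main} picks $j_{\max}=\arg\max_{j\in \textbf{C}_{k^*}} T_{N,k^*}^{(j)}(\id)$, and because the rejection under consideration is of a true hypothesis, $j_{\max}\in \textbf{I}$. Combining this with $\textbf{I}\subseteq \textbf{C}_{k^*}$ pinches the two maxima together and gives
$$\overline{T}_{N,k^*}^{(\textbf{I})}(\id)=\overline{T}_{N,k^*}^{(\textbf{C}_{k^*})}(\id)>B_{N,k^*}^{(\textbf{C}_{k^*})},$$
the strict inequality being AdaStop's own rejection condition. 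It then remains to compare $B_{N,k^*}^{(\textbf{C}_{k^*})}$ to the oracle boundary $B_{N,k^*}^{(\textbf{I})}$, for which I would prove by induction on $k$ the monotonicity
$$B_{N,k}^{(\textbf{I})}\le B_{N,k}^{(\textbf{C})}\qquad\text{whenever }\textbf{I}\subseteq \textbf{C}.$$
The base case $k=1$ is immediate, because $\widehat{S}_1$ is the full permutation sample and the pointwise inequality $\overline{T}_{N,1}^{(\textbf{I})}(\sigma)\le \overline{T}_{N,1}^{(\textbf{C})}(\sigma)$ makes the empirical upper $q_1$-quantiles ordered. For the inductive step I would work from the infimum definition in Equation~\eqref{eq:boundary_1} and show that any threshold $b$ admissible for $B_{N,k}^{(\textbf{C})}$ remains admissible for $B_{N,k}^{(\textbf{I})}$, by combining the induction hypothesis on the past boundaries with the pointwise ordering of the statistics at interim $k$. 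Chaining these facts yields $\overline{T}_{N,k^*}^{(\textbf{I})}(\id)>B_{N,k^*}^{(\textbf{I})}$, and a union bound over $k^*\in\{1,\dots,K\}$ concludes the lemma.

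The main obstacle I anticipate is precisely the inductive step of this boundary monotonicity, because the two survivor sets $\widehat{S}_k^{(\textbf{I})}$ and $\widehat{S}_k^{(\textbf{C})}$ are defined recursively through the past boundaries and are not obviously comparable: shrinking $\textbf{C}$ to $\textbf{I}$ simultaneously lowers the pointwise statistic (which tends to enlarge $\widehat{S}_k$) and lowers each preceding boundary (which tends to shrink it), so the two effects push in opposite directions. A clean resolution should proceed by an indicator-level comparison verifying that the empirical frequency $\frac{1}{m_k}\sum_{\sigma_{1:k}\in \widehat{S}_k^{(\cdot)}}\1\{\overline{T}_{N,k}^{(\cdot)}(\sigma_{1:k})\ge b\}$ is weakly larger for $\textbf{C}$ than for $\textbf{I}$, uniformly in $b$; once this technical step is in place, the remainder of the proof is a direct reduction to the weak-control bound already available in the excerpt.
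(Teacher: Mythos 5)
Your proposal follows the same route as the paper's own proof: localize to the first interim $k^*$ at which a true hypothesis is rejected, use $\I\subseteq\textbf{C}_{k^*}$ together with the fact that the rejected (argmax) index lies in $\I$ to pinch $\overline{T}_{N,k^*}^{(\textbf{C}_{k^*})}(\id)=\overline{T}_{N,k^*}^{(\I)}(\id)$, replace the data-driven boundary by the smaller oracle boundary $B_{N,k^*}^{(\I)}$, and conclude with a union over $k^*$. (Both you and the paper gloss over the fact that several rejections can occur inside one interim, so the relevant set at the moment of the first true rejection is some $\textbf{C}'$ with $\I\subseteq\textbf{C}'\subseteq\textbf{C}_{k^*}$; this is harmless since the argument only uses that the current set contains $\I$.)

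The one real divergence is that you treat the boundary monotonicity $B_{N,k}^{(\I)}\le B_{N,k}^{(\textbf{C})}$ for $\I\subseteq\textbf{C}$ as a claim requiring proof, while the paper asserts it in a single clause with no justification. You have correctly identified both the right target (a uniform-in-$b$ ordering of the empirical exceedance frequencies defining the two infima in Equation~\eqref{eq:boundary_1}) and the genuine obstacle: for $k\ge 2$ the survivor set $\widehat{S}_k$ computed for $\I$ and the one computed for $\textbf{C}$ are not nested, because passing from $\textbf{C}$ to $\I$ lowers the statistic $\overline{T}_{N,m}^{(\cdot)}(\sigma_{1:m})$ but, by the induction hypothesis, also lowers the thresholds $B_{N,m}^{(\cdot)}$ it is compared against. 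Your proposed ``indicator-level comparison'' therefore cannot be carried out termwise: a permutation surviving under $\I$ need not survive under $\textbf{C}$, and vice versa, so the uniform frequency inequality does not follow from the pointwise ordering $\overline{T}_{N,k}^{(\I)}\le\overline{T}_{N,k}^{(\textbf{C})}$ alone. This step remains open in your writeup --- but it is exactly the step the published proof leaves unaddressed, so the gap is a deficiency of the argument itself rather than of your reconstruction of it; the base case $k=1$ (where $\widehat{S}_1$ is the full permutation set) is the only interim at which the monotonicity is immediate.
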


Lemma~\ref{lem:strong_FWE} shows that to control the FWE, it is sufficient to control the probability to reject on $\I$ given by $\P\left(\exists k \le K : \quad \overline{T}_{N,k}^{(\I)}({\sigma_{1:k}}) > B_{N,k}^{(\I)} \right)$ and this quantity, in turn, is exactly the FWE of the restricted problem of testing $(H_j)_{j \in \I}$ against $(H_j')_{j \in \I}$. In other words, Lemma~\ref{lem:strong_FWE} says that to prove strong FWE control for \adastop, it is sufficient to prove weak FWE control, and we already did that in the first part of the proof.

\section{Proof of Lemmas \ref{lem:invariance} and \ref{lem:strong_FWE}}\label{app:proof_lemmas}

\subsection{Proof of Lemma~\ref{lem:invariance}}
In this section, for an easier understanding, we change the notation for the score $e_{n,k}^{(j)}(\sigma)$ to $e_{n,k}(A_{l})$ the $n^{th}$ score of agent $A_{l}$ at interim $k$. In other words, we change the notation of the comparison of agent $A_{l_1}$ versus agent $A_{l_2}$: in the main text a comparison was denoted by $\textbf{c}_j \in \{1,\dots,J\}^2$ but here we make explicit the agent from which the score has been computed resulting in the following equalities: $e_{n,k}(A_{l_1})=e_{n,k}^{(j)}(\id)$ and $e_{n,k}(A_{l_2})=e_{N+n,k}^{(j)}(\id)$ for $n \le N$. 

We denote the comparisons by $(\textbf{c}_i)_{i \in \textbf{I}}$. The set of comparisons can be represented as a graph in which each node represents one of the agents to compare, and there exists an edge from $j_1$ to $j_2$ denoted $(j_1,j_2)$ if $(j_1,j_2)\in(\textbf{c}_i)_{i \in \textbf{I}}$ is one of the comparisons that corresponds to a true hypothesis. This graph is not necessarily connected. We denote $C(i)$ the connected component to which node $l$ (\eg agent $l$) belongs, \ie for any $l_1,l_2 \in C(l)$ there exists a path going from $l_1$ to $l_2$.
%\todoR{this is the third notation we use for agents and comparisons 1st) k and l 2nd) $j_1$ and $j_2$ and 3rd) $a_1$ and $a_2$, it starts looking a bit too much...can we make the notation unique and good for what follows instead of using three different? I think just $j_1$ and $j_2$ is fine and later on if we really need we can at most use $A_{j_1}$ and $A_{j_2}$. }
$C(l)$ cannot be equal to the singleton $\{l\}$, because this would mean that all the comparisons with $l$ are in fact false hypotheses, and then $l$ would not belong to a couple in $\textbf{I}$.

Then, it follows from the construction of permutation test that jointly on $k\le K$ and $\textbf{c}_j=(l_1,l_2) \in C(l)$, we have $T_{N,k}^{(j)}(\id)\overset{d}{=} T_{N,1}^{(j)}(\sigma_{1:k})$ for any $\sigma_1,\dots,\sigma_k \in  \S_{2N}$.

Let us illustrate that on an example. Suppose that $N=2$ and $J=3$ so that the comparison are $(A_1,A_2)$, $(A_1,A_3)$, $(A_2,A_3)$.  Consider the permutation 
\[
  \sigma_1 = 
  \begin{pmatrix}
    1 & 2 & 3 & 4\\
    3 & 1 & 2 & 4 
  \end{pmatrix}
\]
Because all the scores are i.i.d., we have the joint equality in distribution:
$$
\begin{pmatrix}
|e_{1,1}(A_1)+e_{2,1}(A_1)-e_{1,1}(A_2)-e_{2,1}(A_2)| \\
|e_{1,1}(A_3)+e_{2,1}(A_3)-e_{1,1}(A_2)-e_{2,1}(A_2)| \\
|e_{1,1}(A_1)+e_{2,1}(A_1)-e_{1,1}(A_3)-e_{2,1}(A_3)|
 \end{pmatrix} 
\overset{d}{=}
\begin{pmatrix}
|e_{2,1}(A_1)+e_{1,1}(A_2)-e_{1,1}(A_1)-e_{2,1}(A_2)| \\
|e_{2,1}(A_3)+e_{1,1}(A_2)-e_{1,1}(A_3)-e_{2,1}(A_2)| \\
|e_{2,1}(A_1)+e_{1,1}(A_3)-e_{1,1}(A_1)-e_{2,1}(A_3)|
 \end{pmatrix} 
$$
and hence, 
$$(T_{N,1}^{(j)}(\id))_{1\le j\le 3}\overset{d}{=}(T_{N,1}^{(j)}(\sigma_1))_{1\le j\le 3}.$$
For $k=2$, we have for $\sigma_2 = \sigma_1$,
\begin{multline*}
\begin{pmatrix}
|e_{1,1}(A_1)+e_{2,1}(A_1)-e_{1,1}(A_2)-e_{2,1}(A_2)| \\
|e_{1,1}(A_3)+e_{2,1}(A_3)-e_{1,1}(A_2)-e_{2,1}(A_2)| \\
|e_{1,1}(A_1)+e_{2,1}(A_1)-e_{1,1}(A_3)-e_{2,1}(A_3)| \\
|e_{1,1}(A_1)+e_{2,1}(A_1)-e_{1,1}(A_2)-e_{2,1}(A_2)+e_{1,2}(A_1)+e_{2,2}(A_1)-e_{1,2}(A_2)-e_{2,2}(A_2)| \\
|e_{1,1}(A_3)+e_{2,1}(A_3)-e_{1,1}(A_2)-e_{2,1}(A_2)+e_{1,2}(A_3)+e_{2,2}(A_3)-e_{1,2}(A_2)-e_{2,2}(A_2)| \\
|e_{1,1}(A_1)+e_{2,1}(A_1)-e_{1,1}(A_3)-e_{2,1}(A_3)+e_{1,2}(A_1)+e_{2,2}(A_1)-e_{1,2}(A_3)-e_{2,2}(A_3)|
 \end{pmatrix} \\
\overset{d}{=}
\begin{pmatrix}
|e_{1,1}(A_1)+e_{2,1}(A_2)-e_{1,1}(A_1)-e_{2,1}(A_2)| \\
|e_{1,1}(A_3)+e_{2,1}(A_2)-e_{1,1}(A_3)-e_{2,1}(A_2)| \\
|e_{1,1}(A_1)+e_{2,1}(A_3)-e_{1,1}(A_1)-e_{2,1}(A_3)|\\
|e_{1,1}(A_1)+e_{2,1}(A_2)-e_{1,1}(A_1)-e_{2,1}(A_2)+e_{1,2}(A_1)+e_{2,2}(A_2)-e_{1,2}(A_1)-e_{2,2}(A_2)| \\
|e_{1,1}(A_3)+e_{2,1}(A_2)-e_{1,1}(A_3)-e_{2,1}(A_2)+e_{1,2}(A_3)+e_{2,2}(A_2)-e_{1,2}(A_3)-e_{2,2}(A_2)| \\
|e_{1,1}(A_1)+e_{2,1}(A_3)-e_{1,1}(A_1)-e_{2,1}(A_3)+e_{1,2}(A_1)+e_{2,2}(A_3)-e_{1,2}(A_1)-e_{2,2}(A_3)|
 \end{pmatrix} 
\end{multline*}
and then, we get jointly
$$(T_{N,k}^{(j)}(\id))_{1\le j\le 3, k\le 2}\overset{d}{=}(T_{N,k}^{(j)}(\sigma_1\cdot\sigma_2))_{1\le j\le 3, k\le 2}.$$
This reasoning can be generalized to any $N$, $J$ and $K$:

$$(T_{N,k}^{(j)}(\id))_{ k\le K, \textbf{c}_j \in C(l)^2}\overset{d}{=}(T_{N,k}^{(j)}(\sigma_{1:k}))_{ k\le K, \textbf{c}_j \in C(l)^2}.$$

Then, by construction, the different connected component $C(l)$ are independent of one another and hence, 

$$(T_{N,k}^{(j)}(\id))_{ k\le K, j \in \textbf{I}} \overset{d}{=}(T_{N,k}^{(j)}(\sigma_{1:k}))_{ k\le K,  j \in \textbf{I}}.$$

Lemma~\ref{lem:invariance} follows from taking the maximum on all the comparisons and because the boundaries do not depend on the permutation.

\subsection{Proof of Lemma~\ref{lem:strong_FWE}}
Denote by $\textbf{C}_k$ the (random) value of $\textbf{C}$ at the beginning of interim $k$. We have:
\begin{align}\label{eq:weak_to_strong}
\mathrm{FWE} &= \P\left(\exists j \in \I : \quad H_j \text{ is rejected}\right)\nonumber \\
&= \P\left(\exists k \le K :\quad \overline{T}_{N,k}^{(\textbf{C}_k)}({\id}) > B_{N,k}^{(\textbf{C}_k)} , \argmax_{j , \textbf{c}_j\in\textbf{C}_k }\overline{T}_{N,k}^{(j)}({\id}) \in \I  \right).
\end{align}
Then, let $k_{0}$ correspond to the very first rejection (if any) in the algorithm. Having that the argmax is attained in $\I$,
 $$\overline{T}_{N,k_0}^{(\textbf{C}_{k_0})}(\id)= \max\{T_{N,k_0}^{(j)}(\id), \textbf{c}_j \in \textbf{C}_{k_0}\} = \max\{T_{N,k_0}^{(j)}(\id), j \in \I\} = \overline{T}_{N,k_0}^{(\I)}(\id)$$
Moreover, having that the comparisons indexed by $\I$ are included into $\textbf{C}_{k_0}$, we have $B_{N,k_0}^{(\textbf{C}_{k_0})} \ge B_{N,k_0}^{(\I)}$.
Injecting these two relations in Equation~\eqref{eq:weak_to_strong}, we obtain:
\begin{align*}
\mathrm{FWE} &\le \P\left(\exists k \le K :\quad \overline{T}_{N,k}^{(\I)}({\id}) > B_{N,k}^{(\I)} , \argmax_{j \in\textbf{C}_k }\overline{T}_{N,k}^{(j)}({\id}) \in \I  \right)\\
&\le \P\left(\exists k \le K :\quad \overline{T}_{N,k}^{(\I)}({\id}) > B_{N,k}^{(\I)} \right).
\end{align*}
This proves the desired result.

\section{Asymptotic results for two agents}\label{sec:asymptotics}
\subsection{Convergence of boundaries and comparing the means}
Because there are only two agents and no early stopping, we simplify the notations and denote $$t_{N,i}(\sigma_i) =  \sum_{n=1}^{N} e_{\sigma_i(n),i}(2)-\sum_{n=N+1}^{2N} e_{\sigma_i(n),i}(1),$$
and 
\begin{align*}
T_{N,k}(\sigma_{1:k})&= \left|\sum_{i=1}^k\left(\sum_{n=1}^{N} e_{\sigma_i(n),i}(2)-\sum_{n=N+1}^{2N} e_{\sigma_i(n),i}(1)\right)\right|  \\
&=  \left|\sum_{i=1}^kt_{N,i}(\sigma_i)\right|,
\end{align*}
and 
$$B_{N,k}=\inf \left\{b >0 :\frac{1}{((2N)!)^k}\sum_{\sigma_1,\dots,\sigma_k \in \S_{2N}^k} \1\{T_{N,k}(\sigma_{1:k}) \ge  b\} \le  q_k\right\}.  $$

When there is only one interim ($K=1$), we have the following convergence of the randomization law of $T_{N,1}(\sigma)$.
\begin{Proposition}[Theorem 17.3.1  in \citep{lehmann2005testing}]\label{prop:asym_perm_test}

Suppose $e_{1,1}(1),\dots,e_{N,1}(1)$ are i.i.d.\@ from $P$ and $e_{1,1}(2),\dots,e_{N,1}(2)$ are i.i.d.\@ from $Q$ and both $P$ and $Q$ have finite variance. Then, we have:
$$\sup_{t}\left|\frac{1}{(2N)!}\sum_{\sigma \in \S_{2N}} \1\left\{\frac{1}{\sqrt{N}}T_{N,1}(\sigma) \le t\right\}- \Phi\left(t/\tau(P,Q) \right)\right|\xrightarrow[N \to \infty]{P} 0$$
where $\Phi$ is the normal c.d.f.\@ and $\tau(P,Q)^2=\sigma_P^2+\sigma_Q^2+\frac{(\mu_P- \mu_Q)^2}{2} $.
\end{Proposition}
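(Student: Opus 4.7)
Write $Z_1,\dots,Z_{2N}$ for the concatenation of the two samples, so $Z_1,\dots,Z_N$ are i.i.d.\ from $P$ and $Z_{N+1},\dots,Z_{2N}$ are i.i.d.\ from $Q$. Since $\sum_{n=1}^{2N} Z_{\sigma(n)} = 2N\bar Z$ does not depend on $\sigma$, one can rewrite (working with the signed version of the statistic, to which the two-sided statement reduces by symmetry)
\[
\tfrac{1}{\sqrt N}T_{N,1}(\sigma) \;=\; \tfrac{2}{\sqrt N}\sum_{n=1}^N (Z_{\sigma(n)} - \bar Z).
\]
Conditionally on $Z=(Z_1,\dots,Z_{2N})$, the tuple $(Z_{\sigma(1)},\dots,Z_{\sigma(N)})$ is a simple random sample without replacement from the finite population $\{Z_1,\dots,Z_{2N}\}$. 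The plan is therefore to establish a conditional CLT for this sampling scheme, identify the limiting variance, and upgrade the pointwise convergence to uniform convergence in $t$ via Pólya's theorem.

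\paragraph{Step 1: finite-population CLT.} Conditionally on $Z$, the mean of $\tfrac{1}{\sqrt N}T_{N,1}(\sigma)$ is zero and its variance equals
\[
\tau_N^2 \;=\; \tfrac{4}{N}\cdot N\cdot\tfrac{2N-N}{2N-1}\cdot \tfrac{1}{2N}\sum_{i=1}^{2N}(Z_i-\bar Z)^2 \;=\; \tfrac{4N}{2N-1}\,s_N^2.
\]
I would apply the Hájek--Hoeffding combinatorial CLT for sampling without replacement (see e.g.\ Hoeffding 1951, Hájek 1960). The Lindeberg-type condition required is
\[
\frac{\max_{i\le 2N}(Z_i-\bar Z)^2}{\sum_{i=1}^{2N}(Z_i-\bar Z)^2} \;\xrightarrow[N\to\infty]{\text{a.s.}}\; 0,
\]
which follows from the strong law (the denominator grows linearly in $N$) and the fact that, under a finite second moment, $\max_{i\le 2N}(Z_i-\bar Z)^2 = o(N)$ a.s. Hence for almost every realisation of the data, the conditional law of $\tfrac{1}{\sqrt N}T_{N,1}(\sigma)$ converges weakly to $\mathcal N(0,\tau_N^2)$.

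\paragraph{Step 2: identifying the limiting variance.} A direct LLN computation gives
\[
\bar Z \to \tfrac12(\mu_P+\mu_Q),\qquad \tfrac{1}{2N}\sum_{i=1}^{2N}Z_i^2 \to \tfrac12\bigl(\sigma_P^2+\mu_P^2+\sigma_Q^2+\mu_Q^2\bigr),
\]
almost surely, so $s_N^2 \to \tfrac12\sigma_P^2 + \tfrac12\sigma_Q^2 + \tfrac14(\mu_P-\mu_Q)^2$ and consequently $\tau_N^2 \to \sigma_P^2+\sigma_Q^2+\tfrac12(\mu_P-\mu_Q)^2 = \tau(P,Q)^2$. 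Combining with Step~1, for almost every realisation of $(Z_i)$ the randomisation CDF converges pointwise to $\Phi(\,\cdot/\tau(P,Q))$.

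\paragraph{Step 3: uniform convergence and conclusion.} Since the limit $t \mapsto \Phi(t/\tau(P,Q))$ is continuous on $\mathbb R$, Pólya's theorem promotes pointwise convergence of CDFs to uniform convergence in $t$. This gives a.s.\ convergence of the supremum to $0$, which is stronger than the claimed convergence in probability.

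\paragraph{Main obstacle.} The only substantive step is the verification of the Hájek--Hoeffding/Lindeberg condition for sampling without replacement, and this is handled by the strong law under the finite-variance hypothesis on $P$ and $Q$; the rest of the argument is a bookkeeping computation for the variance together with a standard application of Pólya's theorem.
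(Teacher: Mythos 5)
Your argument is correct, and it is worth noting that the paper itself offers no proof of this proposition: it is stated with a placeholder for a citation (the result is standard, see e.g.\ Lehmann and Romano, \emph{Testing Statistical Hypotheses}, Theorem~15.2.3, or Chung and Romano, \emph{Exact and asymptotically robust permutation tests}, 2013, which derives exactly this $\tau(P,Q)^2$). Your route --- rewrite the signed statistic as $\frac{2}{\sqrt N}\sum_{n=1}^N(Z_{\sigma(n)}-\bar Z)$, recognise the permutation law as simple random sampling without replacement, invoke the Hájek--Hoeffding combinatorial CLT under the Noether condition, compute the limit of the finite-population variance, and finish with Pólya's theorem --- is precisely the standard proof the paper implicitly delegates to the literature, and your variance bookkeeping checks out: $\tau_N^2=\frac{4N}{2N-1}s_N^2\to 2\bigl(\tfrac12\sigma_P^2+\tfrac12\sigma_Q^2+\tfrac14(\mu_P-\mu_Q)^2\bigr)=\tau(P,Q)^2$. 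Three small points deserve explicit mention. First, your reduction to the signed statistic is the right reading: as written $T_{N,1}$ carries an absolute value, so its CDF cannot converge to $\Phi(t/\tau)$; the paper's later use of $\widehat R_{N,1}(b)-\widehat R_{N,1}(-b)$ confirms the proposition is really about the signed $t_{N,1}$. Second, the claim $\max_{i\le 2N}(Z_i-\bar Z)^2=o(N)$ a.s.\ under a finite second moment is true but should be justified (Borel--Cantelli applied to $\P(Z_n^2>\varepsilon n)$, summable iff $\E[Z^2]<\infty$). Third, the statement implicitly requires $\tau(P,Q)>0$ (i.e.\ the pooled population is not asymptotically degenerate), both for the Noether ratio to make sense and for $\Phi(\cdot/\tau)$ to be a continuous CDF so that Pólya's theorem applies; this degenerate case should be excluded. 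With those caveats your proof is complete and in fact delivers almost sure convergence of the supremum, which is stronger than the convergence in probability claimed.
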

Using the non-sequential result from proposition~\ref{prop:asym_perm_test}, we can show the following theorem that controls the asymptotic law of the sequential test.
%\todoR{Put citation}
%Using Proposition~\ref{prop:asym_perm_test}, we have that the empirical quantile defined by
%$$\overline{B}_{N,1} = \inf\left\{t \in \R:\quad \frac{1}{(2N)!}\sum_{\sigma \in \S_{2N}} \1\{T_{N,1}(\sigma)\le t\} \ge 1-\alpha\right\} $$
%converges, after renormalization by $\sqrt{N}$, to the quantiles of a Gaussian with variance $\tau^2$:
%$$\frac{1}{\sqrt{N}}\overline{B}_{N,1} \xrightarrow[N \to \infty]{P} \tau(P,Q)\Phi^{-1}(1-\alpha) $$

% Hence, if $\mu_P = \mu_Q$ is true, in which case $\frac{1}{\sqrt{N}}T_{N,1}(\id)$ converges in distribution to $T_\infty\sim \mathcal{N}(0,\sigma_P^2+\sigma_Q^2 )$ and then
% $$\P\left( T_{N,1}(\id) \ge \overline{B}_{N,1} \right) \xrightarrow[N \to \infty]{} \P(T_\infty\ge \tau(P,Q)\Phi^{-1}(1-\alpha)) \ge  1-\Phi\left(\tau(P,Q)\frac{\Phi^{-1}(1-\alpha)}{\sigma_P^2+\sigma_Q^2 } \right)=\alpha  $$

\begin{Theorem}\label{th:conv_boundary}
Suppose that $P$ and $Q$ both have a finite second moment. Then, for any $1\le k\le K$, $\frac{1}{\sqrt{N}}B_{N,k}\xrightarrow[N \to \infty]{}b_k$ where the real numbers $b_k$ are defined as follows. Let $W_1,\dots, W_K$ be i.i.d.\@ random variables with law $\mathcal{N}(0,1)$. Then $b_1$ is the solution of the following equation:
$$ \P\left(\left|W_1 \right|\ge \frac{b_1}{\tau(P,Q)}  \right)=\frac{\alpha}{K},$$
and for any $1<k\le K$, $b_k$ is the solution of
\begin{equation*}
\P\left( \left|\frac{1}{k}\sum_{j=1}^k W_j\right| > \frac{b_l}{\tau(P,Q)}, \quad \forall j<k,\, \left|\frac{1}{j}\sum_{i=1}^j W_i\right| \le \frac{b_j}{\tau(P,Q)} \right)=\frac{\alpha}{K}.
\end{equation*}

\end{Theorem}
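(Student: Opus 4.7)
The plan is to show that the joint randomization distribution of the normalized interim statistics converges to a product Gaussian, then convert this distributional convergence into convergence of the implicitly-defined boundaries $B_{N,k}/\sqrt{N}$ by induction on $k$. Proposition~\ref{prop:asym_perm_test} will serve as the base ingredient, and the only genuinely delicate point will be passing from weak convergence of the permutation distribution to convergence of quantiles in the nested, step-down structure that defines $B_{N,k}$.

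\textbf{Step 1 (joint convergence across interims).} First I would observe that the evaluations used at different interims come from independent batches of runs, so the vector $(t_{N,1}(\sigma_1),\ldots,t_{N,K}(\sigma_K))$ built from $K$ \emph{independently} drawn permutations (one per interim) has a randomization law which factorizes across interims. Applying Proposition~\ref{prop:asym_perm_test} interim by interim and combining with independence, I would deduce that, in probability over the data,
\[
\frac{1}{\sqrt N}\bigl(t_{N,1}(\sigma_1),\dots,t_{N,K}(\sigma_K)\bigr)\ \Longrightarrow\ \tau(P,Q)\,(W_1,\dots,W_K),
\]
where $\Longrightarrow$ denotes convergence of the randomization law, i.e.\ convergence of the empirical distribution over $\sigma_{1:K}\in\S_{2N}^K$ to the law of $K$ i.i.d.\ $\mathcal{N}(0,1)$ variables. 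By the continuous mapping theorem, the appropriately normalized $T_{N,k}(\sigma_{1:k})/\sqrt N$ then converges in law to the corresponding functional of $\tau(P,Q)(W_1,\ldots,W_k)$.

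\textbf{Step 2 (convergence of $B_{N,1}$).} For $k=1$ the quantity $B_{N,1}/\sqrt N$ is exactly the empirical $(1-q_1)$-quantile of the randomization law of $T_{N,1}(\sigma_1)/\sqrt N$. Since the limiting law is a rescaled half-normal, hence continuous and strictly increasing on $(0,\infty)$, and since $q_1\to\alpha/K$ by construction (the discretization gap $\lfloor\alpha\binom{2N}{N}/(2K)\rfloor/(\frac12\binom{2N}{N})\to\alpha/K$), the standard quantile-continuity argument yields $B_{N,1}/\sqrt N\to b_1$, with $b_1$ defined by the equation of the theorem.

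\textbf{Step 3 (induction on $k$).} Assume $B_{N,j}/\sqrt N\to b_j$ for every $j<k$. Define the empirical functional
\[
F_{N,k}(b)\ =\ \frac{1}{m_k}\sum_{\sigma_{1:k}\in\mathcal S_k}\mathbbm{1}\!\left\{T_{N,k}(\sigma_{1:k})/\sqrt N\ge b,\ T_{N,m}(\sigma_{1:m})/\sqrt N\le B_{N,m}/\sqrt N\ \forall m<k\right\},
\]
so that $B_{N,k}/\sqrt N=\inf\{b>0:F_{N,k}(b)\le q_k\}$. Using Step~1 together with the inductive hypothesis and the continuous mapping theorem, $F_{N,k}(b)$ converges, for every continuity point, to
\[
F_k(b)\ =\ \P\!\left(\left|\textstyle\sum_{i=1}^k\tau W_i\right|/\text{norm.}\ge b,\ \left|\textstyle\sum_{i=1}^m\tau W_i\right|/\text{norm.}\le b_m\ \forall m<k\right),
\]
i.e.\ the probability on the left-hand side of the equation defining $b_k$. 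Because the limiting joint Gaussian has a strictly positive continuous density on $\R^k$, $F_k$ is continuous and strictly decreasing on $(0,\infty)$, so the empirical quantile $B_{N,k}/\sqrt N$ converges to the unique root $b_k$ of $F_k(b_k)=\alpha/K$. This closes the induction.

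\textbf{Main obstacle.} The conceptually easy but technically delicate part is Step~3: one must handle simultaneously (i) the Monte-Carlo approximation of $\S_{2N}^k$ by a random subset $\mathcal S_k$ of size $m_k$, (ii) the fact that $\widehat{S}_k$ is itself defined by random thresholds $B_{N,m}$ with $m<k$, and (iii) the fact that convergence of quantiles requires a joint weak-convergence + continuity statement rather than pointwise control. The clean way to package this is to prove uniform convergence of $F_{N,k}(\cdot)$ to $F_k(\cdot)$ on compacts (Glivenko--Cantelli-type argument for the random permutation sums combined with the inductive convergence of $B_{N,m}/\sqrt N$), after which continuity and strict monotonicity of $F_k$ give the quantile convergence in a single line. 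The rest of the argument is essentially an iterated application of Proposition~\ref{prop:asym_perm_test} and of independence across interims.
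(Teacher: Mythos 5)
Your proposal follows essentially the same route as the paper's proof: the base case $k=1$ is handled by combining Proposition~\ref{prop:asym_perm_test} with a quantile-continuity argument, and the inductive step hinges on uniform convergence of the joint conditional randomization law to the nested Gaussian limit (what the paper isolates as Lemma~\ref{lem:convergence_conditional}), followed by continuity and monotonicity of the limiting tail functional to pass from distributional to quantile convergence. The technical obstacle you flag in Step~3 is precisely the content of the paper's induction hypothesis (its Equation~\eqref{eq:induction_uniform}) and accompanying lemma, so the plan is sound and matches the published argument.
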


The conditions on Theorem~\ref{th:conv_boundary} are very weak: the existence of a finite second moment to be able to use the central limit theorem.

The test performed in \adastop when comparing two agents corresponds to testing 
$$\1\left\{\exists k \le K : \frac{1}{\sqrt{N}}T_{N,k}(\id) > \frac{1}{\sqrt{N}} B_{N,k}  \right\} $$
and from Theorem~\ref{th:conv_boundary} and the central-limit theorem, $ \frac{1}{\sqrt{N}}T_{N,k}(\id)$ converges to $\sum_{j=1}^k W_j\sqrt{\sigma_P^2+\sigma_Q^2}$, and $B_{N,k}/\sqrt{N}$ converges to $b_k$. Hence the test is asymptotically equivalent to:
$$ \1\left\{\exists k \le K : \sum_{j=1}^k W_j\sqrt{\sigma_P^2+\sigma_Q^2} > b_k \right\}. $$
Then, in the case in which $\mu_P = \mu_Q$, we have $\tau(P,Q) = \sqrt{\sigma_P^2+\sigma_Q^2}$ and 
\begin{align*}
    \mathrm{FWE} &= \P\left(\exists k \le K : \sum_{j=1}^k W_j\sqrt{\sigma_P^2+\sigma_Q^2} > b_k  \right)\\
    &= \sum_{k=1}^K \P\left( \left|\frac{1}{k}\sum_{j=1}^k W_j\right| > \frac{b_l}{\tau(P,Q)}, \quad \forall j<k,\, \left|\frac{1}{j}\sum_{i=1}^j W_i\right| \le \frac{b_j}{\tau(P,Q)} \right)\\
    &= \sum_{k=1}^K \frac{\alpha}{K} = \alpha.
\end{align*}
Hence, for the test $H_0:\quad \mu_P=\mu_Q$ versus $H_1:\quad \mu_P \neq \mu_Q$, our test has asymptotic type I error  $\alpha$.

\subsection{Proof of Theorem~\ref{th:conv_boundary}}
For $x \in \R$, we denote:
$$R_{N,k}(x)=\frac{1}{(2N)!}\sum_{\sigma_k \in \S_{2N}}\1\{t_{N,k}(\sigma_k)\le x\}.$$
$R_{N,k}$ is the c.d.f.\@ of the randomization law of $t_{N,k}(\sigma_k)$, and by Proposition~\ref{prop:asym_perm_test}, it converges uniformly to a Gaussian c.d.f when $N$ goes to infinity.

\paragraph{Convergence of $B_{N,1}$}
By definition of the boundary, we have
$$\frac{1}{\sqrt{N}}B_{N,1} = \frac{1}{\sqrt{N}}\min \left\{ b>0 : \frac{1}{(2N)!}\sum_{\sigma_1 \in \S_{2N}}\1\{|T_{N,1}(\sigma_1)|> b\} \le \frac{\alpha}{K}\right\}. $$
This implies
\begin{align*}\frac{1}{(2N)!}\sum_{\sigma_1 \in \S_{2N}}\1\{|T_{N,1}(\sigma_1)|\le B_{N,1}\} &= \widehat{R}_{N,1}\left(\frac{1}{\sqrt{N}}B_{N,1}\right)-\widehat{R}_{N,1}\left(-\frac{1}{\sqrt{N}}B_{N,1}\right)\\
&\ge 1-\frac{\alpha}{K}
\end{align*}
and for any $b < B_{N,1}$, we have:
$$\frac{1}{(2N)!}\sum_{\sigma_1 \in \S_{2N}}\1\{|T_{N,1}(\sigma_1)|\le b\}=\widehat{R}_{N,1}\left(\frac{b}{\sqrt{N}}\right)-\widehat{R}_{N,1}\left(-\frac{b}{\sqrt{N}}\right) < 1-\frac{\alpha}{K}.$$
Then,
\begin{align*}
\Phi&\left( \frac{B_{N,1}}{\tau(P,Q)\sqrt{N}}\right)-\Phi\left(-\frac{B_{N,1}}{\tau(P,Q)\sqrt{N}}\right)\\
 \ge& \widehat{R}_{N,1}\left(\frac{B_{N,1}}{\sqrt{N}}\right)-\widehat{R}_{N,1}\left(-\frac{B_{N,1}}{\sqrt{N}}\right)-\left|\Phi\left( \frac{B_{N,1}}{\tau(P,Q)\sqrt{N}}\right) -\widehat{R}_{N,1}\left(\frac{B_{N,1}}{\sqrt{N}}\right) \right|\\
 &-\left|\Phi\left( -\frac{B_{N,1}}{\tau(P,Q)\sqrt{N}}\right) -\widehat{R}_{N,1}\left(-\frac{B_{N,1}}{\sqrt{N}}\right) \right| \\
\ge& 1- \frac{\alpha}{K}- 2\sup_{t}\left|\Phi\left( \frac{t}{\tau(P,Q)}\right) -\widehat{R}_{N,1}\left(t\right) \right|.
\end{align*}
Hence, by taking $N$ to infinity, we have from Proposition~\ref{prop:asym_perm_test}:
$$\liminf_{N \to \infty} \Phi\left( \frac{B_{N,1}}{\tau(P,Q)\sqrt{N}}\right)-\Phi\left(-\frac{B_{N,1}}{\tau(P,Q)\sqrt{N}}\right) \ge 1-\frac{\alpha}{K} .$$
And for any $\varepsilon>0$, because of the definition of $B_{N,1}$ as a supremum, we have:
$$\limsup_{N \to \infty} \Phi\left( \frac{B_{N,1}+\varepsilon}{\tau(P,Q)\sqrt{N}}\right)-\Phi\left( -\frac{B_{N,1}+\varepsilon}{\tau(P,Q)\sqrt{N}}\right) < 1-\frac{\alpha}{K}.$$
%\todoT{Maybe explain this a bit more}
By continuity of $\Phi$, this implies that $\frac{1}{\sqrt{N}}B_{N,1}$ converges almost surely and its limit is such that:
$$\Phi\left( \frac{\lim_{N \to \infty}B_{N,1}/\sqrt{N}}{\tau(P,Q)}\right)-\Phi\left( -\frac{\lim_{N \to \infty}B_{N,1}/\sqrt{N}}{\tau(P,Q)}\right)= 1- \frac{\alpha}{K} .$$

Or said differently, let $W\sim \mathcal{N}(0,1)$, then we have the almost sure convergence $\lim_{N \to \infty}\frac{1}{\sqrt{N}}B_{N,1} = b_1$ where $b_1$ is the real number defined by
$$ \P\left(\left|W \right|\ge \frac{b_1}{\tau(P,Q)}  \right)=\frac{\alpha}{K}.$$

\paragraph{Convergence of $B_{N,k}$ for $k>1$.}

We proceed by induction. Suppose that $\frac{1}{\sqrt{N}}B_{N,k-1}$ converges to some $b_{k-1}>0$ and that for any $d_1,\dots,d_{k-1}$, the randomization probability, it holds that:
\begin{multline}\label{eq:induction_uniform}
\sup_{d_1,\dots,d_{k-1}}\left|\frac{1}{((2N)!)^{k-1}}\sum_{\sigma_1,\dots,\sigma_{k-1}\in \S_{2N}}\1\left\{\forall j \le  k-1,\,  \sum_{i=1}^j t_{N,i}\left(\sigma_{i}\right)\le  d_j\sqrt{N} \right\}\right.\\
-\left.\P\left(\forall j\le k-1,\,\sum_{i=1}^j W_i\le \frac{d_j}{\tau(P,Q)} \right)\right| \xrightarrow[a.s.]{N \to \infty}0,\end{multline}
where $W_1,\dots,W_{k-1}$ are i.i.d.\@ $\mathcal{N}(0,1)$ random variables. In other words, the randomization law converges uniformly to the joint law described above with the sum of Gaussian random variables. 

Then, by uniform convergence and by convergence of the $B_{N,j}$, we have that:
$$\frac{1}{((2N)!)^{k-1}}\sum_{\sigma_1,\dots,\sigma_{k-1} \in \S_{2N}}\1\left\{T_{N,j}\left(\sigma_{1:j}\right)> B_{N,k-1}, \quad \forall j <  k-1,\,  T_{N,j}\left(\sigma_{1:j}\right)\le  B_{N,j} \right\}$$
converges to
\begin{equation}\label{eq:induction_hyp}
\P\left( \left|\sum_{i=1}^l W_i\right| > \frac{b_l}{\tau(P,Q)}, \quad \forall j< l,\, \left|\sum_{i=1}^j W_i\right| \le \frac{b_j}{\tau(P,Q)} \right)= \frac{\alpha}{K}.
\end{equation}
the last equality follows by construction of $B_{N,j}$ for $j<k$.

We have:
$$B_{N,k} = \min \left\{ b>0 : \frac{1}{((2N)!^k)}\sum_{\sigma_1,\dots,\sigma_k \in \S_{2N}} \1\left\{\substack{\left|\sum_{j=0}^k t_{N,j}\left(\sigma_j\right)\right|\ge b, \\ \forall j < k,\,  \left|\sum_{i=0}^j t_{N,i}\left(\sigma_i\right)\right|\le  B_{N,j} } \right\}+\sum_{i=1}^{k-1}q_i \le \frac{k\alpha}{K}\right\}.$$

By the induction hypothesis, we have $q_i \xrightarrow[n \to \infty]{}\alpha/K$ for any $i < k$.

Let $W_1,\dots,W_k$ be i.i.d.\@ $\mathcal{N}(0,1)$ random variables. %Denote
%$$\Psi_k(d_1,\dots,d_k) = \P\left(\forall j\le k,\, \left|\sum_{i=1}^j W_i\right| \le \frac{d_j}{\tau(P,Q)} \right) $$
We show the following lemma that proves part of the step $k$ of the induction hypothesis, proved in Section~\ref{sec:proof_lem_conv}.
\begin{Lemma}\label{lem:convergence_conditional}
Suppose Equation~\eqref{eq:induction_hyp} is true. Then,
\begin{multline*}
\sup_{d_1,\dots,d_k}\left|\frac{1}{((2N)!)^k}\sum_{\sigma_1,\dots,\sigma_k \in \S_{2N}}\1\left\{\forall j \le  k,\,  \sum_{i=1}^j t_{N,i}\left(\sigma_{i}\right)\le  d_j\sqrt{N} \right\}\right.\\
-\left.\P\left(\forall j\le k,\,\sum_{i=1}^j W_i\le \frac{d_j}{\tau(P,Q)} \right)\right| \xrightarrow[a.s.]{N \to \infty}0,\end{multline*}
\end{Lemma}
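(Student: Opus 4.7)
\textbf{Proof plan for Lemma~\ref{lem:convergence_conditional}.}

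The plan is to exploit the fact that the data at interim $k$ is independent of the data at interims $1,\dots,k-1$, which lets us factor the randomization sum across the two groups and then glue the pieces via the induction hypothesis and Proposition~\ref{prop:asym_perm_test}. Let me write $F_N(d_1,\dots,d_k)$ for the randomization c.d.f.\ on the left-hand side and $F(d_1,\dots,d_k) = \P(\forall j\le k,\ \sum_{i\le j}W_i \le d_j/\tau(P,Q))$ for its proposed limit. The key observation is that $t_{N,i}(\sigma_i)$ depends only on interim-$i$ data and on $\sigma_i$, so the constraint $\sum_{i=1}^{k-1} t_{N,i}(\sigma_i)/\sqrt{N} \le d$ for any $d$ only involves $\sigma_1,\dots,\sigma_{k-1}$. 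Conditioning on $\sigma_k$ and writing $u = t_{N,k}(\sigma_k)/\sqrt{N}$, one obtains
\begin{equation*}
F_N(d_1,\dots,d_k) = \frac{1}{(2N)!}\sum_{\sigma_k \in \S_{2N}} F_{N,k-1}\!\left(d_1,\dots,d_{k-2},\min(d_{k-1},\,d_k-u)\right),
\end{equation*}
where $F_{N,k-1}$ is the analogue of $F_N$ at step $k-1$.

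The first step is then to invoke the induction hypothesis \eqref{eq:induction_uniform}, which gives uniform a.s.\ convergence $\sup_{d'}|F_{N,k-1}(d') - F_{k-1}(d')| \to 0$, where $F_{k-1}$ is continuous on $\R^{k-1}$. Substituting, one reduces to proving
\begin{equation*}
\sup_{d_1,\dots,d_k}\left|\frac{1}{(2N)!}\sum_{\sigma_k} F_{k-1}\!\left(d_1,\dots,d_{k-2},\min(d_{k-1},\,d_k-u_k(\sigma_k))\right) - F(d_1,\dots,d_k)\right| \xrightarrow[a.s.]{} 0
\end{equation*}
up to a vanishing error uniform in $d$. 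The second step recognizes this average as $\int F_{k-1}(d_1,\dots,d_{k-2},\min(d_{k-1},d_k-u))\,d\mu_N(u)$, where $\mu_N$ is the empirical randomization law of $u_k(\sigma_k)$; by Proposition~\ref{prop:asym_perm_test} the c.d.f.\ of $\mu_N$ converges uniformly (in probability, upgraded to a.s.\ by the Hoeffding combinatorial CLT whose conditional form on the data holds almost surely) to that of $\mathcal{N}(0,\tau(P,Q)^2)$. Since $F_{k-1}$ is bounded and continuous, the integrals converge to $\int F_{k-1}(d_1,\dots,d_{k-2},\min(d_{k-1},d_k-u))\,\phi_{\tau}(u)\,du$, and a direct calculation setting $Z = \tau(P,Q)\,W_k$ with $W_k\sim\mathcal{N}(0,1)$ independent of $W_1,\dots,W_{k-1}$ identifies this with $F(d_1,\dots,d_k)$.

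The third step handles the uniformity in $d_1,\dots,d_k$. The pre-limit function $F_N$ is monotone non-decreasing and bounded in each coordinate (as is $F$), and the limit $F$ is continuous on $\R^k$ since it is a multivariate Gaussian c.d.f. A standard Polya-type argument for multivariate c.d.f.s then upgrades pointwise a.s.\ convergence on a dense countable subset of $\R^k$ to uniform a.s.\ convergence. Concretely, one fixes a grid, uses monotonicity to sandwich $F_N(d)$ between values at adjacent grid points, and lets the mesh shrink using uniform continuity of $F$ on compact sets and the tail behaviour $F(d)\to 0$ or $1$ as components of $d$ tend to $\pm\infty$.

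The main obstacle I anticipate is the mode-of-convergence bookkeeping: Proposition~\ref{prop:asym_perm_test} is stated in probability, yet Lemma~\ref{lem:convergence_conditional} and the induction hypothesis are in the a.s.\ mode. The cleanest way to close this gap is to invoke the conditional version of Hoeffding's combinatorial CLT, which asserts that for almost every realization of the data $(e_{n,k}(\cdot))_{n\le 2N}$, the randomization distribution of $t_{N,k}(\sigma_k)/\sqrt{N}$ converges weakly to $\mathcal{N}(0,\tau(P,Q)^2)$; this gives the required a.s.\ uniform convergence of $\mu_N$ by the Glivenko–Cantelli/Polya argument applied to c.d.f.s, and feeds cleanly into the induction.
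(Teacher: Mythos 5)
Your proposal is correct and follows essentially the same route as the paper's proof: both exploit the product structure of the randomization law across interims to factor the sum, use Proposition~\ref{prop:asym_perm_test} to replace the randomization law of the new interim's statistic $t_{N,k}(\sigma_k)/\sqrt{N}$ by its Gaussian limit, merge the last two constraints into the $\min(d_{k-1},d_k-\cdot)$ threshold, and invoke the uniform induction hypothesis~\eqref{eq:induction_uniform} — the only difference being that you apply the induction hypothesis before integrating out $\sigma_k$ while the paper does it in the reverse order. Your explicit Polya-type uniformity argument and your flagging of the in-probability versus almost-sure discrepancy in Proposition~\ref{prop:asym_perm_test} (resolved via the conditional combinatorial CLT) are points the paper's proof passes over silently, so they strengthen rather than diverge from it.
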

Then, what remains is to prove the convergence of $B_{N,k}$. 
Let us denote:
$$\Psi_k(d_k) = \P\left( \left|\sum_{i=1}^k W_i\right| > \frac{d_k}{\tau(P,Q)}, \forall j\le k-1,\, \left|\sum_{i=1}^j W_i\right| \le \frac{b_j}{\tau(P,Q)} \right).$$
From Lemma~\ref{lem:convergence_conditional}, we have that
$$\left|\Psi_k\left(\frac{B_{N,k}}{\sqrt{N}}\right)-\frac{1}{((2N)!)^k}\sum_{\sigma_1,\dots,\sigma_k \in \S_{2N}}\1\left\{T_{N,k}\left(\sigma_{1:k}\right)> B_{N,k}, \quad \forall j < k,\,  T_{N,j}\left(\sigma_{1:j}\right)\le  B_{N,j} \right\}\right|$$
converges to $0$ as $N$ goes to infinity. Hence, 
$$\lim\sup_{N \to \infty} \Psi_k\left(\frac{B_{N,k}}{\sqrt{N}}\right) \le \alpha/K.$$
Then, similarly to the case $k=1$, we also have for any $\varepsilon>0$:
$$\lim\inf_{N \to \infty} \Psi_k\left(\frac{B_{N,k}-\varepsilon}{\sqrt{N}}\right) \ge \alpha/K $$
and by continuity of $\Psi_k$ (which is a consequence of the continuity of the joint c.d.f.\@ of Gaussian random variables) we conclude that $B_{N,k}/\sqrt{N}$ converges almost surely to $b_k$.

\subsection{Proof of Lemma~\ref{lem:convergence_conditional}}\label{sec:proof_lem_conv}
In this proof, let $E_{\sigma_{1:k}}(x)$ denote the expectation of the randomization law defined for some function $f:\S_{2N}^k \to \R$ by: 
$$E_{\sigma_{1:k}}[f(\sigma_{1:k})] = \frac{1}{((2N)!)^k}\sum_{\sigma_1,\dots,\sigma_k \in \S_{2N}} f(\sigma_{1:k}).$$
Please note that this expectation is still random and should be differentiated from the usual expectation $\E$.

First, let us first handle the convergence of step $k$. We have:
\begin{align*}
\frac{1}{(2N)!}&\sum_{\sigma_k\in \S_{2N}}\1\left\{\sum_{j=1}^k t_{N,j}\left(\sigma_j\right)\le d_k \sqrt{N}\right\} \\
&= \frac{1}{(2N)!}\sum_{\sigma_k\in \S_{2N}}\1\left\{\frac{1}{\sqrt{N}} t_{N,k}\left(\sigma_k\right)\le d_k -\frac{1}{\sqrt{N}}\sum_{j=1}^{k-1} t_{N,j}\left(\sigma_j\right)\right\} \\
&= \widehat{R}_{n,k}\left(d_k-\frac{1}{\sqrt{N}}\sum_{j=1}^{k-1} t_{N,j}\left(\sigma_j\right)\right).
\end{align*}
Because the convergence in proposition~\ref{prop:asym_perm_test} is uniform, we have:
\begin{align*}
&\left|\widehat{R}_{n,k}\left(d_k-\frac{1}{\sqrt{N}}\sum_{j=1}^{k-1} t_{N,j}\left(\sigma_j\right)\right)- \Phi\left( \frac{1}{\tau(P,Q)}\left(d_k -\frac{1}{\sqrt{N}}\sum_{j=1}^{k-1} t_{N,j}\left(\sigma_j\right) \right)\right) \right|\\
&\le \sup_t \left|\widehat{R}_n(t)-\Phi\left( \frac{t}{\tau(P,Q)}\right)\right|  \xrightarrow[n \to \infty]{}0.
\end{align*}
Then, using this convergence we have that, when $N$ goes to infinity,
$$E_{\sigma_{1:k}}\left[\1\left\{ \forall j < k,\,  \sum_{i=1}^j t_{N,i}\left(\sigma_{i}\right)\le  d_{j} \sqrt{N}\right\}\right]$$
converges uniformly on $d_1,\dots,d_k$ to:
\begin{align}\label{eq:eq_cond_boundary}
&E_{\sigma_{1:k}}\left[\1\left\{ \forall j < k-1,\,  \sum_{i=1}^j t_{N,i}\left(\sigma_{i}\right)\le  d_{j} \sqrt{N}\right\}\P\left(W_k\le \frac{1}{\tau(P,Q)}\left(d_k-\frac{1}{\sqrt{N}}\sum_{i=1}^{k-1} t_{N,i}\left(\sigma_i\right) \right)\right)\right]\nonumber\\
&=\E\left[E_{\sigma_{1:k-1}}\left[\1\left\{\substack{ \forall j < k-2,\,  \sum_{i=1}^j t_{N,i}\left(\sigma_{i}\right)\le  d_{j}\sqrt{N},\\
\frac{1}{\sqrt{N}}\sum_{i=1}^{k-1} t_{N,j}\left(\sigma_i\right)\le  \min\left(d_k-\tau(P,Q)W_k, d_{k-1}\right)}\right\}\right]\right].
\end{align}

Then, using the induction hypothesis, Equation~\eqref{eq:eq_cond_boundary} converges to  Equation~\eqref{eq:induction_uniform}, hence in the limit we have
\begin{align*}
\E\left[\1\left\{\substack{ \forall j < k-2,\,  \sum_{i=1}^j W_i\le  \frac{d_{j}}{\tau(P,Q)},\\
\frac{1}{\sqrt{N}}\sum_{i=1}^{k-1} W_i\le  \frac{1}{\tau(P,Q)}\min\left(d_k-W_k, d_{k-1}\right)}\right\}\right]&= \E\left[\1\left\{\forall j < k,\,  \sum_{i=1}^j W_i\le  \frac{d_{j}}{\tau(P,Q)}\right\}\right]\\
&= \P\left(\forall j\le k,\,\sum_{i=1}^j W_i\le \frac{d_j}{\tau(P,Q)} \right).
\end{align*}

\section{On early accept in \adastop}\label{sec:early_accept_annex}
Let $\textbf{C} \subset \{\textbf{c}_1,\dots,\textbf{c}_J\}$ be a subset of the set of comparisons that we want to do. Let us denote:
$$\overline{T}_{N,k}^{(\textbf{C})}(\sigma_1^k):= \max\left(T_{N,k}^{(j)}(\sigma_1^k),\quad \textbf{c}_j \in \textbf{C}\right) \quad \text{and}\quad \underline{T}_{N,k}^{(\textbf{C})}(\sigma_1^k):= \min\left(T_{N,k}^{(j)}(\sigma_1^k),\quad \textbf{c}_j \in \textbf{C}\right),$$
and
\begin{equation}
\overline{B}_{N,k}^{(\textbf{C})} := \inf \left\{b >0 :\frac{1}{m_k}\sum_{\sigma \in \widehat{\mathcal{S}}_{N,k}} \1\{\overline{T}_{N,k}^{(\textbf{C})}(\sigma_1^k) \ge  b\} \le  \overline{q}_k\right\},
\end{equation}
and
\begin{equation}\label{eq:early_accept_boundary}
\underline{B}_{N,k}^{(\textbf{C})} := \sup\left\{ b>0 : \frac{1}{m_k}\sum_{\sigma \in \widehat{\mathcal{S}}_{N,k}} \1\{\underline{T}_{N,k}^{(\textbf{C})}(\sigma_1^k) \leq b\} \le  \underline{q}_k \right\}
\end{equation}
where the $\underline{q_i} = \lfloor \frac{\alpha i}{2K}{2N \choose N}\rfloor / (\frac{1}{2}{2N \choose N})-\underline{q}_{i-1}$ and $\overline{q}_i = \lfloor \frac{\beta i}{2K}{2N \choose N}\rfloor / (\frac{1}{2}{2N \choose N})-\overline{q}_{i-1}$ and where $\widehat{\mathcal{S}}_{N,k}$ is defined in Equation~\eqref{eq:def_s_hat}. By construction of $\widehat{\mathcal{S}}_{N,k}$, for each $\sigma_1^k \in \widehat{\mathcal{S}}_{N,k}$, we have the following property
$$\forall m < k,\quad  \overline{T}_{N,m}^{(\textbf{C})}(\sigma_1^k) \le   \overline{B}_{N,m}^{(\textbf{C})} \text{ and }\underline{T}_{N,m}^{(\textbf{C})}(\sigma_1^k) \ge   \underline{B}_{N,m}^{(\textbf{C})}.$$

In \adastop, modify the decision step (lines \ref{lst:line:beg_decision} to  \ref{lst:line:end_decision} in Algorithm~\ref{algo:main}) to Algorithm~\ref{algo:EarlyAccept}.
% algorithm's are floating objects, so I added a ref to the algorithm to make sure this paragraph remains clear.

\begin{algorithm}
  \uIf{$\overline{T}_{N,k}^{(\textbf{C})}(\mathrm{id}) > \overline{B}_{N,k}^{(\textbf{C})}$}{
    Reject $H_{j_{\max}}$ where $\textbf{c}_{j_{\max}} =  \arg\max\left(T_{N,k}^{(j)}(\id),\quad \textbf{c}_j \in \textbf{C}\right)$.\\
    Update $\textbf{C}=\textbf{C} \setminus \{\textbf{c}_{j_{\max }}\}$
  }
  \uElseIf{$\underline{T}_{n,k}^{(\textbf{C})}(\mathrm{id}) < \underline{B}_{N,k}^{(\textbf{C})}$}{
    Accept $H_{j_{\min}}$ where $\textbf{c}_{j_{\min}} =  \arg\min\left(T_{N,k}^{(j)}(\id ),\quad \textbf{c}_j \in \textbf{C}\right)$.\\
    Update $\textbf{C}=\textbf{C} \setminus \{\textbf{c}_{j_{\min }}\}$
  }
  \caption{Early accept.}
  \label{algo:EarlyAccept}
\end{algorithm}

The resulting algorithm has a small probability to accept a decision early, and as a consequence it may be unnecessary to compute the score of some of the agents in the subsequent steps. 

As an illustration of the performance of early accept, if we run \adastop with early parameter $\beta=0.01$ (used to define the boundary in Equation~\eqref{eq:early_accept_boundary}) on the Walker2D-v3 experiment from Section~\ref{sec:deep_rl_xp}, the experiment stops at interim $2$ and $10$ scores have been collected for each agent. This has to be compared with the fact that, in Section~\ref{sec:deep_rl_xp}, we showed that without early accept, \adastop uses $30$ scores for DDPG and TRPO. In this example, early stopping saves a lot of computations and results in a significant speed-up without affecting the final decisions.

\section{Agent comparison on Atari environments}
\label{sec:discussion_atari}

In this section, we discuss the approach of \citep{agarwal2021deep} on the problem of comparing RL agents on Atari environments. The methodology from  \citep{agarwal2021deep} prescribes to give confidence intervals around some measure of location (typically mean or interquartile mean, IQM) of the agents score. We demonstrate this approach in the left and middle sub-figures of Figure~\ref{fig:edge}. However, it is not clear how to draw a conclusion from such graphs (see \citep{cumming2005inference} for a discussion on doing inference with confidence intervals). Moreover, there is no definite criterion on how to construct the confidence interval. 
For example, this approach gives rise to three very different confidence interval plots in Figure~\ref{fig:edge}: the plots on the left and in the middle are the ones presented in \citep{agarwal2021deep} and the plot on the right is a modified version of the plot in the middle, where we added the error due to performing multiple comparisons. 

In the leftmost plot of Figure~\ref{fig:edge}, it is easy to draw conclusions but on the other hand the theoretical interpretation is not clear because the fact that different games have different laws for their scores is not taken into account. The middle and rightmost plots in Figure~\ref{fig:edge} are a naive approach to the problem that assumes that all the games are very different from one another, and as such the confidence intervals are too large to draw conclusions. We think that there should be an intermediate approach that considers a cluster of similar games (\eg all the easy games, all the maze-like games, all the difficult games, etc.) and treat these games as having all the same law. This approach would produce smaller (and more interpretable) confidence intervals compared to the naive approach, providing a middle-ground between the first plot and second plot.
\begin{figure}
    \centering
    \includegraphics[width=0.8\textwidth]{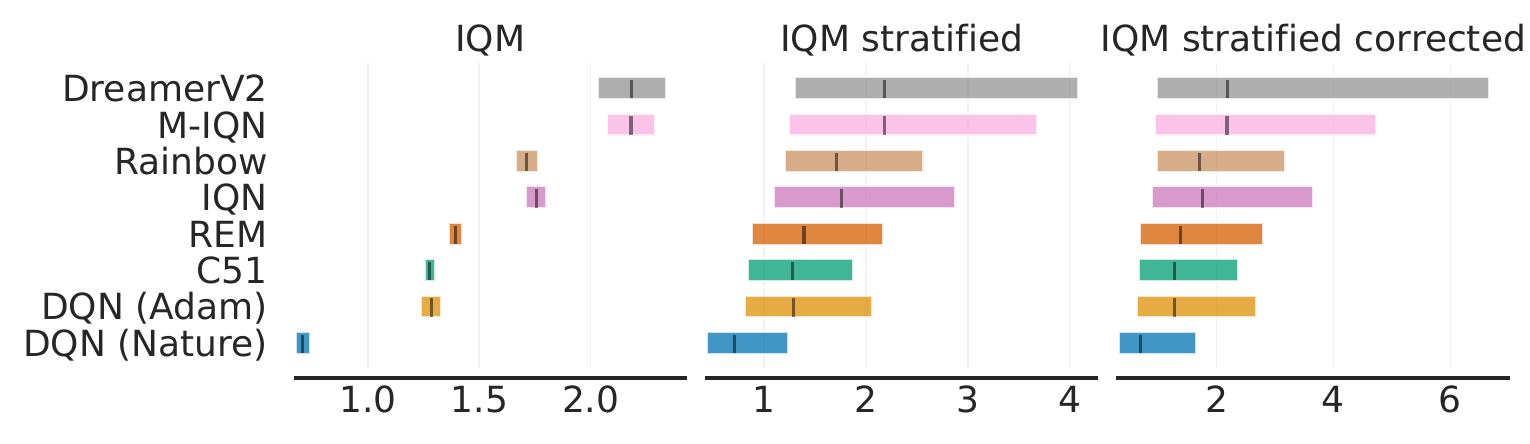}
    \caption{$95\%$ confidence intervals on the IQM of the "Human Normalized Score" of a set of algorithms with various methods using the results from \citep{agarwal2021deep}.}
    \label{fig:edge}
\end{figure}

\section{Implementation details and additional experiments}
% \subsection{Illustration of boundary computations}
% The boundary in blue is computed sequentially to have a final level $1-\alpha$ for the test, the red points are the observed values of the test statistic (denoted $w_i$). The algorithm stops at $k=10$ because the observed value of $w_i$ is outside the boundary, which was computed using Equation \eqref{eq:bound_2}.
% \begin{figure}
% \begin{center}
% \input{boundary.tex}
% \caption{Illustration of the boundary for two agents.\label{fig:gst}}
% \end{center}
% \end{figure}

\subsection{Complementary experiment for Section~\ref{sec:xp_toy}}\label{sec:complement_xp_toy}

We consider 10 agents which score distributions are listed in Fig.\@~\ref{fig:case3}, where the first column indicates the labels of the agents as they are used in Fig.\@~\ref{fig:exp3}.
We add some more notations: $t (\mu, \nu)$ is the t-Student distribution with mean $\mu$ and $\nu$ degrees of freedom, and $\mixturetf{1}(t (\mu_1, \nu_1), t (\mu_2, \nu_2))$ is a mixture of 2 t-Student distributions $t (\mu_1, \nu_1)$ and $t (\mu_2, \nu_2)$, each weights $f$ and $1-f$.

Similarly to the setup of cases 1 and 2 (see Section~\ref{sec:xp_toy}), we execute \adastop with $K= 5$, $N = 5$, $\alpha = 0.05$. As indicated in Section~\ref{sec:adastop}, we do not enumerate all the permutations in the permutation test as this would be too expensive. Instead we use $10^4$ randomly selected permutations to compute our test statistics at each interim.

In contrast to the experiments reported in the main text, we use early accept (with $\beta=0.01$) to avoid situations where all agents are compared with the maximum number of scores, \ie $NK$ scores. This may occur when each agent has a similar distribution to at least one other agent. 

\begin{figure}
    \centering
    \includegraphics[width=0.9\textwidth]{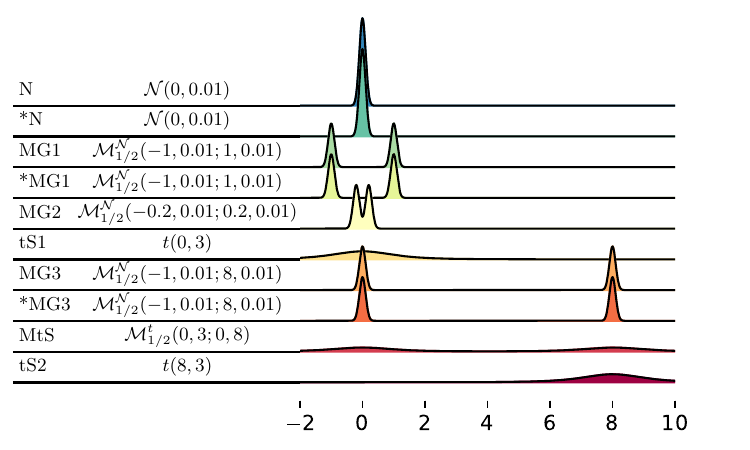}
    \caption{Toy example 3, with an illustration of the involved distributions.}
    \label{fig:case3}
\end{figure}

\begin{figure}[h]
    \centering
    \includegraphics[width =\linewidth]{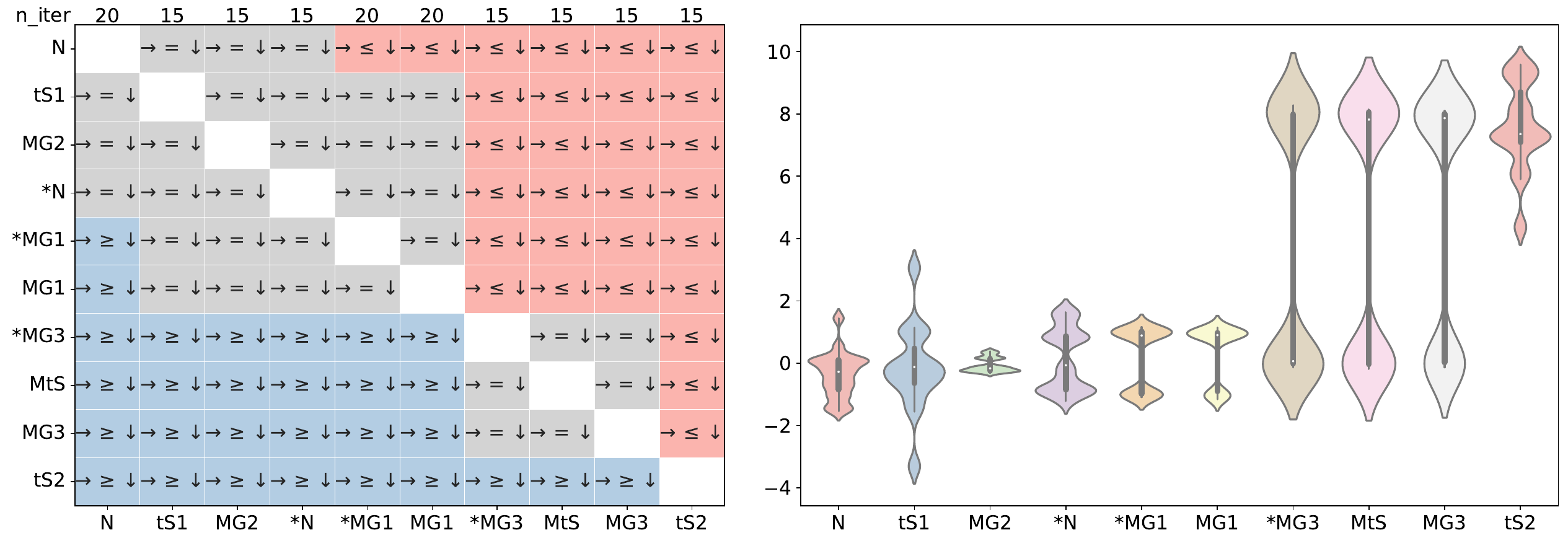}
    \caption{\adastop decision table (left) and measured empirical distributions (right).}
    \label{fig:exp3}
\end{figure}

We show the performance of \adastop for multiple agents comparison in  Fig.\@~\ref{fig:exp3}, which corresponds to the output of one execution of \adastop. The table (left part of Fig.\@~\ref{fig:exp3}) summarizes the decisions of the algorithm for every pair of comparisons. The violin plots (right part of Fig.\@~\ref{fig:exp3}) reflect empirically measured distributions in the comparison. %with the green triangle indicating the mean and the orange line the median. 
From this figure, we can see that almost all agents are clustered according to the mean of this distribution, except for *MG3, Mt5 and MG3 that are assigned to two different groups at once. Interestingly, except for *MG3, these clusters are correctly formed. Moreover, similarly to the two previous cases, we have executed \adastop $M=5 \ 000$ times to measure the FWE of the test. Empirically, we measured a FWE of $0.0178$ for the test comparing the distributions and a FWE of $0.0472$ for the test comparing the means: both are below $0.05$. This example illustrates the fact that \adastop can be efficiently used to compare the score of several agents simultaneously.

\newpage
\subsection{MuJoCo Experiments}
\label{deeprlsetup}

\begin{table}[h]
    \centering
    \caption{Hyperparameters used for the MuJoCo experiments.}
    \label{tab:deep_hyperparams}
\begin{tabular}{l|llll}
                  & DDPG & TRPO & PPO & SAC  \\ \hline
$\gamma$          & 0.99 & 0.99 & 0.99 & 0.99 \\
Learning Rate     & $1 \times 10^{-3}$ & $1 \times 10^{-3}$ & $3 \times 10^{-4}$ & $3 \times 10^{-4}$ \\
Batch Size        & 128 & 64 & 64 & 256 \\
Buffer Size       & $10^6$ & 1024 & 2048 & $10^6$ \\
Value Loss        & MSE & MSE & AVEC \citep{flet2021learning} & MSE  \\
Use gSDE          & No & No & No & Yes \\
Entropy Coef.     & - & 0 & 0 & \texttt{auto} \\
GAE $\lambda$     & -  & 0.95 & 0.95 & - \\
Advantage Norm.   & - & Yes & Yes & - \\
Target Smoothing  & 0.005 & - & - & 0.005 \\
Learning Starts   & $10^4$ & - & - & $10^4$ \\
Policy Frequency  & 32 & - & - & - \\
Exploration Noise & 0.1 & - & - & - \\
Noise Clip        & 0.5 & - & - & - \\
Max KL            & - & $10^{-2}$ & - & - \\
Line Search Steps & - & 10 & - & - \\
CG Steps          & - & 100 & - & - \\
CG Damping        & - & $10^{-2}$ & - & - \\
CG Tolerance      & - & $10^{-10}$ & - & - \\
LR Schedule       & - & - & Linear to 0 & - \\
Clip $\epsilon$   & - & - & 0.2 & - \\
PPO Epochs        & - & - & 10 & - \\
Value Coef.       & - & - & 0.5 & - \\
Train Freq.       & - & - & - & 1 step \\
Gradient Steps    & - & - & - & 1                             
\end{tabular}
\end{table}

In this section, we detail the experimental setup of the MuJoCo experiments, as well as adding new plots.

\textbf{Hyperparameters.} Table \ref{tab:deep_hyperparams} lists the hyperparameters used for each Deep RL agent on the MuJoCo benchmark. Each agent is trained during $10^6$ interactions with its environment in the cases of HalfCheetah-v3, Hopper-v3, and Walker2d-v3. It is trained during $2.10^6$ interactions in the cases of Ant-v3 and Humanoid-v3. In all cases, a training episode is made of no more than $10^3$ interactions.

\textbf{Scores.} According to algorithm \ref{algo:main}, after each agent is trained, the resulting policy performs 50 evaluations episodes. The score of the agent is the mean performance on these 50 episodes.

\textbf{Learning curves.} Fig.\@~\ref{fig:learning_curves} presents sample efficiency curves for all algorithms in each environment. The shaded areas represent $95\%$ bootstrapped confidence intervals, computed using \texttt{rliable} \citep{agarwal2021deep}. Note that each curve may be an aggregation of a different number of scores. The number of scores can be found in the bottom right table of Fig.\@~\ref{fig:full_comparisons}.

\textbf{Additional comparison plots.} Fig.\@~\ref{fig:full_comparisons} expands upon the comparisons given in the main text (in Fig.\@~\ref{fig:comparisons}) by also plotting the score distributions of each agent using boxplots.

\begin{figure*}[!h]
    \centering
    \includegraphics[width=0.95\textwidth]{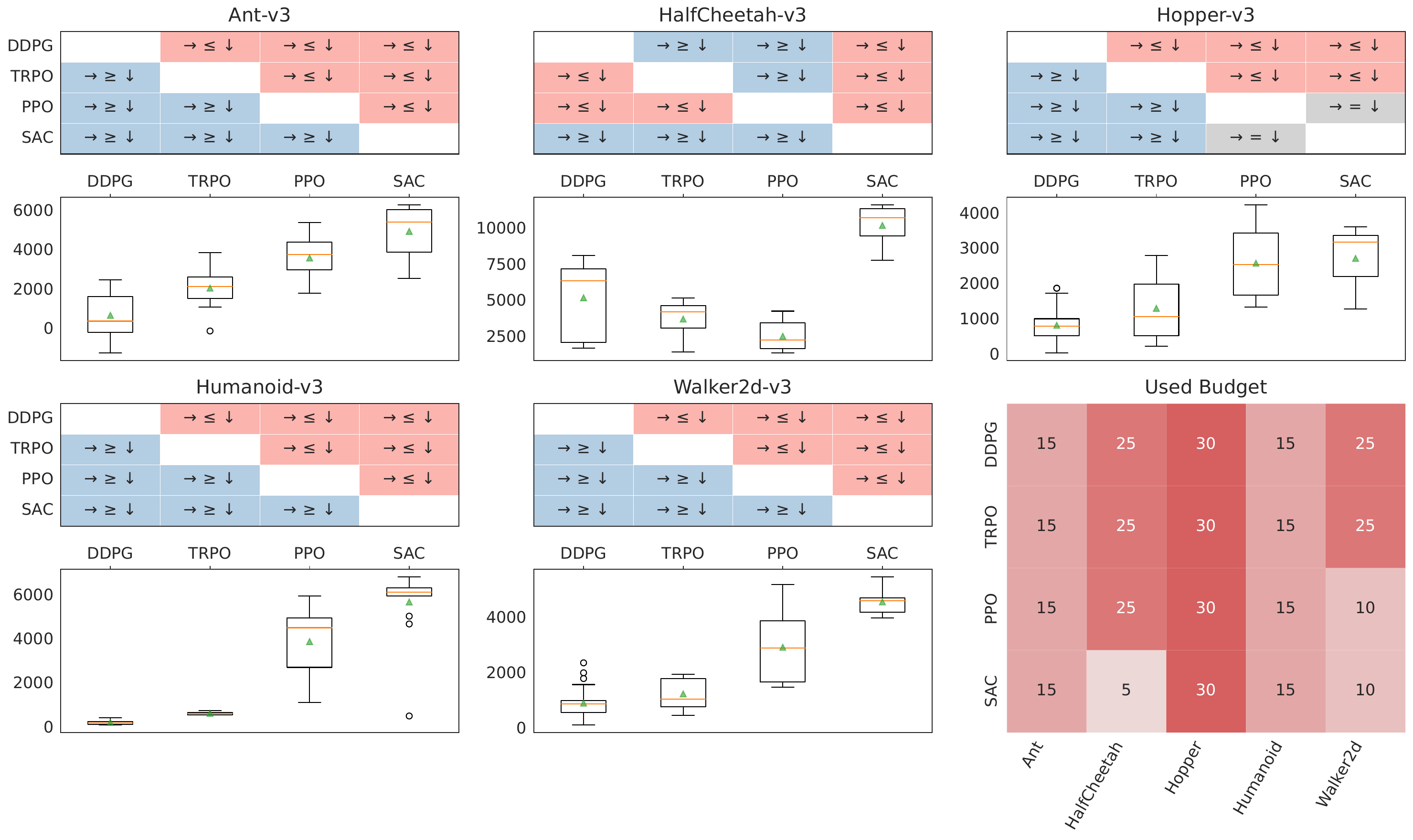}
    \caption{\adastop decision tables (top) and score distributions (bottom) for each MuJoCo environment, and the budget used to make these decisions (bottom right). The medians are represented by green triangles and the means by horizontal orange lines.}
    \label{fig:full_comparisons}
\end{figure*}

\begin{figure*}[!h]
    \centering
    \includegraphics[width=0.95\textwidth]{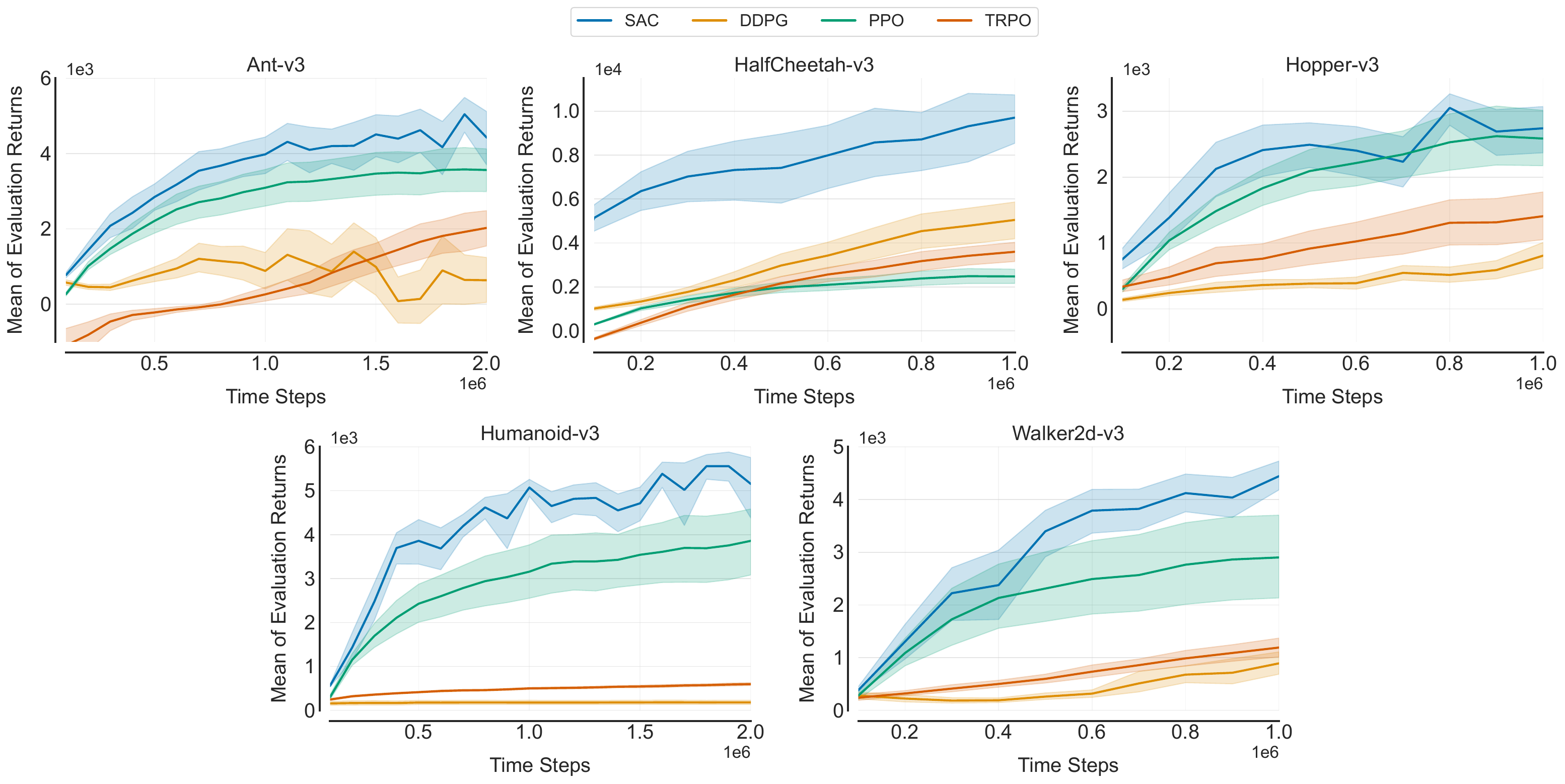}
    \caption{Mean of scores with 95\% stratified bootstrap confidence intervals. Note that the curves in one figure may use a different number of scores, depending on when \adastop made the decisions.}
    \label{fig:learning_curves}
\end{figure*}

\newpage
\subsection{Additional plot for Section~\ref{sec:non-adaptive}}

To illustrate Section~\ref{sec:non-adaptive}, Figure~\ref{fig:sactd3} represents the kernel density estimation of the distribution of scores from \citep{Sigaud}.

\begin{figure}[h!]
	\begin{centering}
		\includegraphics[scale=0.5]{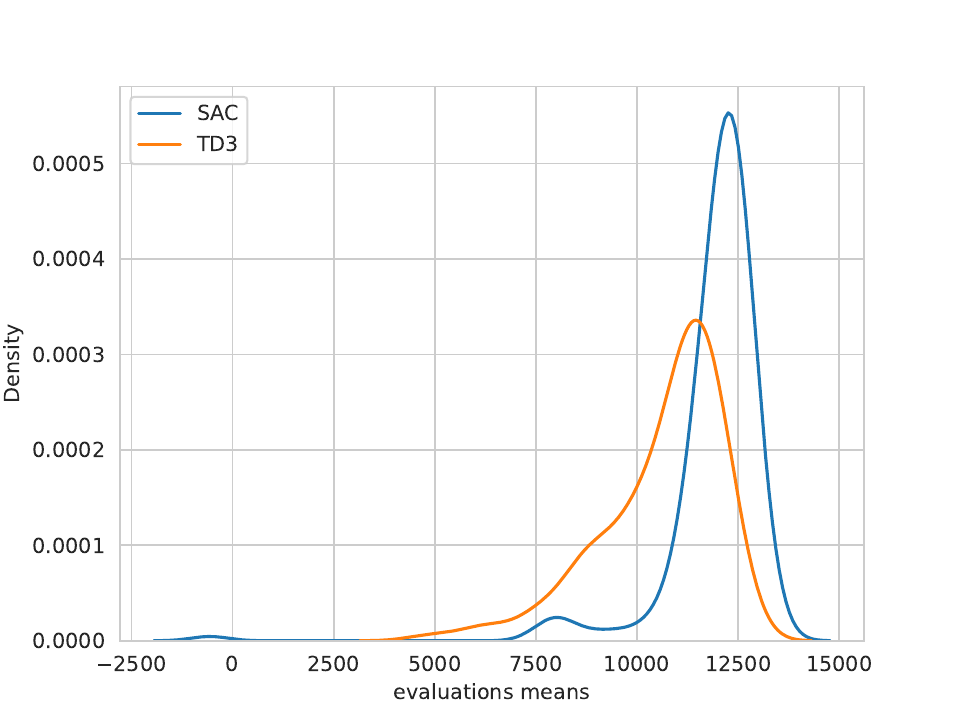}
		\caption{Distributions of scores for a SAC agent and a TD3 agent on HalfCheetah obtained with 192 independent scores. Each score is obtained on an agent that has been trained during $2.10^6$ interactions. \label{fig:sactd3}}
	\end{centering}
\end{figure}
\end{document}